\newtheorem{theorem}{Theorem}[section]
\newtheorem{definition}{Definition}[section]
\newtheorem{proposition}{Proposition}[section]
\newtheorem{remark}{Remark}[section]
\newtheorem{example}{Example}[section]
\def\tcb@split@force@last{%
	\tcb@split@setstate@last%
	\ifdim\tcb@h@total>\tcb@h@page\relax%
	\gdef\tcb@after@lastbox{\clearpage}%
	\tcbdimto\kvtcb@bbbottom{\kvtcb@bbbottom+\tcb@h@page-\tcb@h@total}%
	\fi%
}
\crefname{equation}{}{}
\crefname{figure}{Figure}{Figures}
\crefname{assumption}{Assumption}{Assumptions}
\crefname{condition}{Condition}{Conditions}
\crefname{definition}{Definition}{Definitions}
\setlist[enumerate]{leftmargin=1.3em, itemindent=0em, itemsep=0em, topsep=0em, label = {\bfseries \arabic*.}} 
\setlist[itemize]{leftmargin=1.3em,itemindent=0em, itemsep=0em, topsep=0em} 
\newcommand*\tageq{\refstepcounter{equation}\tag{\theequation}}
\renewcommand\th{\textsuperscript{th}\xspace}
\newcommand\tsup[2][2]{%
	\def\useanchorwidth{T}%
	\ifnum#1>1%
	\stackon[-.5pt]{\tsup[\numexpr#1-1\relax]{#2}}{\scriptscriptstyle\sim}%
	\else%
	\stackon[.5pt]{#2}{\scriptscriptstyle\sim}%
	\fi%
}
\newcommand{\longdash}[1][2em]{%
	\makebox[#1]{$\m@th\smash-\mkern-7mu\cleaders\hbox{$\mkern-2mu\smash-\mkern-2mu$}\hfill\mkern-7mu\smash-$}}
\newcommand{\omitskip}{\kern-\arraycolsep}
\newcommand{\df}{\mathrm{d}}
\newcommand{\real}{\mathbb{R}}
\DeclareMathOperator*{\argmin}{arg\,min}
\newcommand*\indic[1]{\mathbbm{1}_{\left\{#1\right\}}}
\newcommand*{\transpose}{%
	{\mathpalette\@transpose{}}%
}
\newcommand*{\@transpose}[2]{%
	% #1: math style
	% #2: unused
	\raisebox{\depth}{$\m@th#1\intercal$}%
}
\newcommand*{\T}{{\transpose}}
\newcommand{\sB}{\mathcal{B}}
\newcommand{\sC}{\mathcal{C}}
\newcommand{\sS}{\mathcal{S}}
\newcommand{\sT}{\mathcal{T}}
\newcommand{\sL}{\mathcal{L}}
\newcommand{\sN}{\mathcal{N}}
\newcommand {\FF}  { {\mathbf{F}} }
\newcommand {\HH}  { {\mathbf{H}} }
\newcommand {\XX}  { {\mathbf{X}} }
\newcommand {\bb}  { {\bf b} }
\newcommand {\ee}  { {\bf e} }
\newcommand {\ff}  { {\bf f} }
\newcommand {\yy}  { {\bf y} }
\newcommand {\hh}  { {\bf h} }
\newcommand {\rr}  { {\bf r} }
\newcommand {\vv}  { {\bf v} }
\newcommand {\ww}  { {\bf w} }
\newcommand {\xx}  { {\bf x} }
\newcommand {\balpha} {\bm \alpha}
\newcommand {\btheta} {\bm \theta}
\newcommand {\bthetas} {\bm {\theta}^{\star}}
\newcommand {\bthetasS} {\bm {\theta}^{\star}_{\sS}}
\newcommand {\zero}  { {\bf 0} }
\newcommand {\rank}  { {\textnormal{Rank}} }
\newcommand{\hf}{\frac12}
\newcommand{\defeq}{\triangleq}
\definecolor{forestgreen}{rgb}{0.13, 0.55, 0.13}
\definecolor{amber}{rgb}{1.0, 0.75, 0.0}
\definecolor{bananayellow}{rgb}{.8, 0.6, 0}
\pgfplotsset{compat=newest}
\newcounter{comment}\setcounter{comment}{0}
\NewDocumentCommand\DownArrow{O{2.0ex} O{black}}{%
	\mathrel{\tikz[baseline] \draw [<-, line width=0.5pt, #2] (0,0) -- ++(0,#1);}
}
\definecolor{mygreen}{rgb}{0,0.6,0}
\definecolor{mygray}{rgb}{0.5,0.5,0.5}
\definecolor{mymauve}{rgb}{0.58,0,0.82}
\definecolor{codegreen}{rgb}{0,0.6,0}
\definecolor{codegray}{rgb}{0.5,0.5,0.5}
\definecolor{codepurple}{rgb}{0.58,0,0.82}
\definecolor{backcolour}{rgb}{0.95,0.95,0.92}
\lstdefinestyle{mystyle}{
	backgroundcolor=\color{backcolour},   
	commentstyle=\color{codegreen},
	keywordstyle=\color{magenta},
	numberstyle=\tiny\color{codegray},
	stringstyle=\color{codepurple},
	basicstyle=\ttfamily\footnotesize,
	breakatwhitespace=false,         
	breaklines=true,                 
	captionpos=b,                    
	keepspaces=true,                 
	numbers=left,                    
	numbersep=5pt,                  
	showspaces=false,                
	showstringspaces=false,
	showtabs=false,                  
	tabsize=2
}
\newcommand*\dotprod[1]{\left\langle #1\right\rangle}
\newcommand*\vnorm[1]{\left\| #1\right\|}
\newcommand*\bigO[1]{\mathcal O\left( #1\right)}
\icmltitlerunning{Importance Sampling for Nonlinear Models}
\begin{document}

\twocolumn[
\icmltitle{Importance Sampling for Nonlinear Models}

% It is OKAY to include author information, even for blind
% submissions: the style file will automatically remove it for you
% unless you've provided the [accepted] option to the icml2025
% package.

% List of affiliations: The first argument should be a (short)
% identifier you will use later to specify author affiliations
% Academic affiliations should list Department, University, City, Region, Country
% Industry affiliations should list Company, City, Region, Country

% You can specify symbols, otherwise they are numbered in order.
% Ideally, you should not use this facility. Affiliations will be numbered
% in order of appearance and this is the preferred way.
\icmlsetsymbol{equal}{*}

\begin{icmlauthorlist}
\icmlauthor{Prakash P.\ Rajmohan}{EECS}
\icmlauthor{Fred Roosta}{SMP,CIRES}

\end{icmlauthorlist}

\icmlaffiliation{CIRES}{ARC Training Centre for Information Resilience (CIRES), Brisbane, Australia}
\icmlaffiliation{SMP}{School of Mathematics and Physics, University of Queensland, Brisbane, Australia.}
\icmlaffiliation{EECS}{School of Electrical Engineering and Computer Science, University of Queensland, Brisbane, Australia.}

\icmlcorrespondingauthor{Prakash P.\ Rajmohan}{p.palanivelurajmohan@uq.net.au}
\icmlcorrespondingauthor{Fred Roosta}{fred.roosta@uq.edu.au}

% You may provide any keywords that you
% find helpful for describing your paper; these are used to populate
% the ``keywords'' metadata in the PDF but will not be shown in the document
\icmlkeywords{Importance Sampling, Nonlinear Adjoint, Leverage Scores, Active Learning}

\vskip 0.3in
]

% this must go after the closing bracket ] following \twocolumn[ ...

% This command actually creates the footnote in the first column
% listing the affiliations and the copyright notice.
% The command takes one argument, which is text to display at the start of the footnote.
% The \icmlEqualContribution command is standard text for equal contribution.
% Remove it (just {}) if you do not need this facility.

\printAffiliationsAndNotice{}  % leave blank if no need to mention equal contribution
%\printAffiliationsAndNotice{\icmlEqualContribution} % otherwise use the standard text.

\begin{abstract}
\label{abstract}
While norm-based and leverage-score-based methods have been extensively studied for identifying ``important'' data points in linear models, analogous tools for nonlinear models remain significantly underdeveloped. By introducing the concept of the adjoint operator of a nonlinear map, we address this gap and generalize norm-based and leverage-score-based importance sampling to nonlinear settings.
We demonstrate that sampling based on these generalized notions of norm and leverage scores provides approximation guarantees for the underlying nonlinear mapping, similar to linear subspace embeddings.
As direct applications, these nonlinear scores not only reduce the computational complexity of training nonlinear models by enabling efficient sampling over large datasets but also offer a novel mechanism for model explainability and outlier detection. Our contributions are supported by both theoretical analyses and experimental results across a variety of supervised learning scenarios.
\end{abstract}
\section{Introduction}
\label{Introduction}
The process of training in machine learning (ML) typically boils down to solving an optimization problem of the form
\begin{align}
\label{eq:loss}
\min_{\btheta \in \real^{p}} \left\{\sL(\btheta) = \sum_{i=1}^{n} \ell(f_{i}(\btheta))\right\},
\end{align}
where $\ell$ is a loss function, and $f_{i}:\real^{p} \to \real$ is a potentially nonlinear mapping, parametrized by $\btheta$, that represents the ML model evaluated at the $i\th$ training data point. Throughout this paper, we make the umbrella assumption that $n \geq p$, i.e., we operate in the underparameterized setting. 

For example, in the simple linear regression, $f_{i}(\btheta) = \dotprod{\btheta,\xx_i}-y_i$, where $ (\xx_{i}, y_{i}) $ is the $i\th$ input-output pair. Using the squared loss $\ell(t) = t^{2}$ gives rise to the familiar linear least-squares problem $\sL(\btheta) = \vnorm{\XX \btheta - \yy}^{2}$, where $\XX \in \real^{n \times d}$ is the input data matrix whose $i\th$ row is $\xx_{i}$, and $\yy \in \real^{n}$ is the output vector whose $i\th$ component is $y_i$. 

%Prakash Modification:
% The growth of machine learning (ML) over the past decade has been largely driven by the explosion in the availability of data. However, this exponential increase in data volume presents significant computational challenges, particularly in solving \cref{eq:loss} and developing diagnostic tools for post-training analysis. These challenges have been extensively studied and addressed in simpler linear settings. In this context, randomized numerical linear algebra (RandNLA) has emerged as a powerful paradigm for approximating underlying matrices and accelerating computations. 
% TODO: ``extensively studied''
The growth of ML over the past decade has been largely driven by the explosion in the availability of data. However, this exponential increase in data volume presents significant computational challenges, particularly in solving \cref{eq:loss} and developing diagnostic tools for post-training analysis. These challenges have been extensively studied and addressed in simpler linear settings. In this context, randomized numerical linear algebra (RandNLA) has emerged as a powerful paradigm for approximating underlying matrices and accelerating computations. 
RandNLA has led to significant advancements in algorithms for fundamental matrix problems, including matrix multiplication, least-squares problems, least-absolute deviation, and low-rank matrix approximation, among others. For further reading, see the lecture notes and surveys on this topic, such as \citet{mahoney2011randomized,woodruff2014sketching,drineas2018lectures,martinsson2020randomized,murray2023randomized,derezinski2024recent}. 

Arguably, linear least-squares problems have been among the most well-studied problems in RandNLA \cite{sarlos2006improved,nelson2013osnap,meng2013low,clarkson2017low}. Various randomized approximation techniques, ranging from data-independent oblivious methods (e.g., sketching and projections) to data-dependent non-oblivious sampling methods (e.g., non-uniform leverage score or row-norm sampling), have emerged as powerful tools to speed up computations directly \cite{woodruff2014sketching} or to construct preconditioners for downstream linear algebra subroutines \cite{avron2010blendenpik}. 
While these tools have proven remarkably successful in accelerating computations for linear least-squares problems, extending them to more general problems of the form \cref{eq:loss} remains challenging. Efforts to push these boundaries into the nonlinear realm often remain ad hoc and limited in scope \cite{gajjar2021subspace,erdelyi2020fourier,avron2019universal,avron2017random}. A key gap lies in the lack of a systematic framework to capture and embed the ``nonlinear component'' of the objective function while preserving the critical approximation properties well-established in linear embeddings.

% Much remains unknown about how to construct theoretically grounded importance sampling schemes for broadly used nonlinear models, such as deep neural networks. 

In this paper, we aim to bridge this gap to some extent. Specifically, we focus on non-uniform sampling in \cref{eq:loss}. Letting $\btheta^{\star}$ and $\btheta^{\star}_{\sS}$ denote the optimal parameters obtained from training the model over the full dataset and a, potentially non-uniformly, sampled subset of data, respectively, our goal is to ensure that, for any small $\varepsilon$, the samples are selected such that
\begin{align}
\label{eq:goal}
\sL(\btheta^{\star}_{\sS}) \leq \sL(\btheta^{\star}) + \bigO{\varepsilon}.
\end{align}
Recent work by \cite{gajjar2023active, gajjar2024agnostic} has made progress toward this goal by proposing a one-shot active learning strategy using leverage scores of the data for certain classes of single-neuron predictors. Here, we take a step further by introducing the concept of the adjoint operator of a nonlinear map. This allows us to generalize norm-based and leverage-score-based importance sampling to nonlinear settings, thereby obtaining approximation guarantees of the form \cref{eq:goal} in many settings. 

\textbf{Contributions}. Our contributions are as follows:

\begin{enumerate}
    \item By introducing the nonlinear adjoint operator, we provide a systematic framework for constructing importance sampling scores for \cref{eq:loss}. While numerical integration can generally approximate these scores, we show that for specific nonlinear models, they can be derived directly.
    
    \item We further show that, under certain assumptions, importance sampling based on these scores achieves \cref{eq:goal}. To our knowledge, this is the first work to extend such guarantees to NNs.
    
    \item To validate the theoretical results, we present experiments for several supervised learning tasks. We show that, beyond reducing computational costs, our framework can be used post hoc for diagnostics, such as identifying important samples and detecting anomalies.
\end{enumerate}

\textbf{Notation.} Vectors and matrices are denoted by bold lowercase and bold uppercase letters, respectively. The $i^{th}$ row of a data matrix $\XX \in \mathbb{R}^{n \times d}$ is denoted by $\xx_i \in \mathbb{R}^d$. The pseudoinverse of $\XX$ is denoted by $\XX^\dagger$, and the $i\th$ standard basis vector in $\mathbb{R}^n$ is denoted by $\ee_i$. The spectral norm and Frobenius norm of a matrix are denoted by $\|\cdot\|_{2}$ and $\|\cdot\|_{\mathrm{F}}$, respectively. The inner product between two vectors is denoted by $\langle \cdot, \cdot \rangle$. We use $\mathcal{O}(\cdot)$ for Big-O complexity and $\widetilde{\mathcal{O}}(\cdot)$ to omit logarithmic factors. The sub-sampled data matrix is represented by $\XX_{\sS} \in \mathbb{R}^{s \times d}$, where $s$ is the number of sub-sampled points.

\section{Background and Related Work}

\paragraph{Importance Sampling in Linear Models.}

In linear settings, importance sampling has been widely used for various linear algebra tasks such as matrix multiplication \cite{drineas2006fast}, least-squares regression \cite{drineas2006sampling}, and low-rank approximation \cite{cohen2017input}. It approximates large data matrices by prioritizing data points with high ``information'', reducing computational \cite{clarkson2017low, nelson2013osnap} and storage costs \cite{iwen2021lower, meng2013low} while ensuring strong guarantees for downstream tasks.% such as matrix sketching, subspace embeddings, and least-squares regression \cite{tropp2017practical, woodruff2014sketching, mahoney2011randomized}.

% Prakash modification: Already defined rows and X
% Formally, let $\XX \in \mathbb{R}^{n \times d}$, $n \gg d$, be a data matrix with row vectors $\xx_i \in \real^d$, $i = 1, \dots, n$. A nonnegative weight $q_i \ge 0$ is assigned to each row, calculated through some importance sampling scheme. The probability of sampling the $i\th$ row is then given by $\tau_{i} = {q_i}/{\sum_{j=1}^n q_j}$. 
Formally, let $\XX \in \mathbb{R}^{n \times d}$, $n \gg d$, be a data matrix with row vectors $\xx_i \in \real^d$. Each row is assigned a nonnegative weight $q_i \ge 0$ through an importance sampling scheme, with sampling probability $\tau_{i} = {q_i}/{\sum_{j=1}^n q_j}$. 
Consider selecting $s \ll n$ rows of $\XX$ independently at random, with replacement, according to $\{\tau_{i}\}$. Let $\XX_{\sS} \in \real^{s \times d}$ be the sub-sampled matrix, where the $i\th$ row is ${\xx_{i}}/{\sqrt{s\,\tau_{i}}}$. 
If $\{\tau_{i}\}$ is constructed appropriately, $\XX_{\sS}$ preserves key spectral and geometric properties of $\XX$ with high probability, acting as a lower-dimensional approximation. This is captured by a \textit{linear subspace embedding} guarantee: for all $\vv \in \real^d$,
\begin{align}
(1 - \varepsilon)\,\|\XX \vv\|_2^2 \leq \|\XX_{\sS} \vv\|_2^2 \leq 
(1 + \varepsilon)\,\|\XX \vv\|_2^2. \label{eq:approx_lin}
\end{align}
% Two canonical approaches to constructing $\{\tau_{i}\}$ are based on the \textit{norms} and the \textit{leverage scores} of the rows of $\XX$.
% %
% In row-norm sampling, we typically have $q_i = \|\mathbf{x}_i\|_2^2$, i.e., rows that contribute more to the overall \(\ell_2\)-energy \(\|\mathbf{X}\|_{\mathrm{F}}^2 = \sum_{i=1}^n \|\mathbf{x}_i\|_2^2\) are sampled with higher probability. The leverage score of each row $\xx_i$ measures the influence of $\xx_i$ on the row space of $\mathbf{X}$, and is defined as
% \begin{align*}
% q_i = \dotprod{\ee_{i}, \XX\XX^{\dagger}\ee_{i}} = \min_{\balpha \in \real^{n}} \vnorm{\balpha}_{2}^{2} \text{ subject to } \XX^{\T} \balpha = \xx_{i}.
% \end{align*}
% In other words, the $i\th$ leverage score can be viewed as capturing the ``importance'' of $\xx_i$ in forming the linear subspace spanned by the data points. 
Two common approaches for constructing $\{\tau_{i}\}$ are based on the \textit{norms} and \textit{leverage scores} of the rows of $\XX$. In row-norm sampling, we set $q_i = \|\mathbf{x}_i\|_2^2$, meaning rows that contribute more to the overall $\ell_2$-energy, $\|\mathbf{X}\|_{\mathrm{F}}^2 = \sum_{i=1}^n \|\mathbf{x}_i\|_2^2$, are sampled more frequently. Leverage scores quantify the influence of each row $\xx_i$ on the row space of $\XX$ and are defined as  
\begin{align*}
q_i = \dotprod{\ee_{i}, \XX\XX^{\dagger}\ee_{i}} = \min_{\balpha \in \real^{n}} \vnorm{\balpha}_{2}^{2} \text{ subject to } \XX^{\T} \balpha = \xx_{i}.
\end{align*} 
Thus, the $i\th$ leverage score captures the ``importance'' of $\xx_i$ in spanning the data subspace.

These importance sampling schemes are particularly effective when the data exhibits high coherence \cite{paschou2007pca,mahoney2009cur,gittens2013revisiting,fanhigh,eshragh2022lsar}. It has been well established that, for any $\varepsilon \in (0,1)$, as long as the sample size $s$ is sufficiently large--specifically, $s \in \bigO{{d \,\ln d}/{\varepsilon^2}}$--the approximation in \cref{eq:approx_lin} holds with high probability \cite{martinsson2020randomized}. These bounds are tight (up to logarithmic factors) \cite{woodruff2014sketching,chen2019active}.

% The linear subspace embedding guarantee of the form \cref{eq:approx_lin} is the key property underlying the development of approximations for linear least-squares problems \cite{woodruff2014sketching}. However, going beyond least-squares loss, simple linear subspace embedding may not suffice to achieve the desired guarantees. Instead, various tools such as alternative importance sampling scores (e.g., Lewis weights \cite{apers2024computing,johnson2001finite,bourgain1989approximation,cohen2015lp}) or the concept of coresets \cite{feldman2020introduction,mirzasoleiman2020coresets,lucic2018training,har2004coresets} are often employed. For more general loss functions $\ell$, some authors have studied approximation guarantees similar to \cref{eq:goal}, extending these results—often in an ad hoc manner—to specific problems such as logistic regression \cite{curtin2019coresets,huggins2016coresets,tolochinksy2022generic}, linear predictor models with hinge-like loss \cite{mai2021coresets}, $\ell_{p}$ regression \cite{chen2021query,musco2022active}, and kernel regression \cite{erdelyi2020fourier}.
The linear subspace embedding guarantee in \cref{eq:approx_lin} forms the foundation for approximating linear least-squares problems \cite{woodruff2014sketching}. However, for loss functions beyond least-squares, simple linear subspace embedding may not provide the desired guarantees. In such cases, alternative tools like importance sampling scores (e.g., Lewis weights \cite{apers2024computing,johnson2001finite,bourgain1989approximation,cohen2015lp}) or coresets \cite{feldman2020core,mirzasoleiman2020coresets,lucic2018training,har2004coresets} are commonly used. % feldman2020core and feldman2020introduction are the same

In particular for coresets, \citet{langberg2010universal} introduced a sensitivity sampling framework that provides foundational coreset guarantees for broad classes of objectives. This approach (akin to importance sampling) assigns each data point a sampling probability proportional to its worst-case influence (sensitivity) on the objective function, yielding a coreset that achieves a $(1 \pm \varepsilon)$ approximation for all queries. Subsequent work has refined this idea for specific loss functions; for instance, \citet{tremblay2019determinantal} combine sensitivity scores with determinantal point processes (DPP) to reduce redundant draws and thus promote diversity among the selected points. For logistic regression, \citet{munteanu2018coresets} prove that although no sublinear-size coreset exists in the worst case, a sensitivity-based scheme can yield the first provably sublinear approximation coreset when the dataset's complexity measure $\mu(X)$ is bounded. These results, position importance‑sampling coresets as the natural extension of linear embeddings when moving beyond least‑squares. % \prakash{In particular, coreset methods using sensitivity sampling schemes provide foundational guarantees for general function approximations \cite{langberg2010universal}. These methods, same as importance sampling schemes, assigns to each point a probability mass proportional to each point’s maximal influence on the objective, and yields universal $(1 \pm \varepsilon)$ approximatons. Subsequent work refined this idea for specific losses, namely, \cite{tremblay2019determinantal} combines sensitivity scores with determinantal point processes (DPP) to mitigate redundant draws, hence  promoting diversity among selected points. On the other hand, for logistic regression \cite{munteanu2018coresets} shows that, despite a worst‑case lower bound against sublinear coresets, a sensitivity based scheme produces provably summaries whenever the data complexity measure $\mu(X)$ is bounded. These results, position importance‑sampling coresets as the natural extension of linear embeddings when moving beyond least‑squares.} % A dominant line of coreset research revolves around sensitivity sampling, introduced by \cite{langberg2010universal}, which assigns to each point a probability mass proportional to its worst‑case influence on the objective and yields universal $(1 \pm \varepsilon)$ integral approximators. Subsequent work refined this idea for specific losses, \cite{tremblay2019determinantal} couples sensitivity scores with DPP diversity to mitigate redundant draws, while \cite{munteanu2018coresets} shows that, despite a worst‑case lower bound against sublinear coresets, a sensitivity‑based scheme produces provably small summaries whenever the data complexity measure $\mu(X)$ is bounded. These results, together with sensitivity‑aware Lewis‑weight sampling, position importance‑sampling coresets as the natural extension of linear embeddings when moving beyond least‑squares.

% In particular, coreset methods using sensitivity sampling schemes have been extensively studied, providing foundational guarantees for general function approximations \cite{langberg2010universal}. Sensitivity sampling assigns probabilities proportional to each point’s maximal influence across model parameters, forming the basis for strong approximation bounds. For logistic regression, \cite{munteanu2018coresets} demonstrated conditions under which sensitivity-based coresets overcome inherent complexity barriers, establishing the first provably sublinear coresets in restricted scenarios. Complementarily \cite{tremblay2019determinantal} explored determinantal point processes (DPPs) to enhance coresets by promoting diversity among selected points, improving empirical performance without sacrificing theoretical guarantees.
For more general loss functions $\ell$, some works have extended approximation guarantees similar to \cref{eq:goal} to specific problems, such as logistic regression \cite{samadian2020unconditional,huggins2016coresets,tolochinksy2022generic}, linear predictor models with hinge-like loss \cite{mai2021coresets}, $\ell_{p}$ regression \cite{chen2021query,musco2022active}, and kernel regression \cite{erdelyi2020fourier}. % curtin2019coresets is the same as samadian2020unconditional

\paragraph{Importance Sampling in Nonlinear Models.}
Importance sampling techniques have long played a crucial role in modern large-scale machine learning tasks, serving as tools to identify the most important samples and accelerating optimization procedures \cite{katharopoulos2018not,stich2017safe,nabian2021efficient,canevet2016importance,liu2017black,meng2022count,xu2016sub,liu2024winner}. However, their use in obtaining approximation guarantees of the form \cref{eq:goal} has been very limited. 

To our knowledge, \citet{gajjar2023active, gajjar2024agnostic} are among the first to consider non-linear function families and analyze a leverage-score-based sampling scheme (also referred to as a one-shot active learning sampling scheme) in the context of single-index (or ``single neuron'') models, where $\ell$ is the squared loss and $f_{i}(\btheta) = \phi\bigl(\langle \btheta, \xx_{i} \rangle \bigr)$ for some scalar non-linearity $\phi\colon \real \to \real$. For their derivations, the authors employ intricate tools from high-dimensional probability, as the mapping $\xx \mapsto \phi(\langle \ww, \xx \rangle)$ introduces non-linearity that invalidates the straightforward matrix Chernoff arguments typically used in linear settings.

\citet{gajjar2023active} showed that if $\phi$ is $L$-Lipschitz, one can collect $\widetilde{\mathcal{O}}(d^{2}/\varepsilon^{4})$ labels, sampled with probabilities proportional to the linear leverage scores of $\XX$, to guarantee a solution \(\bthetasS\) satisfying, with high probability, the bound  
\begin{align}
\label{eq:goal_bad}
\sL(\bthetasS) \leq C \cdot \sL(\bthetas) + \bigO{\varepsilon},  
\end{align}  
for some sufficiently large constant \(C > 0\). They also argued that such additive error bounds are necessary, as achieving a pure multiplicative error bound would require exponentially many samples in $d$.  

By using intricate tools from high-dimensional probability, subsequent work by \citet{gajjar2024agnostic} improved the sample complexity to \(\widetilde{\mathcal{O}}(d/\varepsilon^{2})\),  matching the linear benchmark up to polylogarithmic terms. Furthermore, they extended the analysis to the setting where $\phi$ is Lipschitz but unspecified by employing a meticulous ``sampling-aware'' discretization of the function class.

While the results in \citet{gajjar2023active, gajjar2024agnostic} represent significant contributions toward achieving \cref{eq:goal} for nonlinear models, they have a notable drawback: the guarantees in \citet{gajjar2023active, gajjar2024agnostic} take the form \cref{eq:goal_bad}, which involves an undesirable constant $C \gg 1$ (exceeding 1,000 in \citet{gajjar2024agnostic}). Such large constants undermine the practical utility of the approximation guarantee in \cref{eq:goal_bad}. We show that, under certain assumptions, it is possible to obtain the more desirable guarantee \cref{eq:goal} (where $C = 1$).

\section{Nonlinear Approximation: Our Approach}
\label{sec:imp_samp_concepts}
% Prakash modification:
% The underlying structure that enables the use of the subspace embedding property \cref{eq:approx_lin} in approximating linear least-squares problems is the linear \textit{inner product}. Specifically, for $f(\btheta) = \langle \btheta, \xx \rangle - y$, the least-squares model can be expressed as 
The core structure enabling the use of the subspace embedding property \cref{eq:approx_lin} in approximating linear least-squares problems is the linear \textit{inner product}. Specifically, for $f(\btheta) = \langle \btheta, \xx \rangle - y$, the least-squares model can be expressed as 
\begin{align*}
    \sum_{i=1}^{n} f^{2}_{i}(\btheta) = \sum_{i=1}^{n} \left(\langle \btheta, \xx_{i} \rangle - y_{i}\right)^{2} = \langle \widehat{\XX}\widehat{\btheta}, \widehat{\XX}\widehat{\btheta} \rangle,
\end{align*}
where $\widehat{\XX} = [\XX \mid -\yy] \in \real^{n \times (d+1)}$ and $\widehat{\btheta} = [\btheta; 1] \in \real^{d+1}$. This structure allows the subspace embedding property \cref{eq:approx_lin} to be applied, enabling the approximation of the original least-squares objective using a carefully chosen subsample of the data. 

Using the notion of the \textit{nonlinear adjoint operator}, we now derive an analogous ``inner-product-like'' representation for nonlinear mappings. This representation captures the nonlinearity in a novel way, enabling the derivation of nonlinear embeddings similar to \cref{eq:approx_lin}.

\subsection{Nonlinear Adjoint Operator}
Consider a mapping $ f(.):\real^{p} \to \real $ that is continuously differentiable. By the integral form of the mean value theorem,
\begin{align*}
	f(\btheta) &= f(\zero) + \int_{0}^{1}  \dotprod{\frac{\partial}{\partial \btheta} f(t\btheta) ,\btheta} \df  t \\
    &= f(\zero) + \dotprod{\btheta, \displaystyle \int_{0}^{1} \frac{\partial}{\partial \btheta} f(t\btheta) \df t}.
\end{align*} 
% \begin{definition}[Nonlinear Adjoint Operator]
% \label{def:naop}
% The nonlinear adjoint operator of $f:\real^{p} \to \real$ is the vector values map $ \ff^{\star}: \real^{p}  \to \real^{p} $ defined as 
% \begin{align}
% 	\label{eq:f*}
% 	\ff^{\star}(\btheta) \defeq \int_{0}^{1} \frac{\partial}{\partial \btheta} f(t\btheta) \df t.
% \end{align} 
% \end{definition}
The above relation is an equation involving functions of $\btheta$. The second term includes an inner product between the parameter $\btheta$ and a vector-valued map, reminiscent of the role of dual operators in Banach spaces. This observation motivates the following definition, which extends beyond continuously differentiable functions.
\begin{definition}[Adjoint Operator]
\label{def:naop}
The adjoint operator of $f:\real^{p} \to \real$ is the vector valued map $ \ff^{\star}: \real^{p}  \to \real^{p} $, 
\begin{align}
	\label{eq:f*}
	\ff^{\star}(\btheta) \defeq \int_{0}^{1} \frac{\partial}{\partial \btheta} f(t\btheta) \df t,
\end{align} 
for all $\btheta \in \real^{p}$ such that the function $\phi:[0,1] \to \real$ defined as $\phi(t) \defeq f(t\btheta)$ is absolutely continuous on $[0,1]$.
\end{definition}
Note that the absolute continuity condition relaxes the requirement for differentiability to that of being differentiable almost everywhere.

With the above definition, we can more compactly write
\begin{subequations}
\label{eq:f*_hat}
\begin{align}
    f(\btheta) &= 
%     \dotprod{\begin{bmatrix}
% 			\btheta \\ 1
% 		\end{bmatrix}, \begin{bmatrix}
% 			\displaystyle \int_{0}^{1} \frac{\partial}{\partial \btheta} f(t\btheta) \df t \\ \\
% 			f(\zero)
% 	\end{bmatrix}} \\ 
%     &= \dotprod{\begin{bmatrix}
% 	\btheta \\ 1
% \end{bmatrix}, \begin{bmatrix}
% \ff^{\star}(\btheta) \\ \\
% f(\zero)
% \end{bmatrix}}  = 
\dotprod{\widehat{\btheta}, \widehat{\ff}^{\star}(\btheta)}, \label{eq:f*_hat_dotproduct}
\end{align} 
where 
\begin{align}
	{\widehat{\ff}}^{\star}(\btheta) \defeq \begin{bmatrix}
		\ff^{\star}(\btheta) \\ 
		f(\zero)
	\end{bmatrix}, \quad \text{and} \quad \widehat{\btheta} \defeq \begin{bmatrix}
		\btheta \\ 1
	\end{bmatrix}. \label{eq:f*_hat_f* and theta}
\end{align}
\end{subequations}
The term nonlinear adjoint operator is motivated by cases where $ f(\zero) = 0 $. In such cases, the above expression simplifies to $f(\btheta) = \dotprod{\btheta, \ff^{\star}(\btheta)}$, which can be viewed as a nonlinear analogue of the Riesz representation theorem in linear settings; see, for example,  \citet{buryvskova1998some,scherpen2002nonlinear}. %Nevertheless, it has been a challenge to extend the ``inner-product'' like representation of the linear operator to a more generalized nonlinear form or non-Hilbert settings. A very few works such as \cite{buryvskova1998some} extends to nonlinear models in the Banach Space and recently \cite{jafari2024impressive} used the Fréchet derivatives for defining adjoints of differentiable nonlinear operators.

In the simplest case of linear regression, i.e., $f(\btheta) = \langle \btheta, \xx \rangle - y$, we have $\ff^{\star}(\btheta) = \xx$, that is, the input data constitutes the space of adjoint operators. Beyond simple linear settings, the explicit calculation of \cref{eq:f*} involves evaluating an integral. Naturally, this integral can be approximated using numerical methods, such as quadrature schemes. 
Fortunately, for many machine learning models, the following property enables the direct computation of \cref{eq:f*}.
\begin{proposition}
    \label{prop:f*_composit}
    Let $ f = g \circ h = g(h)$ where $ g:\real \to \real $ and $h:\real^{p} \to \real $. Also, assume that $ h $ is positively homogeneous of degree $ \alpha \in \real $, i.e., $ h(t \btheta) = t^{\alpha} h(\btheta)  $ for any $ t > 0 $. Then 
        \begin{align*}
            \ff^{\star}(\btheta) = \left\{\begin{array}{ll} 
            \displaystyle \left(\frac{g(h(\btheta)) - g(0)}{\alpha \left(h(\btheta)\right)} \right)\frac{\partial}{\partial \btheta} h( \btheta), & \text{if } h(\btheta) \neq 0, \\ \\
            \displaystyle \left(\frac{g^{\prime}(0)}{\alpha} \right)\frac{\partial}{\partial \btheta} h(\btheta), & \text{if } h(\btheta) = 0.
        \end{array} \right.
	\end{align*}
\end{proposition}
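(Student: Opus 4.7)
The plan is to substitute the composition $f = g\circ h$ into the integral defining $\ff^{\star}$ in \cref{eq:f*}, exploit the positive homogeneity of $h$ to extract $\nabla h(\btheta)$ as a pre-factor, and then reduce the remaining scalar integral by a change of variables. Consistent with the linear-regression check that $\ff^{\star}(\btheta)=\xx$ and with the chain rule $(\nabla f)(\xx)=g'(h(\xx))\,\nabla h(\xx)$, I read $\frac{\partial}{\partial \btheta} f(t\btheta)$ in \cref{eq:f*} as the gradient of $f$ evaluated at the point $t\btheta$. The task then becomes evaluating $\int_0^1 (\nabla f)(t\btheta)\,\df t$ with both $g'(h(t\btheta))$ and $\nabla h(t\btheta)$ re-expressed as functions of $\btheta$ and $t$ alone.

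The structural step is to upgrade the positive homogeneity of $h$ to a homogeneity statement for $\nabla h$. Differentiating the identity $h(t\btheta)=t^{\alpha}h(\btheta)$ in $\btheta$ yields $t\,\nabla h(t\btheta)=t^{\alpha}\,\nabla h(\btheta)$, hence $\nabla h(t\btheta)=t^{\alpha-1}\,\nabla h(\btheta)$ for $t>0$. Combining this with $h(t\btheta)=t^{\alpha}h(\btheta)$ produces the clean factorization
\begin{equation*}
(\nabla f)(t\btheta) \;=\; t^{\alpha-1}\,g'\!\left(t^{\alpha}h(\btheta)\right)\,\nabla h(\btheta),
\end{equation*}
so $\nabla h(\btheta)$ slides out of the $t$-integral and only a scalar quantity in $t$ remains.

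For $h(\btheta)\neq 0$, I would substitute $u=t^{\alpha}h(\btheta)$, whose differential $\df u=\alpha\,t^{\alpha-1}h(\btheta)\,\df t$ is precisely the measure supplied by the factor $t^{\alpha-1}$ already present. The remaining integral then becomes $\tfrac{1}{\alpha h(\btheta)}\int_{0}^{h(\btheta)} g'(u)\,\df u$, which the fundamental theorem of calculus evaluates to $\bigl(g(h(\btheta))-g(0)\bigr)/\bigl(\alpha h(\btheta)\bigr)$, matching the top branch of the proposition. For $h(\btheta)=0$, homogeneity forces $h(t\btheta)=0$ for every $t\in(0,1]$, the integrand collapses to $g'(0)\,t^{\alpha-1}\nabla h(\btheta)$, and direct integration of $t^{\alpha-1}$ on $[0,1]$ yields the stated $g'(0)/\alpha$ factor.

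The proof is essentially a clean calculation once the gradient-homogeneity identity is in hand, and I do not expect any deep obstacle. The main care needed is bookkeeping of regularity: the absolute-continuity hypothesis in \cref{def:naop} is what underwrites both the differentiation of the homogeneity identity and the application of the fundamental theorem of calculus after substitution; one also tacitly needs $\alpha\neq 0$ for the formula's denominators to make sense, and in the $h(\btheta)=0$ branch $\alpha>0$ to ensure $t^{\alpha-1}$ is integrable near $0$. These are minor points to state explicitly but do not affect the overall strategy.
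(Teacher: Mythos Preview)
Your proposal is correct and follows essentially the same route as the paper's proof: both derive the gradient-homogeneity identity $\nabla h(t\btheta)=t^{\alpha-1}\nabla h(\btheta)$, factor $\nabla h(\btheta)$ out of the integral, and then perform the same change of variables (you write it as $u=t^{\alpha}h(\btheta)$, the paper equivalently as $t=(s/h(\btheta))^{1/\alpha}$) before applying the fundamental theorem of calculus. Your added remarks on the needed sign/nonvanishing of $\alpha$ and the integrability of $t^{\alpha-1}$ in the $h(\btheta)=0$ branch are regularity caveats the paper leaves implicit.
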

\begin{proof}
The proof can be found in \cref{sec:appendix:proofs}. %\let\qed\relax
\end{proof}

\begin{example}[Generalized Linear Predictors]
\label{ex:glm}
Consider generalized linear predictor models, often called ``single index'' models, where $f(\btheta) = \phi(\langle \btheta, \xx \rangle)$ for some function $\phi: \real \to \real$ and data point $\xx \in \real^{d}$. These models are foundational in studying complex nonlinear predictors in high dimensions \cite{hristache2001direct, hardle2004nonparametric, kakade2011efficient, bietti2022learning}. By limiting networks to a single hidden unit, they provide a controllable instance of neural network architecture while retaining simplicity \cite{pmlr-v65-goel17a}. In scientific machine learning, such models underpin efficient surrogate methods for parametric partial differential equations, where the cost of labeled queries can be prohibitive \cite{cohen2015approximation, o2022derivative}.

Note that here $f(\btheta) = g(h(\btheta))$ where $g(t) = \phi(t)$ and $h(\btheta) = \dotprod{\btheta,\xx}$. Since the positive homogeneity degree of $h$ is $ \alpha = 1 $, we have 
% \vspace{-2mm}
\begin{align}
    \label{eq:glm_f*}
    \ff^{\star}(\btheta) %&= \int_{0}^{1} \frac{\partial}{\partial \btheta} f(t \btheta, 	\xx) \df t = \left(\int_{0}^{1} \phi^{\prime}(\dotprod{t\btheta,\xx}) \df t\right)  \xx 
    = \left(\frac{\phi(\dotprod{\btheta,\xx}) - \phi(0)}{\dotprod{\btheta,\xx}}\right) \xx.
\end{align}
We also get $\,\widehat{\ff}^{\star}(\btheta) = \begin{bmatrix}
        \ff^{\star}(\btheta) \\ \phi(0)
    \end{bmatrix}$.
% \begin{align*}
%     \widehat{\ff}^{\star}(\btheta) = \begin{bmatrix}
%         \ff^{\star}(\btheta) \\ \phi(0)
%     \end{bmatrix}.
% \end{align*}
A notable example is the logistic function where $ \phi(t) = 1/(1+\exp(-t))$, where   
% \vspace{-2mm}
\begin{align*}
    \ff^{\star}(\btheta) %&= \int_{0}^{1} \frac{\partial}{\partial \btheta} f(t \btheta, 	\xx) \df t = - \left(\int_{0}^{1} \frac{e^{-\dotprod{\btheta,\xx}t}}{\left(1 + e^{-\dotprod{\btheta,\xx}t}\right)^{2}} \df t\right)  \xx
    = - \left(\frac{\tanh(\dotprod{\btheta,\xx}/2)}{2 \dotprod{\btheta,\xx}}\right) \xx.
\end{align*}
\end{example}
        
\begin{example}[ReLU Neural Networks]
\label{ex:nn}
Suppose $ r(\btheta) = \phi(\psi(\btheta))$ where $\phi:\real \to \real$, and $ \psi(\btheta) \defeq a \cdot  \max\{\dotprod{\bb,\xx},0\})$ for $\xx \in \real^{d}$ and $ \btheta = [a,\bb] \in \real^{d + 1}$. Here, the positive homogeneity degree of $\psi$ is $ \alpha = 2 $. Hence, 
% \vspace{-2mm}
\begin{align}
    \label{eq:relu}
    &\rr^{\star}(\btheta) = \int_{0}^{1} \frac{\partial}{\partial \btheta} \phi(t a \cdot   \max\left\{\dotprod{t \bb,\xx},0\right\}) \; \df t \\%= \bm{\phi}^{\star}(a \cdot   \max\left\{\dotprod{\bb,\xx},0\right\}) \\
    &= \frac{\phi(a \cdot   \max\left\{\dotprod{\bb,\xx},0\right\}) - \phi(0)}{2 a \cdot \max\left\{\dotprod{\bb,\xx},0\right\}} \begin{bmatrix}
        \max\left\{\dotprod{\bb,\xx},0\right\} \\ \\
        a \cdot \xx \cdot \indic{\dotprod{\bb,\xx} > 0}
    \end{bmatrix}. \nonumber
\end{align}
In the typical case where $ \phi(z) = z $, this simplifies to 
\begin{align*}
    \rr^{\star}(\btheta)= \hf \begin{bmatrix}
        \max\left\{\dotprod{\bb,\xx},0\right\} \\
        a \cdot  \xx \indic{\dotprod{\bb,\xx} > 0}
    \end{bmatrix}. 
\end{align*}
Now, consider a two-layer neural network with $m$ hidden neurons, ReLU activation, and a single output:
\begin{align*}
    f(\btheta) = \sum_{j=1}^{m} r(\btheta_{j}) = \sum_{j=1}^{m} \phi(a_{j} \cdot   \max\left\{\dotprod{\bb_{j},\xx},0\right\}),
\end{align*}
where $ \btheta_{j} = [a_{j},\bb_{j}] $ and $ \btheta = [\btheta_{1},\ldots,\btheta_m] $. 
%An example with $ d=6 $, $m=4$ is depicted in \cref{fig:nn_single} in \cref{sec:appendix:figure_nn}. 
 We have 
\begin{align*}
    \ff^{\star}(\btheta) &= \sum_{j=1}^{m} \int_{0}^{1} \frac{\partial}{\partial \btheta} \phi(t a_{j} \cdot   \max\left\{\dotprod{t \bb_{j},\xx},0\right\}) \df t \\
    & = \sum_{j=1}^{m} \ee_{j} \otimes \rr^{\star}(\btheta_{j}) \\
    &= \begin{bmatrix}
        [\rr^{\star}(\btheta_{1})]^{\T} & [\rr^{\star}(\btheta_{2})]^{\T} & \ldots & [\rr^{\star}(\btheta_{m})]^{\T},
    \end{bmatrix}^{\T}%\\
    % &= \begin{bmatrix}
    %     \rr^{\star}(\btheta_{1}) \\ \rr^{\star}(\btheta_{2}) \\ \vdots \\\rr^{\star}(\btheta_{m})
    % \end{bmatrix},
\end{align*}
where $ \otimes $ denotes the Kronecker product and $ \rr^{\star}(\btheta_{j}) $ is given in \cref{eq:relu}. 
Finally, we get $\, \widehat{\ff}^{\star}(\btheta) = \begin{bmatrix}
        \ff^{\star}(\btheta) \\ m \cdot \phi(0)
    \end{bmatrix}$.
% \begin{align*}
%     \widehat{\ff}^{\star}(\btheta) = \begin{bmatrix}
%         \ff^{\star}(\btheta) \\ m \cdot \phi(0)
%     \end{bmatrix}.
% \end{align*}
\end{example}

\subsection{Parameter-dependent Approximations}
\label{sec:sampling_nls}
The ``inner-product-like'' representation of a nonlinear function in \cref{eq:f*_hat} enables the development of sampling strategies for approximating general loss functions beyond linear least-squares and extends the concept of linear subspace embeddings to nonlinear settings. We focus on nonlinear least-squares problems, where $\ell(t) = t^2$, and \cref{eq:loss} is expressed as $\sL(\btheta) = \sum_{i=1}^{n} \left( f_{i}(\btheta) \right)^{2}$, with extensions to more general nonlinear losses discussed in \cref{sec:appendix:general}.

\begin{definition}[Nonlinear Dual Matrix]
\label{def:dual_X}
    Given the nonlinear maps $f_{i}: \real^{p} \to \real$, $i=1,\ldots,n$, the nonlinear dual matrix operator, $ \FF^{\star}:\real^{p} \to \real^{n \times p}$, is defined as
\begin{align*}
	\FF^{\star}(\btheta) \defeq \begin{bmatrix}
		\ff^{\star}_{1}(\btheta) & \ff^{\star}_{2}(\btheta) & \ldots &  \ff^{\star}_{n}(\btheta)
	\end{bmatrix}^{\T}.
\end{align*}
Using \cref{eq:f*_hat_f* and theta}, we also define
\begin{align*}
	\widehat{\FF}^{\star}(\btheta) \defeq \begin{bmatrix}
		\widehat{\ff}^{\star}_{1}(\btheta) & \widehat{\ff}^{\star}_{2}(\btheta) & \ldots &  \widehat{\ff}^{\star}_{n}(\btheta)
	\end{bmatrix}^{\T}.
\end{align*}
\end{definition}
With \cref{eq:f*_hat,def:dual_X}, we can now write
\begin{align*}
	\sL(\btheta) \hspace{-0.5mm} = \hspace{-0.5mm} \sum_{i=1}^{n} \left( f_{i}(\btheta) \right)^{2} \hspace{-0.5mm} = \hspace{-0.5mm} \sum_{i=1}^{n} \langle \widehat{\btheta}, \widehat{\ff}^{\star}_{i}(\btheta) \rangle^{2} \hspace{-0.5mm} = \hspace{-0.5mm} \vnorm{\widehat{\FF}^{\star}(\btheta) \widehat{\btheta}}^{2}. \tageq\label{eq:nls}
 %    \\
	% &= \vnorm{\widehat{\FF}^{\star}(\btheta) \widehat{\btheta}}^{2} = \vnorm{\FF^{\star}(\btheta) \btheta + \ff(\zero)}^{2}. \tageq\label{eq:nls}
\end{align*}
% \begin{align*}
% 	&\sL(\btheta) = \sum_{i=1}^{n} \left( f_{i}(\btheta) \right)^{2} = \sum_{i=1}^{n} \dotprod{\widehat{\btheta}, \widehat{\ff}^{\star}_{i}(\btheta)}^{2} \\
% 	% &= \sum_{i=1}^{n} \dotprod{\widehat{\btheta}, \left( \widehat{\ff}^{\star}_{i}(\btheta)\left[\widehat{\ff}^{\star}_{i}(\btheta)\right]^{\T}\right) \widehat{\btheta}} \\
%     &= \dotprod{\widehat{\btheta}, \left(\left[\widehat{\FF}^{\star}(\btheta)\right]^{\T}\widehat{\FF}^{\star}(\btheta)\right) \widehat{\btheta}} = \vnorm{\widehat{\FF}^{\star}(\btheta) \widehat{\btheta}}^{2}. \tageq\label{eq:nls}
%  %    \\
% 	% &= \vnorm{\widehat{\FF}^{\star}(\btheta) \widehat{\btheta}}^{2} = \vnorm{\FF^{\star}(\btheta) \btheta + \ff(\zero)}^{2}. \tageq\label{eq:nls}
% \end{align*}
This formulation resembles the ordinary least-squares setup, with the nonlinear dual matrix $\widehat{\FF}^{\star}(\btheta)$ replacing the original input data matrix. This motivates the following notions of importance sampling scores, which generalize standard leverage scores or row-norms to those based on the rows of the dual matrix $\widehat{\FF}^{\star}(\btheta)$.

\begin{definition}[Nonlinear Leverage Scores]
\label{def:lev_score}
Nonlinear leverage score of $f_{i}$ is defined as 
\begin{align*}
	\tau_{i}(\btheta) %&= \dotprod{\widehat{\ff}_{i}^{\star}_{i}(\btheta),\left(\left[\widehat{\FF}^{\star}(\btheta)\right]^{\T}\widehat{\FF}^{\star}(\btheta)\right)^{-1}\widehat{\ff}_{i}^{\star}_{i}(\btheta)} \\
    &\defeq \frac{\dotprod{\ee_{i},\widehat{\FF}^{\star}(\btheta) \left[\widehat{\FF}^{\star}(\btheta)\right]^{\dagger} \ee_{i}}}{\rank\left(\widehat{\FF}^{\star}(\btheta)\right)}. 
	% & = \min_{\bomega \in \real^{n}}\; \vnorm{\bomega}^{2} \;\;  \text{s.t.}\;\; \left[\widehat{\FF}^{\star}(\btheta)\right]^{\T} \bomega = \widehat{\ff}_{i}^{\star}_{i}(\btheta).
\end{align*}	
\end{definition}
\begin{definition}[Nonlinear Norm Scores]
\label{def:row_norm}
Nonlinear norm score of $f_{i}$  is defined as 
\begin{align*}
	\tau_{i}(\btheta) &\defeq \frac{\vnorm{\widehat{\ff}^{\star}_{i}(\btheta)}_{2}^{2}}{\vnorm{\widehat{\FF}^{\star}(\btheta)}^{2}_{\textnormal{F}}}.
\end{align*}	
\end{definition}
It is easy to verify that, for both \cref{def:lev_score,def:row_norm}, the values $\{\tau_{i}(\btheta)\}_{i=1}^{n}$ form a probability distribution, i.e., $\tau_{i}(\btheta) \geq 0$ and $\sum_{i} \tau_{i}(\btheta) = 1$.

\begin{remark}
     \cref{def:lev_score,def:row_norm} generalize their linear counterparts. Specifically, for linear models where $f(\btheta) = \langle \btheta, \xx \rangle$, these definitions reduce to the standard linear leverage and row norm scores. To the best of our knowledge, this represents the first systematic extension of these concepts from linear to nonlinear settings.
\end{remark} 

% Let $\sS $ the index set of the $s$ samples by randomly sampling, with replacement, $s \leq n$ data points, with probabilities determined by either of the sampling distributions in \cref{def:lev_score,def:row_norm}. Note that $\sS$  depends on the choice of $\btheta$ (this will be addressed later in \cref{sec:estimating_scores}). The loss on the sampled data is then defined as 
% \begin{align*} 
% \sL_{\sS}(\btheta) \defeq \sum_{i \in \sS} \frac{\left( f_{i}(\btheta) \right)^{2}}{s \tau_{i}(\btheta)}, 
% \end{align*}
% which constitutes an unbiased estimator of the true loss. Similar to the derivation of \cref{eq:nls}, it follows that $\sL_{\sS}(\btheta) = \|\widehat{\FF}^{\star}_{\sS}(\btheta) \widehat{\btheta}\|^{2}$ where $\widehat{\FF}^{\star}_{\sS}(\btheta) \in \mathbb{R}^{s \times p}$ represents a subset of $\widehat{\FF}^{\star}(\btheta)$, consisting of rows indexed by $\sS$, with the $i\th$ row scaled by $\sqrt{s \tau_{i}(\btheta)}$. 

\begin{remark}
\label{sec:adjoint_def_remark}
While \cref{ex:glm} and \cref{ex:nn} illustrate explicit computations of the adjoint operator via \cref{prop:f*_composit}, our approach conceptually extends to broader classes of nonlinear functions through its general \cref{def:naop}. When an explicit adjoint form is available, importance scores (row-norm or leverage) are computed directly from the nonlinear dual matrix via standard linear algebra routines (e.g., QR or SVD factorizations). In scenarios where the adjoint lacks a closed-form expression, we can approximate it numerically using \cref{def:naop}; the resulting dual matrix is then factorized similarly, with QR or SVD methods.
\end{remark}

Let $\sS$ be the index set of $s$ samples, obtained by randomly sampling $s \leq n$ data points with replacement, using probabilities defined by one of the sampling distributions in \cref{def:lev_score,def:row_norm}. Note that $\sS$ depends on the choice of $\btheta$ (this is addressed later in \cref{sec:estimating_scores}). The loss on the sampled data is defined as  
\begin{align*}  
\sL_{\sS}(\btheta) &\defeq \sum_{i \in \sS} \frac{\left( f_{i}(\btheta) \right)^{2}}{s \tau_{i}(\btheta)},  
\end{align*}  
which is an unbiased estimator of the true loss. Similar to the derivation of \cref{eq:nls}, we have  
\begin{align*}  
\sL_{\sS}(\btheta) &= \|\widehat{\FF}^{\star}_{\sS}(\btheta) \widehat{\btheta}\|^{2},  
\end{align*}  
where $\widehat{\FF}^{\star}_{\sS}(\btheta) \in \mathbb{R}^{s \times p}$ is a subset of $\widehat{\FF}^{\star}(\btheta)$, consisting of rows indexed by $\sS$, with the $i$th row scaled by $1/\sqrt{s \tau_{i}(\btheta)}$.

The representation in \cref{eq:nls}, which related the output of a nonlinear function to the norm of a matrix-vector product, enables the use of existing results on approximate matrix multiplication and linear subspace embedding to derive an approximation bound on $\sL_{\sS}(\btheta)$; see, for example, \citet{woodruff2014sketching, drineas2018lectures, martinsson2020randomized}. Specifically, for a fixed $\btheta$ and any of the sampling distributions mentioned, if the sample size is $\bigO{p \log(p/\delta)/\varepsilon^{2}}$, then with probability at least $1-\delta$, for all $\vv \in \real^{p+1}$, $(1-\varepsilon) \| \widehat{\FF}^{\star}(\btheta) \vv\|^{2} \! \leq \!  
    \|\widehat{\FF}^{\star}_{\sS}(\btheta) \vv\|^{2} \! \leq \! (1+\varepsilon) \| \widehat{\FF}^{\star}(\btheta) \vv\|^{2}$.  
% $\vv \in \real^{p+1}$, $(1-\varepsilon) \| \widehat{\FF}^{\star}(\btheta) \vv^{2} \! \leq \!  
%     \|\widehat{\FF}^{\star}_{\sS}(\btheta) \vv\|^{2} \! \leq \! (1+\varepsilon) \| \widehat{\FF}^{\star}(\btheta) \vv\|^{2}$.  
% PrakashNew{The above had a norm missing closing bracket}
% \begin{align*}
%     (1-\varepsilon) \vnorm{\widehat{\FF}^{\star}(\btheta) \vv}^{2} \! \leq \!  
%     \vnorm{\widehat{\FF}^{\star}_{\sS}(\btheta) \vv}^{2} \! \leq \! (1+\varepsilon) \vnorm{\widehat{\FF}^{\star}(\btheta) \vv}^{2}.
% \end{align*}
% \begin{align*}
%     (1-\varepsilon) \left[\widehat{\FF}^{\star}(\btheta)\right]^{\T}\widehat{\FF}^{\star}(\btheta) &\preceq 
%     \left[\widehat{\FF}^{\star}_{\sS}(\btheta)\right]^{\T}\widehat{\FF}^{\star}_{\sS}(\btheta) \\
%     &\preceq (1 + \varepsilon) \left[\widehat{\FF}^{\star}(\btheta)\right]^{\T}\widehat{\FF}^{\star}(\btheta),
% \end{align*}
which in turn implies, with probability at least $1-\delta$,
\begin{align}
    \label{eq:approx}
	(1-\varepsilon) \sL(\btheta) \leq \sL_{\sS}(\btheta) \leq (1+\varepsilon)\sL(\btheta).
\end{align}

\subsection{Parameter-independent Approximation}
% The relation \cref{eq:approx} and the optimality of $\btheta^{\star}_{\sS}$ with respect to $\sL_{\sS}(.)$ allow us to take initial steps towards \cref{eq:goal} as
% \begin{align}
%     \label{eq:param_indep_01}
%     \sL_{\sS}(\btheta^{\star}_{\sS}) \leq \sL_{\sS}(\btheta^{\star}) \leq  (1+\varepsilon)\sL(\btheta^{\star}).
% \end{align}
% However, the key missing component is that the left-hand side of \cref{eq:goal} involves the loss evaluated over the full dataset, whereas the left-hand side of the inequality above is restricted to the loss over the sampled subset. The chain of inequalities to obtain \cref{eq:goal} would be complete if we could find a sensible lower bound of $\sL_{\sS}(\btheta^{\star}_{\sS})$ in terms of $\sL(\btheta^{\star}_{\sS})$. Even so, \cref{eq:param_indep_01} requires evaluating the importance sampling scores defined in \cref{def:lev_score,def:row_norm} at $\bthetas$, which is inherently unknown. Therefore, to achieve the goal of obtaining \cref{eq:goal}, we need to address two critical challenges:
% \begin{enumerate}
% \item Approximating the nonlinear scores defined in \cref{def:lev_score,def:row_norm} with quantities that are independent of the parameter $\btheta$.
% \item Establishing a meaningful lower bound for $\sL_{\sS}(\btheta^{\star}_{\sS})$ in terms of $\sL(\btheta^{\star}_{\sS})$.
% \end{enumerate}

% Our next tasks involve addressing these challenges. 
The relation \cref{eq:approx} and the optimality of $\btheta^{\star}_{\sS}$ with respect to $\sL_{\sS}(.)$ allow us to take initial steps towards \cref{eq:goal} as
\begin{align}
    \label{eq:param_indep_01}
    \sL_{\sS}(\btheta^{\star}_{\sS}) \leq \sL_{\sS}(\btheta^{\star}) \leq  (1+\varepsilon)\sL(\btheta^{\star}).
\end{align}
However, the key missing component is that the left-hand side of \cref{eq:goal} involves the loss evaluated over the full dataset, whereas the left-hand side of \cref{eq:param_indep_01} is restricted to the loss over the sampled subset. The chain of inequalities to obtain \cref{eq:goal} would be complete if we could find a sensible lower bound for $\sL_{\sS}(\btheta^{\star}_{\sS})$ in terms of $\sL(\btheta^{\star}_{\sS})$. Additionally, \cref{eq:param_indep_01} requires evaluating the importance sampling scores defined in \cref{def:lev_score,def:row_norm} at $\bthetas$, which is inherently unknown. Therefore, to achieve \cref{eq:goal}, we must address two critical challenges:

\begin{enumerate}
    \item Approximating the nonlinear scores defined in \cref{def:lev_score,def:row_norm} with quantities that are independent of the parameter $\btheta$.
    \item Establishing a meaningful lower bound for $\sL_{\sS}(\btheta^{\star}_{\sS})$ in terms of $\sL(\btheta^{\star}_{\sS})$.
\end{enumerate}

Our next tasks involve addressing these challenges.

\subsubsection{Estimating Sampling Scores}
\label{sec:estimating_scores}
% The relation \cref{eq:param_indep_01} necessitates that \cref{eq:approx} holds for $\bthetas$, which involves calculating the scores defined in \cref{def:lev_score,def:row_norm} for $\bthetas$. However, this is inherently infeasible, as $\bthetas$ is unknown.  
% %
% A possible remedy lies in approximating these nonlinear scores with ones that do not depend on $\btheta$. 

% The idea of sampling with near-optimal probabilities has been extensively studied in the context of randomized numerical linear algebra. More precisely, as highlighted in \citet{drineas2018lectures,woodruff2014sketching}, even when leverage scores or row norms are estimated inaccurately by a factor $0 < \beta \leq 1$ expressed as $\beta \tau_{i} \leq \hat{\tau}_{i}$ for $i=1,2,\ldots,n$, it is still possible to achieve the subspace embedding property \cref{eq:approx_lin} with a sample size of at least $\bigO{p \log(p/\delta)/(\beta \varepsilon^{2})}$. In this light, if the nonlinear scores can be approximated such that $\beta \tau_{i}(\btheta) \leq   \tau_{i}$ for some $\beta \in (0,1]$, we can sample according to  $\tau_{i}$ and still obtain a guarantee of the form \cref{eq:approx} for any $\btheta$. 

% Within the context of the specific models considered in \cref{ex:glm,ex:nn}, we demonstrate that leveraging the particular structure of the nonlinear adjoint operators enables the estimation of the nonlinear scores.

The relation in \cref{eq:param_indep_01} requires calculating the scores from \cref{def:lev_score,def:row_norm} for $\bthetas$, which is infeasible since $\bthetas$ is unknown. A solution is to approximate these nonlinear scores with ones independent of $\btheta$.  

Sampling with near-optimal probabilities has been studied in RandNLA \citep{drineas2018lectures,woodruff2014sketching}, where it is known that if leverage or row-norm scores, $\{\tau_i\}$, are approximated by $\{\hat{\tau}_i\}$ such that $\beta \tau_i \leq \hat{\tau}_i$, the subspace embedding property \cref{eq:approx_lin} holds with sample size $\mathcal{O}(p \log(p/\delta)/(\beta \varepsilon^2))$. If nonlinear scores can be similarly approximated in a manner independent of $\btheta$, we can sample according to the approximate scores and still obtain a guarantee of the form \cref{eq:approx} for any $\bthetas$.  

Within the context of the specific models considered in \cref{ex:glm,ex:nn}, we demonstrate that leveraging the structure of the nonlinear adjoint operators enables the estimation of the nonlinear scores.

\begin{example}[Generalized Linear Predictors]
\label{ex:glm_estimate}
% Consider the class generalized linear predictor models from \cref{ex:glm} with  any activation function $\phi$ for which there exists $0 < l < u < \infty$ such that $l \leq {\phi^{2}(t)}/{t^{2}} \leq u$ for all $t\in \sT$ for some set of interest $\sT$. An example is that of a Swish-type activation function where
% \begin{align*}
% \phi(t) = t \cdot \left( \sqrt{c_1} + (\sqrt{c_2} - \sqrt{c_1}) \frac{1}{1 + e^{-\zeta t}} \right),
% \end{align*}
% for some $0 \leq c_1 < c_2$ and $\zeta \in \real$. %Note that with $c_2=1$, as $c_1 \to 0$, the above function approaches the typical swish function $t/ (1 + e^{-\zeta t})$. 
% Suppose $c_1 > 0$. Then, we can take $l = \sqrt{c_1}$ and $u = \sqrt{c_2}$. For $c_1 = 0$, i.e., the typical Swish function, we can still define $0 < l \defeq \displaystyle \min_{t \in \mathcal{T}} {\phi^{2}(t)}/{t^{2}} $ provided that $\sT$ is a bounded set.
Consider the class of generalized linear predictor models from \cref{ex:glm} with any activation function $\phi$ such that there exist constants $0 < l < u < \infty$ for which $l \leq {\phi^{2}(t)}/{t^{2}} \leq u$ for all $t\in \sT$, where $\mathcal{T}$ is some set of interest. An example is the Swish-type activation function given by
\begin{align*}
\phi(t) &= t \cdot \left( \sqrt{c_1} + (\sqrt{c_2} - \sqrt{c_1}) \frac{1}{1 + e^{-\zeta t}} \right),
\end{align*}
for some $0 \leq c_1 < c_2$ and $\zeta \in \mathbb{R}$. If $c_1 > 0$, we can take $l = \sqrt{c_1}$ and $u = \sqrt{c_2}$. For $c_1 = 0$, i.e., the typical Swish function, we can still define $0 < l \defeq \displaystyle \min_{t \in \mathcal{T}} {\phi^{2}(t)}/{t^{2}} $ provided that $\mathcal{T}$ is a bounded set.

Consider leverage score sampling according to \cref{def:lev_score}. 
%Since $\phi(0) = 0$, we have $
%\langle \ee_{i},\widehat{\FF}^{\star}(\btheta) \left[\widehat{\FF}^{\star}(\btheta)\right]^{\dagger} \ee_{i} %\rangle = \langle \ee_{i},\FF^{\star}(\btheta) \left[\FF^{\star}(\btheta)\right]^{\dagger} \ee_{i} \rangle$. 
Suppose $\XX \in \real^{n \times d}$ and $\FF^{\star}(\btheta) \in \real^{n \times d}$ are both full column rank. Since 
\begin{align*}
\left[\FF^{\star}(\btheta)\right]^{\T} \FF^{\star}(\btheta) = \sum_i \left( \frac{\phi(\langle \btheta, \xx_i \rangle)}{\langle \btheta, x \rangle} \right)^2 \xx_i \xx_i^T,
\end{align*}
it follows that, as long as $c_1 > 0$ or $\btheta \in \sC$ for some bounded set $\sC$,  we have $\zero \prec l \cdot \XX^{\T} \XX \preceq \left[\FF^{\star}(\btheta)\right]^{\T} \FF^{\star}(\btheta) \preceq u \cdot \XX^{\T} \XX$.
% \begin{align*}
% \zero \prec l \cdot \XX^{\T} \XX \preceq \left[\FF^{\star}(\btheta)\right]^{\T} \FF^{\star}(\btheta) \preceq u \cdot \XX^{\T} \XX.
% \end{align*}
Hence,
\begin{align*}
    d \cdot \tau_{i}(\btheta) &= \dotprod{\ff^{\star}(\btheta), \left( \left[\FF^{\star}(\btheta)\right]^{\T} \FF^{\star}(\btheta)\right)^{-1} \ff^{\star}(\btheta)} \\
    &\leq \frac{1}{c_{1}}\dotprod{\ff^{\star}(\btheta), \left( \XX^{\T} \XX\right)^{-1} \ff^{\star}(\btheta)} \\
    &\leq \frac{u}{l}\dotprod{\xx, \left( \XX^{\T} \XX\right)^{-1} \xx} = \frac{d \cdot u}{l} \tau_{i},
\end{align*}
where $\tau_{i}$ is the linear leverage score. This implies $\beta = l/u$. 
\end{example}

\begin{example}[ReLU Neural Networks]
\label{ex:nn_estimate}
%Going beyond single index models, We now consider the application of our framework to sampling more complex neural network models. To the best of our knowledge, this is the first attempt to extend importance sampling strategies with rigorous approximation guarantees of the form given in \cref{eq:goal}.
Revising the NN model in \cref{ex:nn}, we consider row norm sampling according to  \cref{def:row_norm}. Suppose $\phi$ is such that such that $c_1 \leq  (\phi(t)-\phi(0))^{2}/t^{2} \leq c_2$ for some $0 < c_1 \leq c_2 < \infty$ and for all $t\in \sT$ for some set of interest $\sT$, e.g., Swish-type or linear output layer. Recall that $ \btheta = [\btheta_{1},\ldots,\btheta_m] $ where $ \btheta_{j} = [a_{j},\bb_{j}] $. Also denote  $ \btheta^{\star} = [\btheta_{1}^{\star},\ldots,\btheta_{m}^{\star}] $ where $\btheta^{\star}_{j} = [a_{j}^{\star},\bb_{j}^{\star}]$. Suppose $a^{\star}_{j} \neq 0$ for all $j \in {1, 2, \ldots, m}$. 
%If $a^{\star}_{j} = 0$ for some $j$, we can simply eliminate all connections leading to $a^{\star}_{j}$ and consider a network with $m-1$ hidden units. 
Let $l > 0$ be such that $\min_{j} (a^{\star}_{j})^{2} \geq  l$ and  $0 < u < \infty $ be such that $\sum_{j=1}^{m} \left( \vnorm{\bb^{\star}_{j}}^{2} + (a^{\star}_j)^2 \right) \leq u$. Define the set 
\begin{align*}
\sC \defeq \left\{ [a_{1},\bb_{1},\ldots,a_{m},\bb_{m}] \mid  \min_{j  = 1,\ldots,m} \; (a_{j})^{2} \geq  l, \right.\\
\left.  \sum_{j=1}^{m} \left( \vnorm{\bb_{j}}^{2} + (a_j)^2 \right) \leq u  \right\}.
\end{align*}
Clearly, by construction, $\bthetas \in \sC $. After some algebraic manipulations (see \cref{sec:appendix:nn}), for any $\xx_{i}$ and any $\btheta \in \sC$, we have 
\begin{align*}
c_1 l  \vnorm{\xx_{i}}^{2} &\leq \vnorm{\ff^{\star}_{i}(\btheta)}^{2} \leq c_2 u \vnorm{\xx_{i}}^{2},
\end{align*}
where $\ff^{\star}_{i}(\btheta)$ is the adjoint operator of 
\begin{align*}
    f_{i}(\btheta) = \sum_{j=1}^{m} \phi_{i}(a_{j} \cdot   \max\left\{\dotprod{\bb_{j},\xx_{i}},0\right\}).
\end{align*}
This implies that 
\begin{align*}
\tau_{i}(\btheta) = \frac{\vnorm{\widehat{\ff}^{\star}_{i}(\btheta)}_{2}^{2}}{\vnorm{\widehat{\FF}^{\star}(\btheta)}^{2}_{\textnormal{F}}} \leq \left(\frac{\max\{c_2 u,1\}  }{\min\{c_1 l,1\}}\right) \tau_{i},
\end{align*}
where $\tau_{i}$ is the norm score for the $i\th$ row of
\begin{align*}
    \widehat{\XX} = \begin{bmatrix}
        \xx_{1} &  \xx_{2} & \ldots & \xx_{n} \\
        m\phi_{1}(0) & m\phi_{2}(0) & \ldots & m\phi_{n}(0)
    \end{bmatrix}^{\T} \in \real^{n \times  (d+1)}.
\end{align*}
% \begin{align*}
%     \widehat{\XX} = \begin{bmatrix}
%         \xx^{\T}_{1} & m\phi_{1}(0) \\ \xx^{\T}_{2} & m\phi_{2}(0)  \\ \vdots & \vdots \\ \xx^{\T}_{n} & m\phi_{n}(0) 
%     \end{bmatrix} \in \real^{n \times  (d+1)}.
% \end{align*}
This implies $\beta = {\min\{c_1 l,1\}}/{\max\{c_2 u,1\}}$.

\end{example}

\subsubsection{Lower bounding $\sL_{\sS}(\btheta^{\star}_{\sS})$}
\label{sec:lower_bound}
% Let $\btheta^{\star}$ denote a solution to \cref{eq:loss}, and consider a compact ball with a radius $R$, chosen large enough to contain $\btheta^{\star}$. Let $\sB^{\star}_{R}$ denote this ball. For any $\varepsilon \in (0,1)$, we pick a discrete subset $\sN_{\varepsilon} \subseteq \sB^{\star}_{R}$ such that, for every $\btheta \in \sB^{\star}_{R}$, there is at least one $\btheta^{\prime} \in \sN_{\varepsilon}$ such that,
% \begin{align*}
% \|\btheta - \btheta^{\prime}\|_{2} \;\leq\; \varepsilon R.
% \end{align*}
% This construction is often referred to as $\varepsilon$-net and reduces an uncountable continuous set to a finite covering set \cite{woodruff2014sketching,vershynin2018high}. Standard volume and covering number arguments provide an upper bound on the cardinality of the set $|\sN_{\varepsilon}|$.  Recall that the volume of $\sB^{\star}_{R}$ is
% \begin{align*}
% \text{Vol}(\sB^{\star}_{R}) \;=\; \frac{\pi^{p/2} R^{p}}{\Gamma\bigl({p}/{2}+1\bigr)},
% \end{align*}
% where $\Gamma$ is Euler's gamma function. A bound on $|\sN_{\varepsilon}|$ can be obtained as the number of small balls of radius $\varepsilon R/2$ that cover the larger ball of of radius $(1+{\varepsilon}/{2})R$, which is given by as the ratio of their respective volume, 
% \begin{align*}
% |\sN_{\varepsilon}| \geq \frac{{((1+ {\varepsilon}/{2}) R)}^p}{({\varepsilon R}/{2})^{p}} \;=\; ({1 + {2}/{\varepsilon}})^p \in \Theta({1}/{\varepsilon}^p).
% \end{align*}
Let $\btheta^{\star}$ denote a solution to \cref{eq:loss}, and consider a ball with radius $R$, chosen large enough to contain $\btheta^{\star}$, denoted by $\sB^{\star}_{R}$. For any $\varepsilon \in (0,1)$, we pick a discrete subset $\sN_{\varepsilon} \subseteq \sB^{\star}_{R}$ such that, for every $\btheta \in \sB^{\star}_{R}$, there exists at least one $\btheta^{\prime} \in \sN_{\varepsilon}$ satisfying $\|\btheta - \btheta^{\prime}\|_{2} \leq \varepsilon R$.
% \begin{align*}
% \|\btheta - \btheta^{\prime}\|_{2} \leq \varepsilon R.
% \end{align*}
The size of this set can be shown to be $|\sN_{\varepsilon}| \in \Omega\left({1}/{\varepsilon^p}\right)$. This construction is well-known and is commonly referred to as an $\varepsilon$-net; see \cref{sec:appendix:e_net}. 
% Standard volume and covering number arguments provide an upper bound on the cardinality of the set $|\sN_{\varepsilon}|$. Recall that the volume of $\sB^{\star}_{R}$ is $\text{Vol}(\sB^{\star}_{R}) = {\pi^{p/2} R^{p}}/{\Gamma\left({p}/{2}+1\right)}$,
% % \begin{align*}
% % \text{Vol}(\sB^{\star}_{R}) = \frac{\pi^{p/2} R^{p}}{\Gamma\left(\frac{p}{2}+1\right)},
% % \end{align*}
% where $\Gamma$ is Euler's gamma function. A bound on $|\sN_{\varepsilon}|$ can be obtained as the number of small balls with radius $\varepsilon R/2$ that cover the larger ball of radius $(1 + \frac{\varepsilon}{2})R$, which is given by the ratio of their respective volumes:
% \begin{align*}
% |\sN_{\varepsilon}| \geq \frac{{\left(1 + \frac{\varepsilon}{2}\right)}^p R^p}{{\left(\frac{\varepsilon R}{2}\right)}^p} = \left(1 + \frac{2}{\varepsilon}\right)^p \in \Theta\left(\frac{1}{\varepsilon^p}\right).
% \end{align*}

This $\varepsilon$-net construction enables a union bound argument to control the probability of \cref{eq:approx} for all $\btheta \in \sN_{\varepsilon}$. Suppose $\sS$ can be sampled according to a uniform estimate of the nonlinear scores, as described in \cref{sec:estimating_scores}, so that $\sS$ no longer depends on the choice of $\btheta$. If, for a given $\btheta$, \cref{eq:approx} fails to hold with a probability of at most $\delta^{\prime}$, then a union bound ensures that the overall failure probability of \cref{eq:approx} for all $\btheta \in \sN_{\varepsilon}$ cannot exceed $|\sN_{\varepsilon}|\delta^{\prime}$. 
By choosing $\delta^{\prime} = \delta/|\sN_{\varepsilon}| \leq \delta \varepsilon^{p}$, we ensure \cref{eq:approx} holds with a success probability of at least $1 - \delta$ for all $\btheta \in \sN_{\varepsilon}$.

Suppose $\sL_{\sS}(.)$ is continuous on $\sB^{\star}_{R}$, which is almost always guaranteed for many loss functions in ML. The compactness of the ball implies that $\sL_{\sS}(.)$ is also Lipchitz continuous with respect to $\btheta$ on $\sB^{\star}_{R}$, i.e., for any set of samples $\sS$, there exists a constant $0 \leq L(f,\XX,\sS,R) < \infty$ such that for any $\btheta, \btheta^{\prime} \in \sB^{\star}_{R}$, $|\sL_{\sS}(\btheta) - \sL_{\sS}(\btheta^{\prime})| \;\leq\; L(f,\XX, \sS,R) \|\btheta - \btheta^{\prime}\|_{2}$.
% \begin{align*}
% |\sL_{\sS}(\btheta) - \sL_{\sS}(\btheta^{\prime})| \;\leq\; L(f,\XX, \sS,R) \|\btheta - \btheta^{\prime}\|_{2}.
% \end{align*}
Note that the Lipschitz continuity constant may depend on $\XX$, $\sS$, $f$, and the radius of the ball $\sB^{\star}_{R}$. Define $L(f, \XX, R) \defeq \max_{\sS} L(f,\XX, \sS, R)$.
% \begin{align*}
%     L(f, \XX, R) \defeq \max_{\sS} L(f,\XX, \sS, R). 
% \end{align*}
Note that this is a maximization over a finite collection of numbers\footnote{The total number of possible unordered samples of any size from $n$ object with replacement is given by $\sum_{s=1}^{n}\binom{n+s-1}{s}$.} and hence $L(f, \XX, R) < \infty$.

Now, suppose $\sC \subseteq \sB_{R}^{\star}$ with $\bthetas \in \sC$, and let
\begin{align}
    \label{eq:bthetasS}
    \bthetasS \in \argmin_{\btheta \in \sC} \sL_{\sS}(\btheta).
\end{align}
Also, let $\btheta_{0} \in \sN_{\varepsilon}$ be such that $\|\bthetasS - \btheta_{0}\|_{2} \;\leq\; \varepsilon R$. Without loss of generality, we can assume $\btheta_{0} \in \sN_{\varepsilon} \cap \sC$ as otherwise we can simply increase the size of the net inside $\sC$ by $\bigO{|\sN_{\varepsilon}|}$ . From Lipschitz continuity and \cref{eq:approx}, we get 
\begin{align*}
&\sL(\bthetasS) \leq \sL(\btheta_{0}) + \varepsilon R \cdot L(f, \XX, R) \\
&\hspace{-1mm}\leq \frac{1}{1-\varepsilon} \sL_{\sS}(\btheta_{0}) + \varepsilon R \cdot L(f, \XX, R)\\
&\hspace{-1mm}\leq \frac{1}{1-\varepsilon} \Big(\sL_{\sS}(\bthetasS) \hspace{-0.5mm} + \hspace{-0.5mm} \varepsilon R \cdot L(f, \XX, R) \Big)   + \varepsilon R \cdot L(f, \XX, R),
\end{align*}
which gives the lower bound
\begin{align*}
\sL_{\sS}(\bthetasS) &\geq (1-\varepsilon)\sL(\bthetasS) -   \varepsilon (2-\varepsilon) R \cdot L(f, \XX, R).
\end{align*}
Combining this with \cref{eq:param_indep_01}, we get 
\begin{align*}
    \sL(\btheta^{\star}_{\sS}) \hspace{-0.5mm}\leq\hspace{-0.5mm} \sL(\btheta^{\star}) \hspace{-0.5mm}+\hspace{-0.5mm} \frac{\varepsilon}{1-\varepsilon} \Big( \sL(\bthetas)  \hspace{-0.5mm}+\hspace{-0.5mm} (2-\varepsilon) R \cdot L(f, \XX, R)\Big). 
\end{align*}

% \begin{theorem}[Main Result]
%     \label{thm:param_indep}
%     Suppose $\bthetas \in \sC$ is a solution to \cref{eq:loss}. Furthermore, assume that for any given $\btheta \in \sC$ and any $\delta^{\prime} \in (0,1)$, we can sample $\sS$, potentially depending on $\btheta$, such that \cref{eq:approx} holds with the probability at least $1-\delta^{\prime}$.  Suppose $\delta^{\prime} \leq \delta \varepsilon^{p}$ for some $\varepsilon \in (0,1)$ and $\delta \in (0,1)$ and let $\bthetasS$ be defined as in \cref{eq:bthetasS}. Then \cref{eq:goal} holds with probability at least $1-\delta$.
% \end{theorem}

The culmination of the above discussions and derivations leads to the main result of this paper.
\begin{theorem}[Main Result]
    \label{thm:param_indep_02}
    Suppose $\bthetas \in \sC$ is a solution to \cref{eq:loss} with squared loss $\ell(t) = t^2$ and  for some $0 < \beta \leq 1$, we have $ \beta\tau_{i}(\btheta) \leq   \tau_{i}$ for $i=1,2,\ldots,n$ and for all $ \btheta \in \sC$. Consider random sampling according to $\tau_{i}$ with a sample $\sS$ of size at least $\bigO{\left( p \log(p/\delta) + p^{2} \log(p/\varepsilon)\right)/(\beta \varepsilon^{2}})$ for some $\varepsilon \in (0,1)$ and $\delta \in (0,1)$. Let $\bthetasS$ be defined as in \cref{eq:bthetasS}. Then \cref{eq:goal} holds with probability at least $1-\delta$. 
\end{theorem}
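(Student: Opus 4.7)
The plan is to assemble three ingredients already developed in the preceding subsections: (i) the per-$\btheta$ subspace embedding recorded in \cref{eq:approx}, (ii) an $\varepsilon$-net covering argument that upgrades this pointwise guarantee to a uniform one over $\sC$, and (iii) the Lipschitz continuity of $\sL$ and $\sL_{\sS}$ on a bounded ball to interpolate from net points to arbitrary parameters (including $\bthetasS$). The conceptual linchpin is that the hypothesis $\beta\tau_{i}(\btheta) \leq \tau_{i}$ for all $\btheta \in \sC$ strips the sampling distribution of its $\btheta$-dependence, so that a \emph{single} draw $\sS$ can simultaneously embed $\widehat{\FF}^{\star}(\btheta)$ at every $\btheta$ in a finite cover.

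First I would fix $R$ so that $\sC \subseteq \sB^{\star}_{R}$ and construct a standard $\varepsilon$-net $\sN_{\varepsilon} \subseteq \sB^{\star}_{R}$ with $|\sN_{\varepsilon}| \in \bigO{(1/\varepsilon)^{p}}$ (see \cref{sec:lower_bound}). For each fixed $\btheta \in \sN_{\varepsilon}$, because $\tau_{i}$ over-estimates $\beta\tau_{i}(\btheta)$, the near-optimal sampling results recalled in \cref{sec:sampling_nls} yield \cref{eq:approx} at that $\btheta$ with failure probability at most $\delta^{\prime}$, provided $s \in \bigO{p\log(p/\delta^{\prime})/(\beta\varepsilon^{2})}$. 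Setting $\delta^{\prime} = \delta/|\sN_{\varepsilon}|$ and taking a union bound adds an $\bigO{p\log(1/\varepsilon)}$ overhead to $\log(1/\delta^{\prime})$, reproducing the stated sample complexity $\bigO{(p\log(p/\delta)+p^{2}\log(p/\varepsilon))/(\beta\varepsilon^{2})}$. On the resulting good event---of probability at least $1-\delta$---\cref{eq:approx} holds simultaneously at every $\btheta \in \sN_{\varepsilon}$.

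On this good event I would pick $\btheta_{0} \in \sN_{\varepsilon} \cap \sC$ with $\vnorm{\bthetasS - \btheta_{0}}_{2} \leq \varepsilon R$ (inflating the cover inside $\sC$ by only $\bigO{|\sN_{\varepsilon}|}$, as noted in \cref{sec:lower_bound}) and reproduce the chain derived just before the theorem statement: Lipschitz continuity of $\sL$ and $\sL_{\sS}$ with the uniform constant $L(f,\XX,R) \defeq \max_{\sS} L(f,\XX,\sS,R)$, combined with \cref{eq:approx} at $\btheta_{0}$, gives
\begin{align*}
\sL(\bthetasS)
&\leq \sL(\btheta_{0}) + \varepsilon R\, L(f,\XX,R) \\
&\leq \frac{1}{1-\varepsilon}\,\sL_{\sS}(\btheta_{0}) + \varepsilon R\, L(f,\XX,R) \\
&\leq \frac{\sL_{\sS}(\bthetasS) + \varepsilon R\, L(f,\XX,R)}{1-\varepsilon} + \varepsilon R\, L(f,\XX,R).
\end{align*}
Coupling this with $\sL_{\sS}(\bthetasS) \leq \sL_{\sS}(\bthetas) \leq (1+\varepsilon)\sL(\bthetas)$---which follows from optimality of $\bthetasS$ on $\sC$ (\cref{eq:bthetasS}) and \cref{eq:approx} applied at $\bthetas \in \sC$---and rearranging yields $\sL(\bthetasS) \leq \sL(\bthetas) + \tfrac{\varepsilon}{1-\varepsilon}\bigl(\sL(\bthetas) + (2-\varepsilon)R\, L(f,\XX,R)\bigr)$, which is of the desired form $\sL(\bthetas) + \bigO{\varepsilon}$.

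The main obstacle is asserting \cref{eq:approx} \emph{uniformly} over a continuous parameter space; this is exactly what the hypothesis $\beta\tau_{i}(\btheta) \leq \tau_{i}$ unlocks, since it renders the sampling distribution $\btheta$-free and therefore compatible with a net-then-union-bound argument. A secondary point to verify is that the Lipschitz constant $L(f,\XX,R)$ is indeed finite: although $\sL_{\sS}$ depends on the random multiset $\sS$, the maximum defining $L(f,\XX,R)$ is over only finitely many possible $\sS \subseteq \{1,\ldots,n\}$, as observed in \cref{sec:lower_bound}, so the resulting additive term $\varepsilon R\, L(f,\XX,R)$ is a deterministic $\bigO{\varepsilon}$ quantity.
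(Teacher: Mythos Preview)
Your proposal is correct and mirrors the paper's own argument essentially step for step: the per-$\btheta$ embedding \cref{eq:approx} under near-optimal sampling, the $\varepsilon$-net of cardinality $\bigO{(1/\varepsilon)^p}$ combined with a union bound (yielding the $p^2\log(p/\varepsilon)$ term), the Lipschitz interpolation via $L(f,\XX,R)$, and the final chaining through \cref{eq:param_indep_01} are exactly the ingredients assembled in \cref{sec:lower_bound}. The only cosmetic point is that when you invoke \cref{eq:approx} ``at $\bthetas$'' you are implicitly adding $\bthetas$ as one more deterministic point to the union bound, which is harmless and is the same implicit move the paper makes.
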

\begin{remark}
    \cref{thm:param_indep_02} applies to more generally than single index models. However, when adapted to \cref{ex:glm_estimate}, it provides a result similar to \citet{gajjar2024agnostic}. While both works have the same dependence on $\varepsilon$, our result has quadratic dependence on the dimension, while theirs is linear. Additionally, our approach involves a constrained optimization subproblem \cref{eq:bthetasS}, %requiring the choice of  $\sC$ such that $\bthetas \in \sC$, 
    whereas the subproblems of \citet{gajjar2024agnostic} are unconstrained. In contrast, \citet{gajjar2024agnostic} provided a guarantee of the form \cref{eq:goal_bad} with a potentially large constant $C \gg 1$, while \cref{thm:param_indep_02} provides a more desirable guarantee of the form \cref{eq:goal}.
    %In addition, in addition to requiring Lipschits activation function, we also require monotonic and more recent ones if any, in terms of \cref{thm:param_indep_02} and the underlying assumptions...we have a worse sampling complexity $p^2/\varepsilon^2$ for us vs.\ $p/\varepsilon^{2}$ for \cref{gajjar2024agnostic}, but but we do not need to know Lipschitz constant for the algorithm unlike them but we need to know $R$ in $\sB_{R}^{\star}$. We both need to know Lipschitz constant of $\phi$ for sample size.....they have potentially a bad constant $C$ that multiplies ``OPT'' whereas we do not have that constant...anything else?
\end{remark}
% After establishing that the nonlinearity can be captured with the linear data matrix with an additional misestimation factor error. The below algorithm outlines the solution to the sub-sampled problem with $m$ labeled samples.
% \begin{algorithm}[H]
% \caption{Sampling with Nonlinear Adjoint}
% \label{alg:leverage}
% \begin{algorithmic}[1]
% \INPUT Matrix $\XX \in \mathbb{R}^{n \times d}$, target vector $y \in \mathbb{R}^n$, number of samples $m$.
% \OUTPUT Approximate solution $\btheta_s$ to $\min_{\btheta \in \mathcal{C}} \sL(\btheta, \XX_s)$, where $\XX_s$ is a subsampled version of $\XX$.
% \STATE Compute sampling scores $\tau_i^\XX$ for each row $i=1,\ldots,n$ of $\XX$.
% \STATE Compute the normalized sampling probabilities 
% \begin{align*}
% \tau_{i} = \frac{\tau_i^\XX}{\sum_{j=1}^n \tau_j^\XX}, \quad \text{for } i=1,\ldots,n.
% \end{align*}
% \STATE Construct a sampling matrix $\bSS \in \mathbb{R}^{m \times n}$ according to the above probabilities (each of the $m$ rows of $\bSS$ has exactly one non-zero entry, chosen with probability $\tau_{i}$).
% \STATE Form the subsampled matrix $\XX_s = \bSS\,\XX$.
% \STATE Solve the constrained minimization problem:
% \begin{align*}
% \btheta_s = \arg\min_{\btheta \in \mathcal{D}} \| \sL(\btheta, \XX_s) \|_2^2.
% \end{align*}
% \STATE \textbf{return} $\btheta_s$
% \end{algorithmic}
% \end{algorithm}
% \vspace{-3mm}
\section{Experiments}
\label{sec:experiments}
%We present two complementary sets of experiments focused primarily on supervised learning tasks to demonstrate the effectiveness and versatility of the proposed nonlinear importance scores. First, we consider image classification using four standard datasets such as \texttt{SVHN} (Street View House Numbers) \cite{netzer2011reading}, FER-2013 (Facial Expression Recognition) \cite{goodfellow2013challenges}, NotMNIST \cite{bulatov2011notmnist}, and Quick, Draw (QD) \cite{ha2018neural}, aiming to show that the proposed sampling scores acts as a novel diagnotics tools for complex black-box models to find the most ``important'' points through, complementing other methods in this direction such as \citet{koh2017understanding, shrikumar2017learning} and also serving as an outlier and anomaly detector by isolating points that deviate significantly from the nominal data manifold \cite{schirrmeister2020understanding, qiu2022latent}. Second, we use benchmarking regression datasets such as California Housing Prices \cite{pace1997sparse} and Medical Insurance dataset \cite{lantz2019machine}, with an objective to show that our importance sampling approach outperforms traditional linear sampling strategies and uniform sampling. Throughout these experiments, we employ both single-index models and neural networks to ensure that our findings hold across different model complexities and architectures. Further experimental details are given in \cref{sec:appendix:exp_details}. The code used for our experiments is available \href{https://anonymous.4open.science/}{here}.
We conduct a series of experiments to demonstrate the effectiveness and versatility of the proposed nonlinear importance scores. 
First, we evaluate our approach on several benchmarking regression datasets, including California Housing Prices \cite{pace1997sparse}, Medical Insurance dataset \cite{lantz2019machine}, and Diamonds dataset \cite{wickham2009ggplot2}. % and the Mammographic Mass dataset \cite{mammographic_mass_161}. 
As a proof of concept, we demonstrate that our importance sampling method outperforms traditional linear and uniform sampling strategies by achieving lower training error using fewer samples. 
Second, we consider image classification using four standard datasets: \texttt{SVHN} (Street View House Numbers) \cite{netzer2011reading}, \texttt{FER-2013} (Facial Expression Recognition) \cite{goodfellow2013challenges}, \texttt{NOTMNIST} \cite{bulatov2011notmnist}, and \texttt{QD} (Quick, Draw) \cite{ha2018neural}. Here, we demonstrate that our proposed nonlinear leverage scores in \cref{def:lev_score} serve as a powerful diagnostic tool, identifying the most ``important'' points in training complex models and detecting anomalies by isolating outliers.
We employ both single-index models and neural networks to validate our findings across varying model complexities and architectures. Additional experimental details are provided in \cref{sec:appendix:exp_details}. The code is available \href{https://github.com/prprakash02/Importance-Sampling-for-Nonlinear-Models}{here}.

% \vspace{-1.5mm}
\begin{figure}[!ht]
    \centering
    \subfigure[California Housing Dataset]{
\includegraphics[width=.72\linewidth]{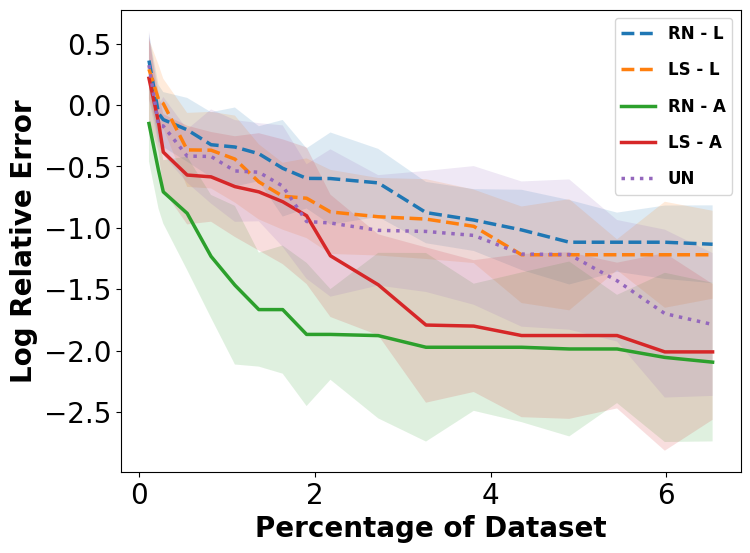}
    }
    
    %\vspace{-1mm}
    \subfigure[Medical Insurance Dataset]{
        \includegraphics[width=.72\linewidth]{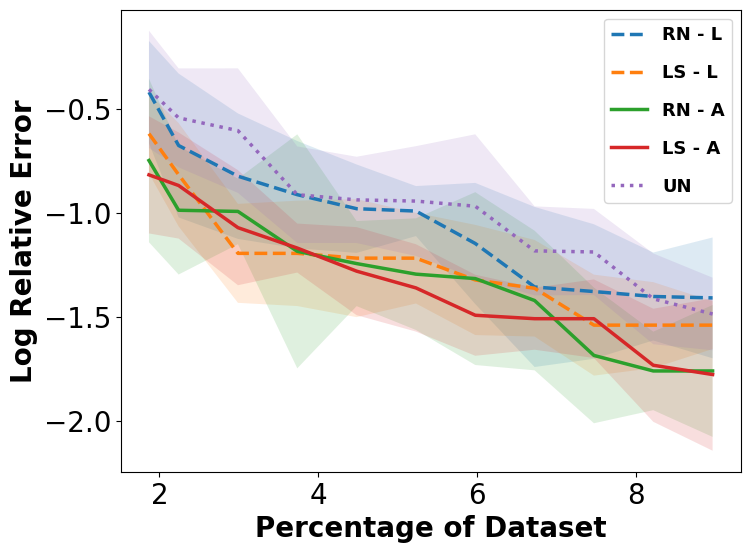}
    }
    
    %\vspace{-1mm}
    \subfigure[Diamonds dataset]{
        \includegraphics[width=.72\linewidth]{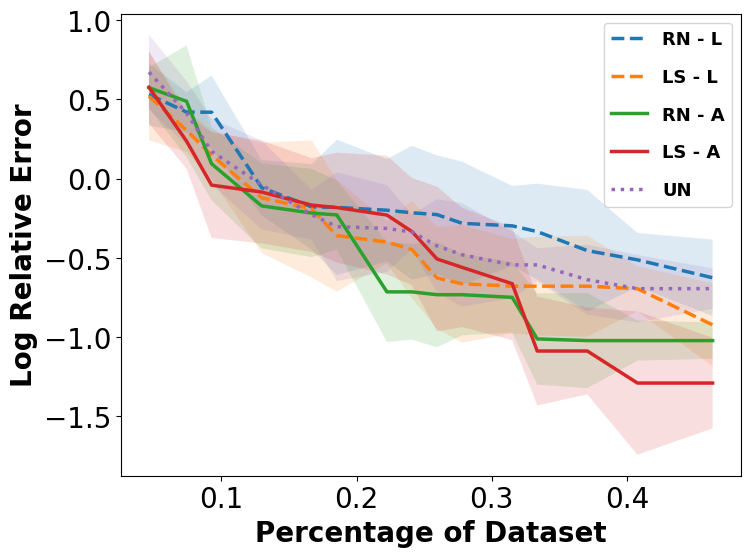}
    }
    
    % \vspace{-1mm}
    % \subfigure[Mammographic Mass dataset]{
    %     \includegraphics[width=.72\linewidth]{images/mam_main_1.png}
    % }
    
    % \vspace{-1mm}
    % \subfigure[Mammographic Mass dataset]{
    %     \includegraphics[width=.72\linewidth]{images/mam_main.png}
    % }

    % \subfigure[Mammographic Mass dataset]{
    %     \includegraphics[width=.72\linewidth]{images/mamo_rel_err.png}
    % }

    % \subfigure[Mammographic Mass dataset (Accuracy not calculated as \%)]{
    %     \includegraphics[width=.72\linewidth]{images/mamo_rel_acc.png}
    % }
    
    % \subfigure[Mammographic Mass dataset (Accuracy calculated as \%)]{
    %     \includegraphics[width=.72\linewidth]{images/mamo_log_rel_acc.png}
    % }
    
    \caption{
    %Comparison of various sampling strategies. The Y-axis represents $\log\left[(\sL(\bthetas) - \sL(\bthetasS))/\sL(\bthetas)\right]$ as a function of the sample size (as a percentage of the total data), where $\bthetasS$ is the optimal parameter obtained by training on the sampled data. ``RN'', ``LS'', and ``UN'' denote Row Norm, Leverage Scores, and Uniform Sampling, respectively, with suffixes ``L'' and ``A'' indicating linear and adjoint-based nonlinear variants. Nonlinear importance scores consistently outperform both linear scores and uniform sampling.
    Comparison of sampling strategies. The Y-axis shows $\log\left[( \sL(\bthetasS) - \sL(\bthetas))/\sL(\bthetas)\right]$ against sample size (as a percentage of total data), where $\bthetasS$ is the optimal parameter from training on sampled data. ``RN'', ``LS'', and ``UN'' denote Row Norm, Leverage Scores, and Uniform Sampling, respectively, with ``L'' and ``A'' indicating linear and adjoint-based nonlinear variants. Nonlinear importance scores consistently outperform all other alternatives.
    % Comparison of sampling strategies.  In subfigures (a)-(c), The Y-axis shows $\log\left[(\sL(\bthetas) - \sL(\bthetasS))/\sL(\bthetas)\right]$ against sample size (as a percentage of total data) where $\bthetasS$ is the optimal parameter from training on sampled data, and in subfigure (d) the accuracies were calculated on the full dataset with the optimal parameter from training on sampled data. ``RN'', ``LS'', and ``UN'' denote Row Norm, Leverage Scores, and Uniform Sampling, respectively, with ``L'' and ``A'' indicating linear and adjoint-based nonlinear variants. Nonlinear importance scores consistently outperform all other alternatives.
    \label{fig:numerical_datasets}}
\end{figure}

\begin{figure*}[!ht]
    \centering

    %----------------- Row 1: \texttt{SVHN} (1 vs 0) and (1 vs 7) -----------------%
    \subfigure[\texttt{SVHN} High (1 vs 0)]{
        \includegraphics[width=0.22\linewidth]{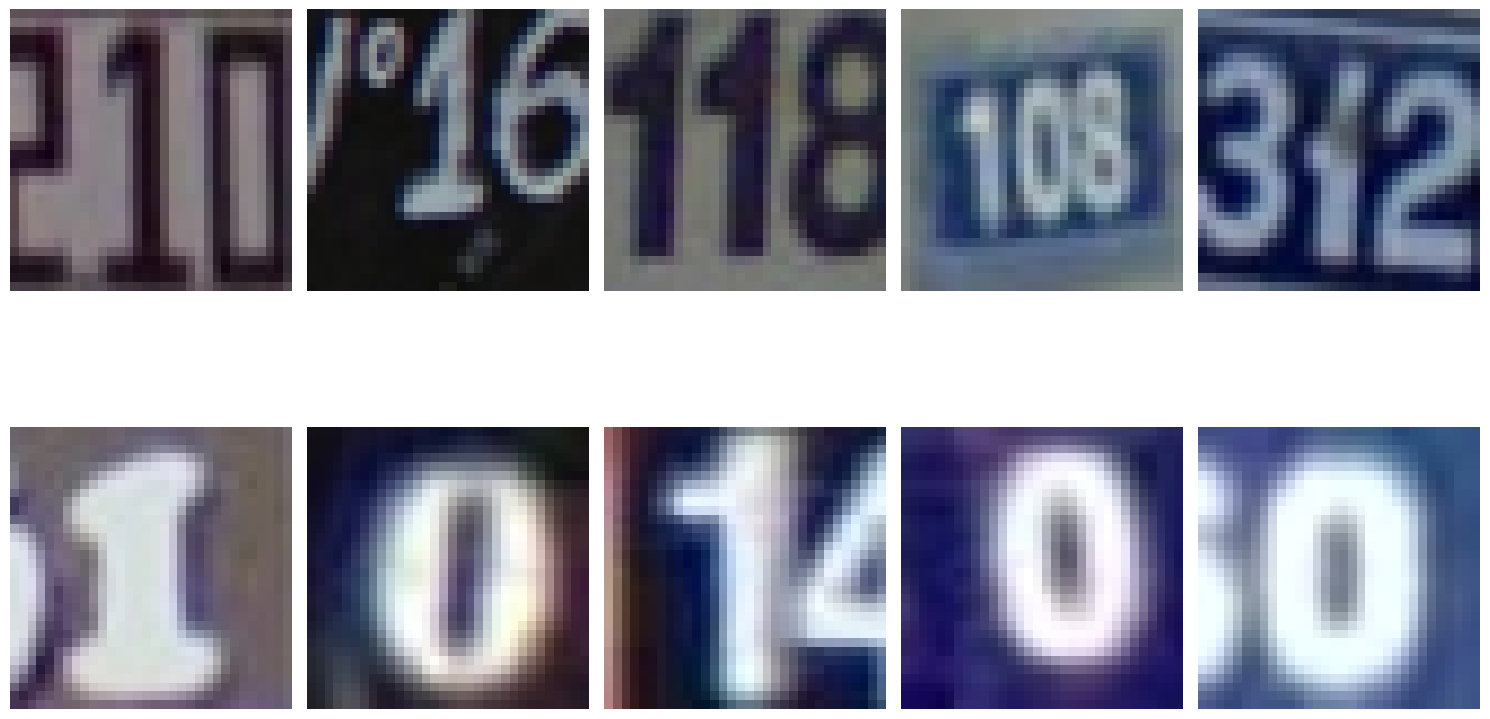}
    }
    \subfigure[\texttt{SVHN} Low  (1 vs 0)]{
        \includegraphics[width=0.22\linewidth]{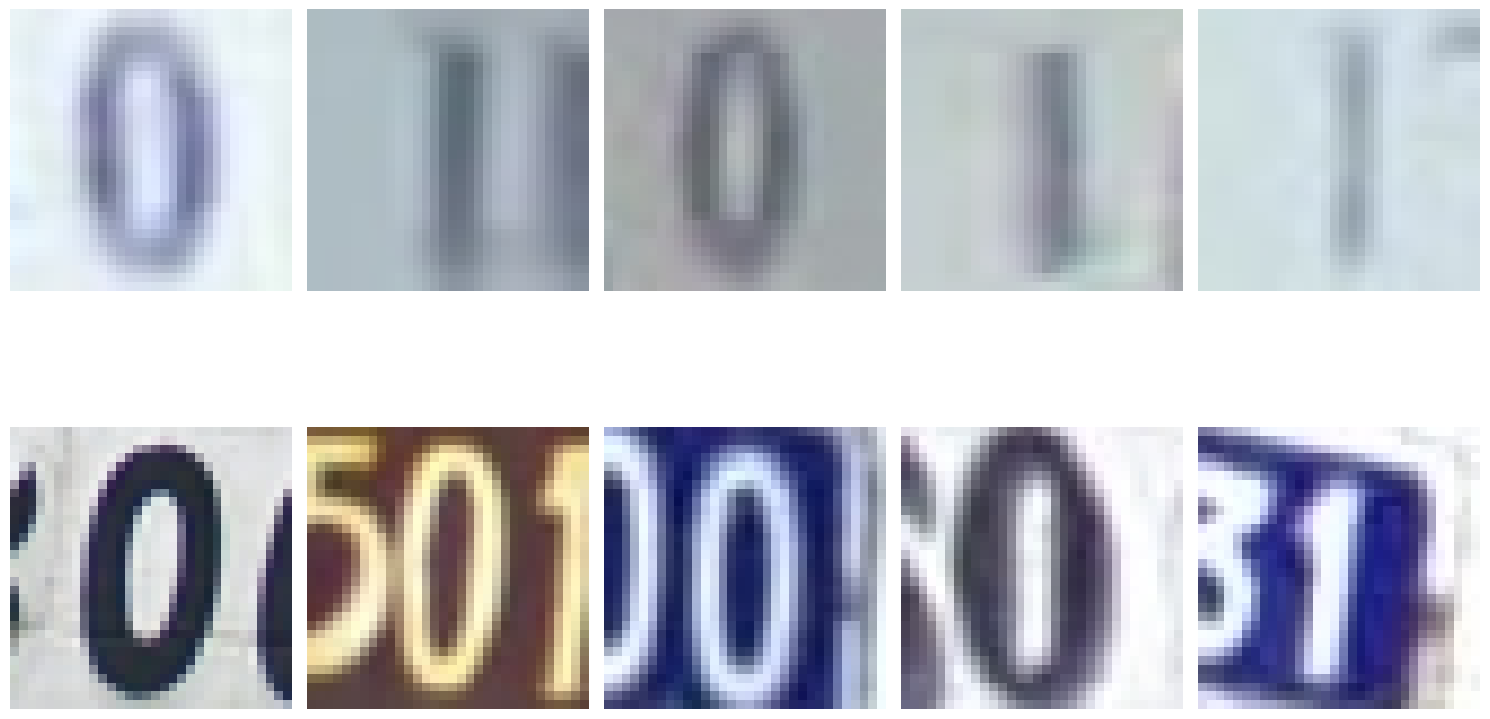}
    }
    \subfigure[\texttt{SVHN} High (1 vs 7)]{
        \includegraphics[width=0.22\linewidth]{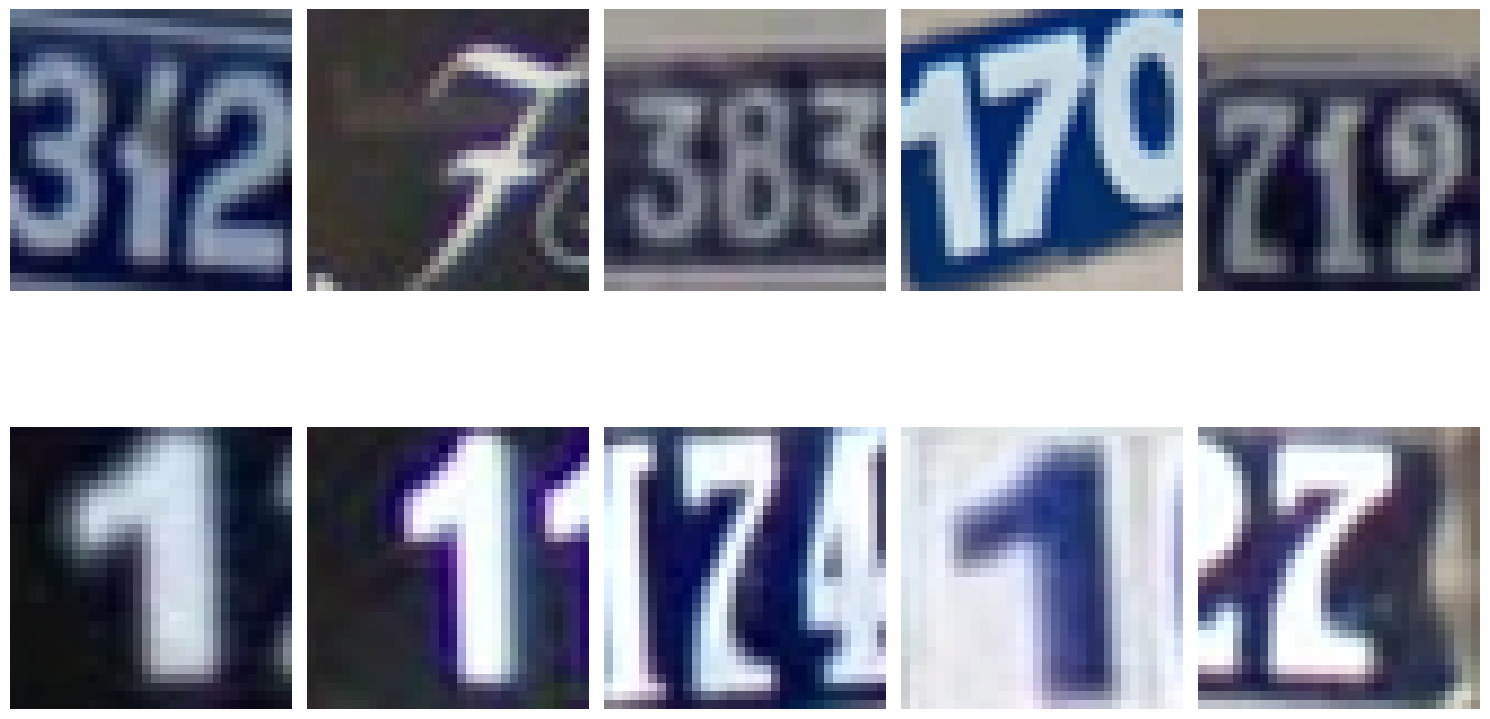}
    }
    \subfigure[\texttt{SVHN} Low   (1 vs 7)]{
        \includegraphics[width=0.22\linewidth]{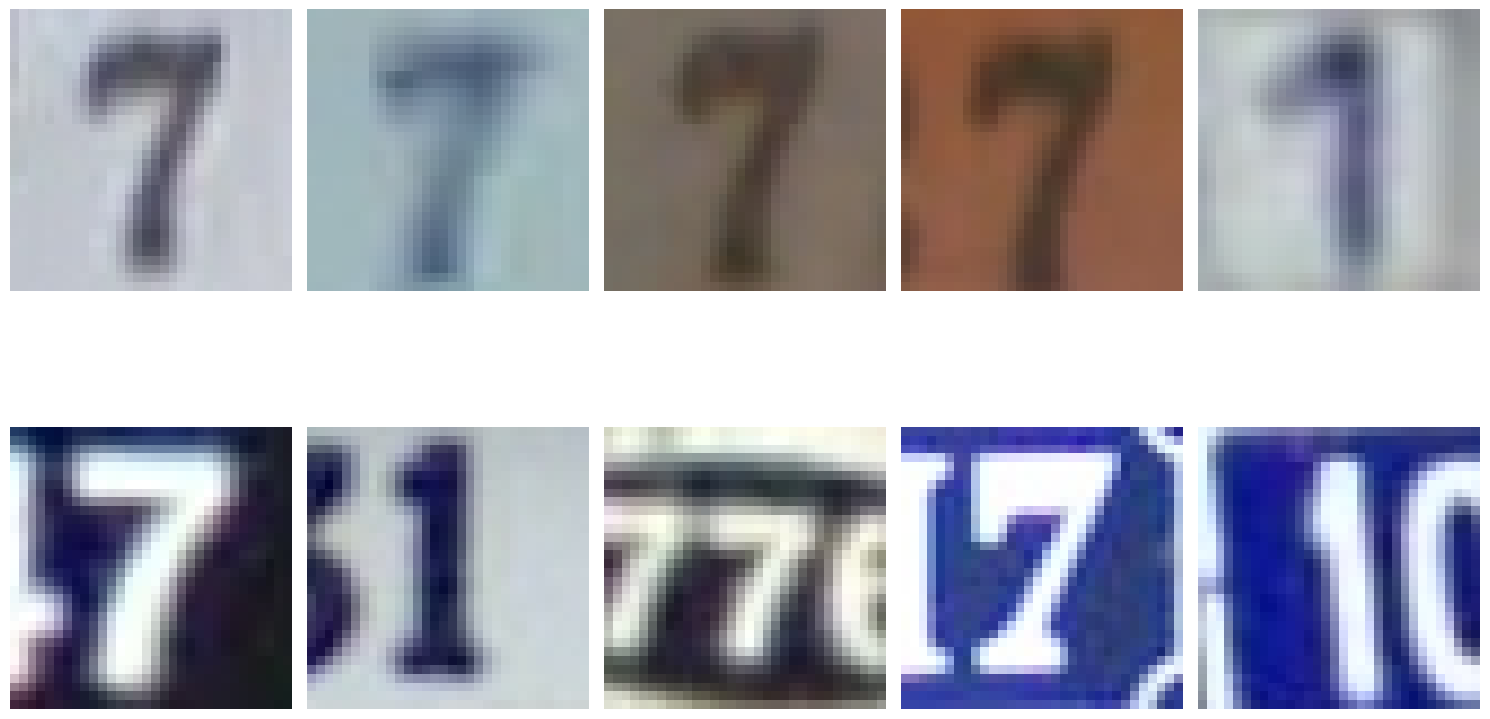}
    }

    % \vspace{1em}
    
    %----------------- Row 2: \texttt{NOTMNIST} (A vs B) and (B vs D) -------------%
    \subfigure[\texttt{NOTMNIST} High (A vs B)]{
        \includegraphics[width=0.22\linewidth]{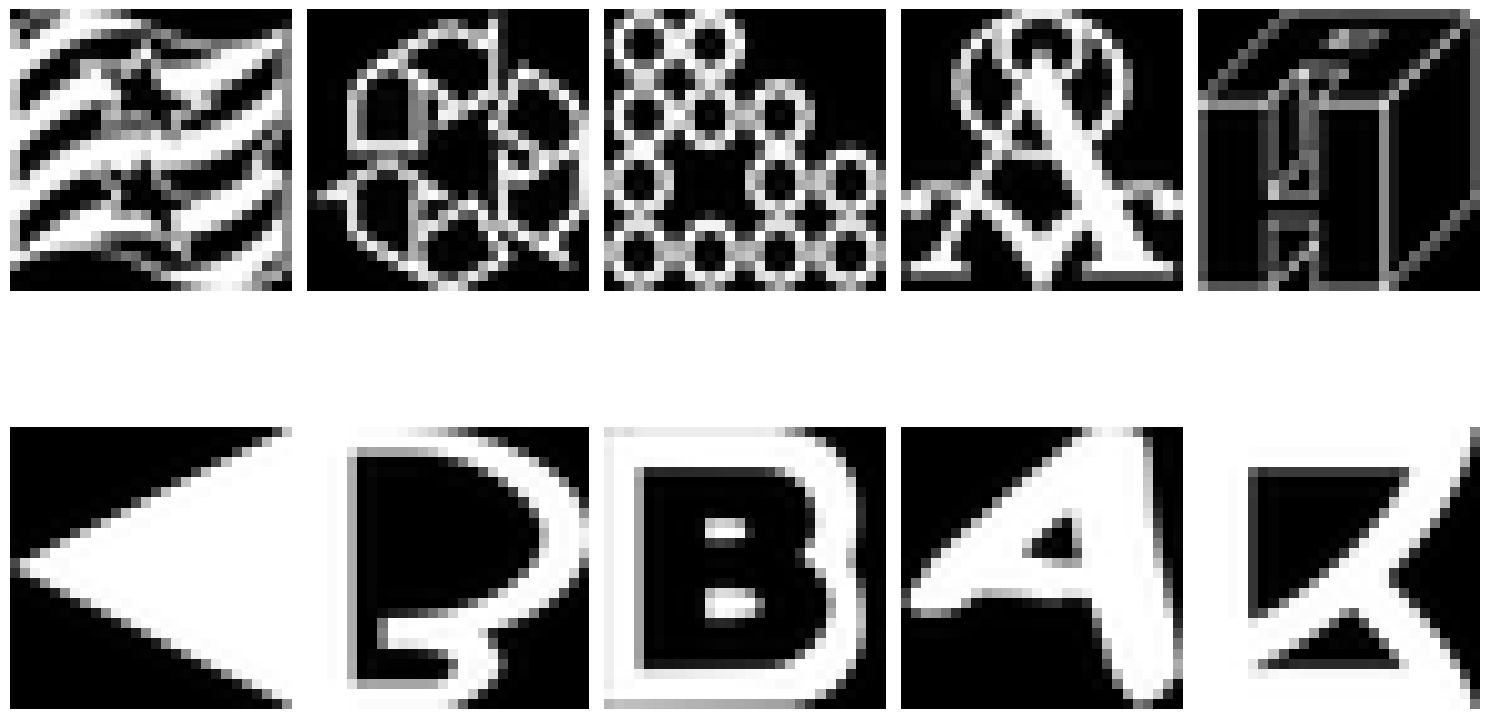}
    }
    \subfigure[\texttt{NOTMNIST} Low   (A vs B)]{
        \includegraphics[width=0.22\linewidth]{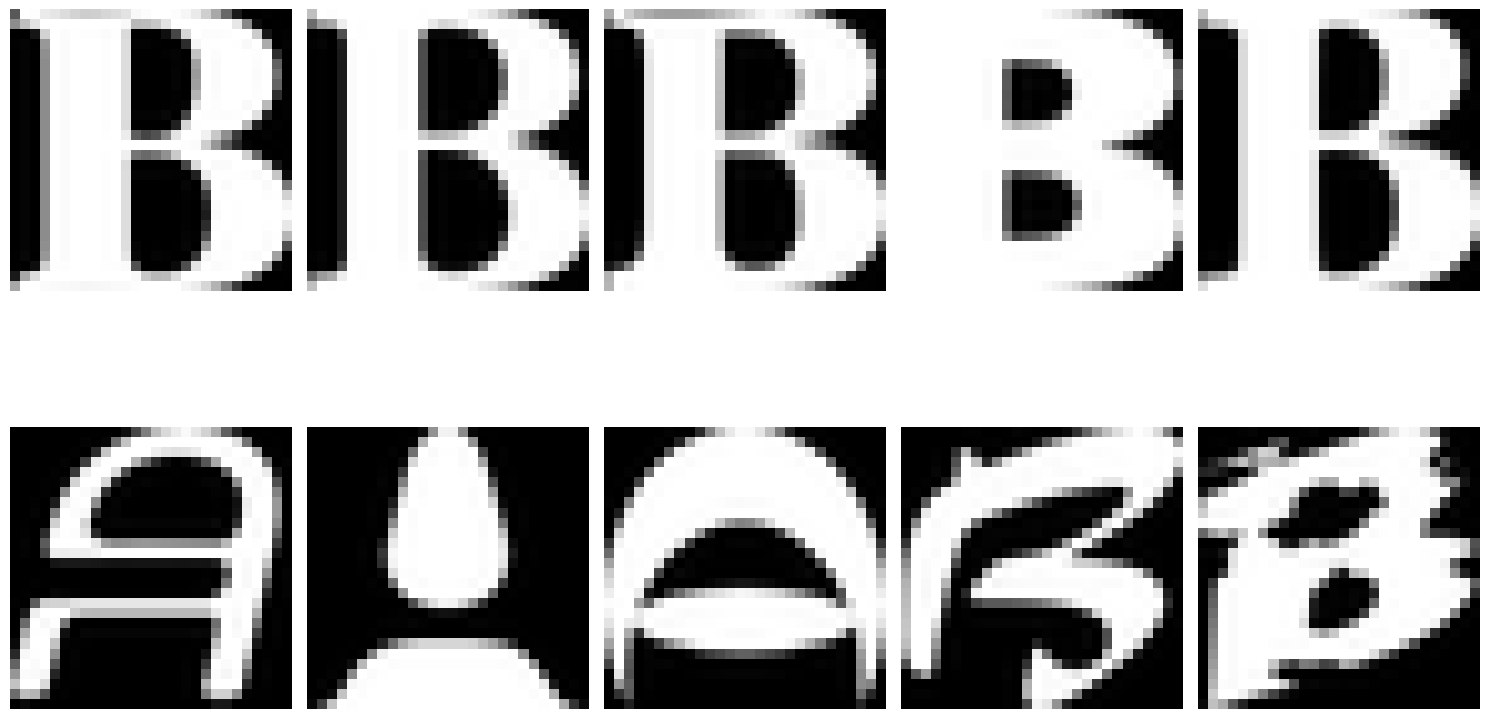}
    }
    \subfigure[\texttt{NOTMNIST} High (B vs D)]{
        \includegraphics[width=0.22\linewidth]{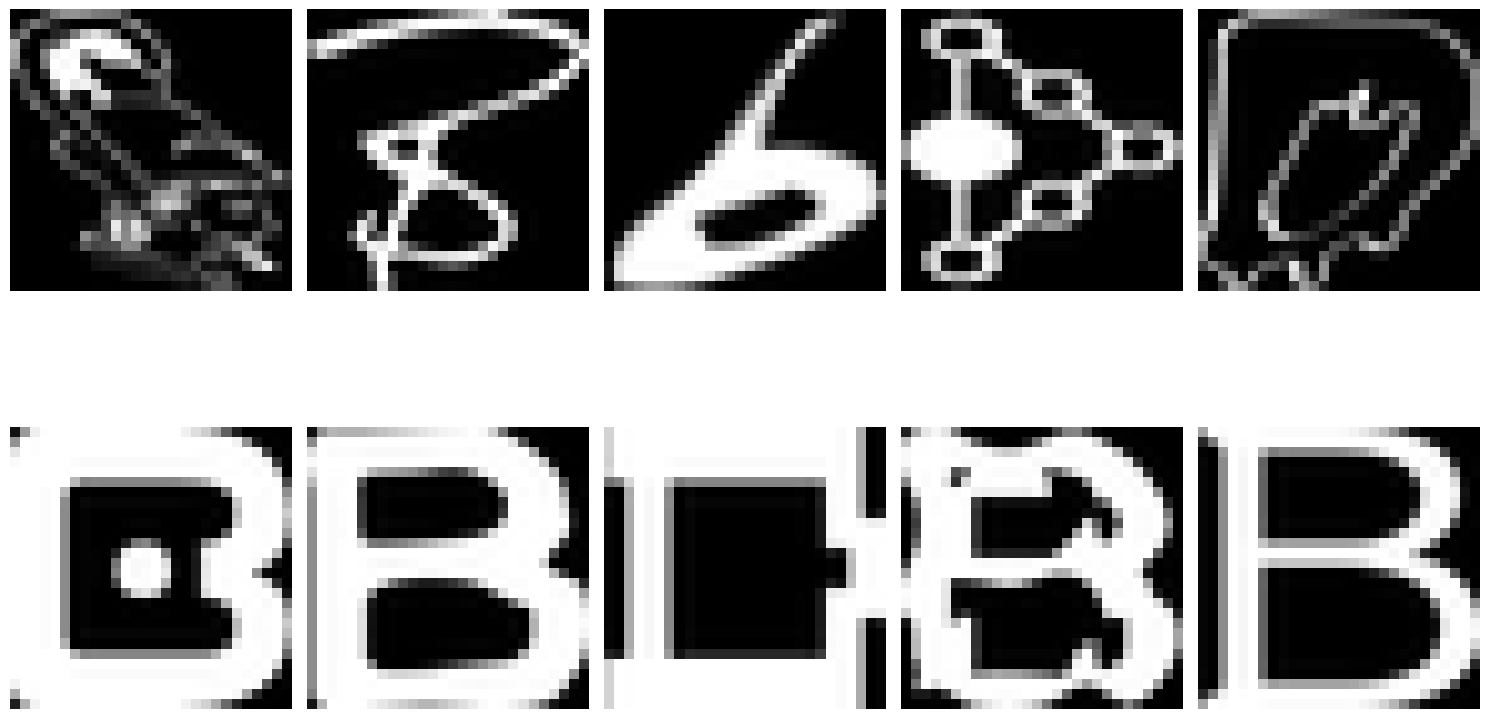}
    }
    \subfigure[\texttt{NOTMNIST} Low   (B vs D)]{
        \includegraphics[width=0.22\linewidth]{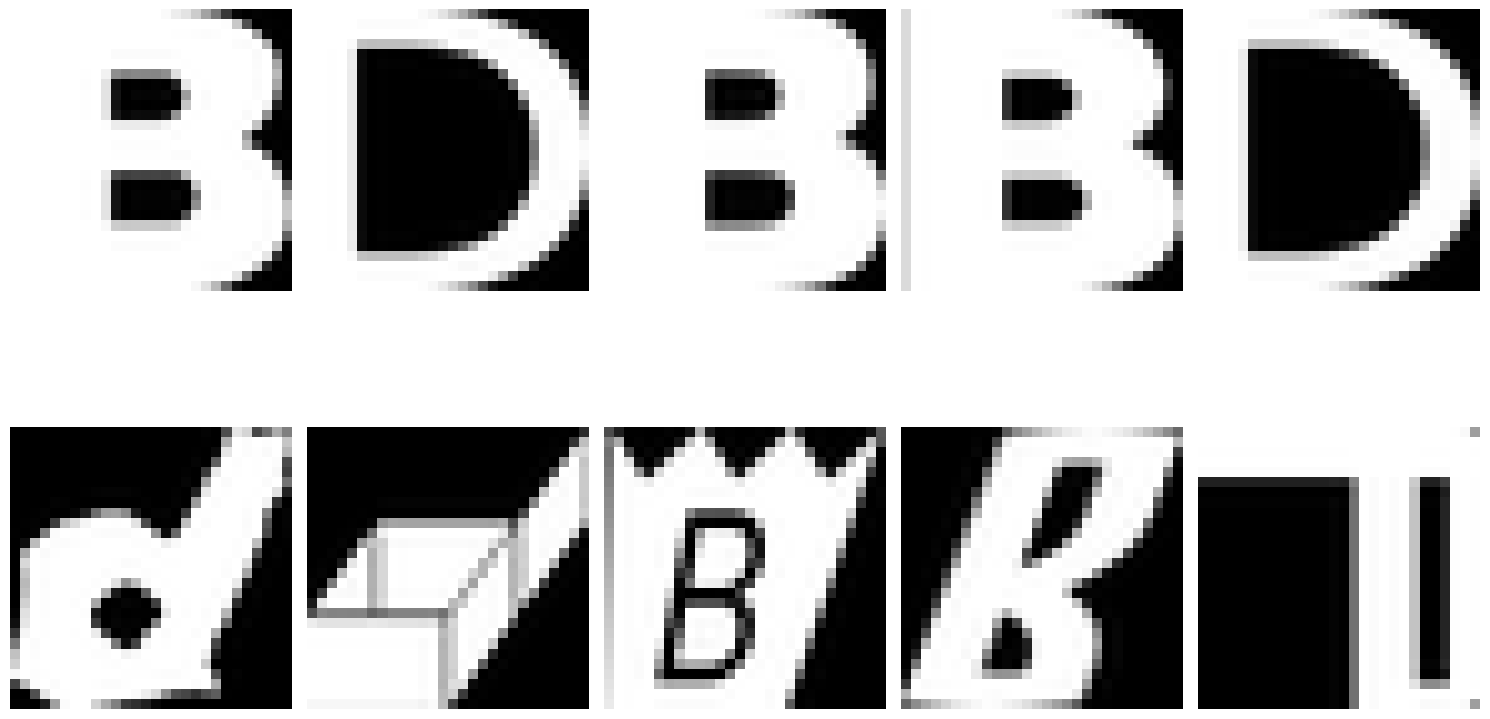}
    }

    % \vspace{1em}

    %----------------- Row 3: \texttt{Quick Draw} and \texttt{FER} ------------------------%
    \subfigure[\texttt{QD} High]{
        \includegraphics[width=0.22\linewidth]{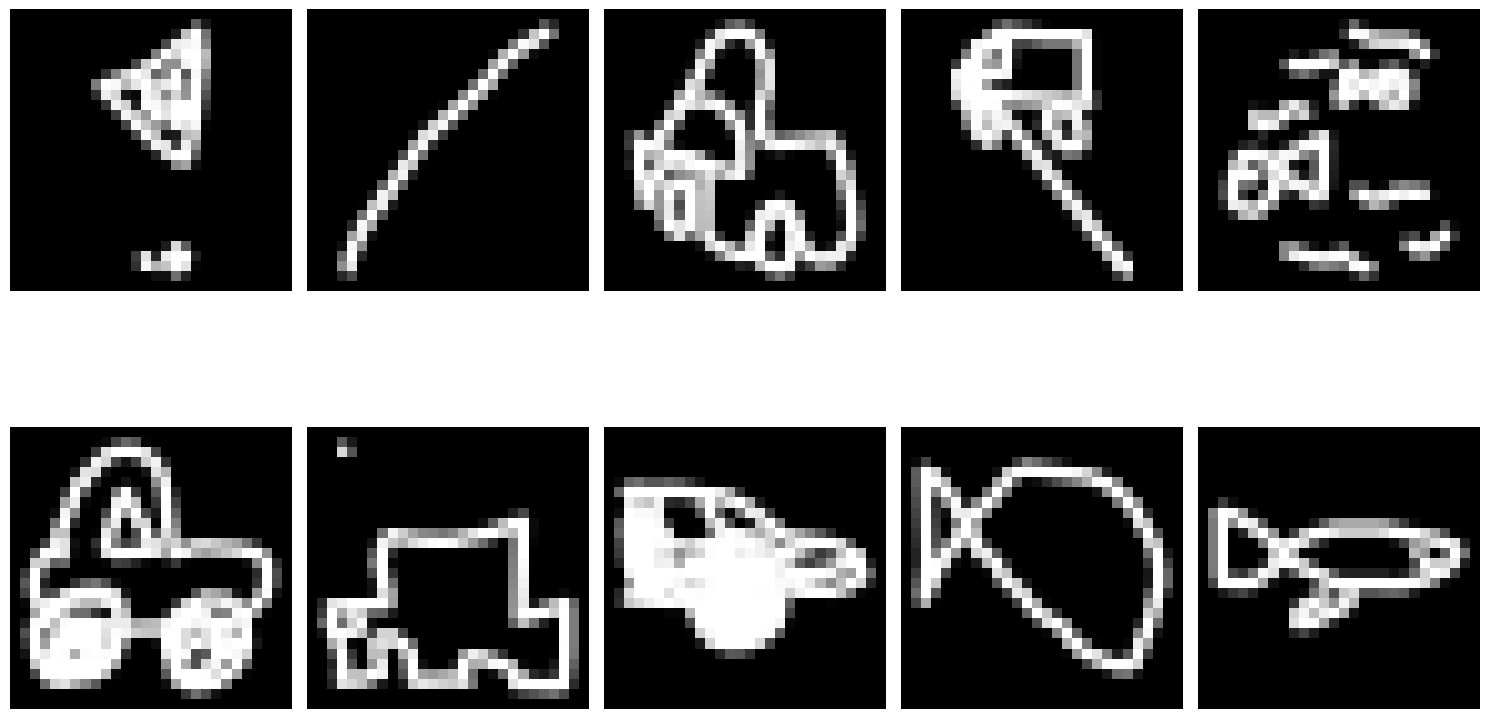}
    }
    \subfigure[\texttt{QD} Low  ]{
        \includegraphics[width=0.22\linewidth]{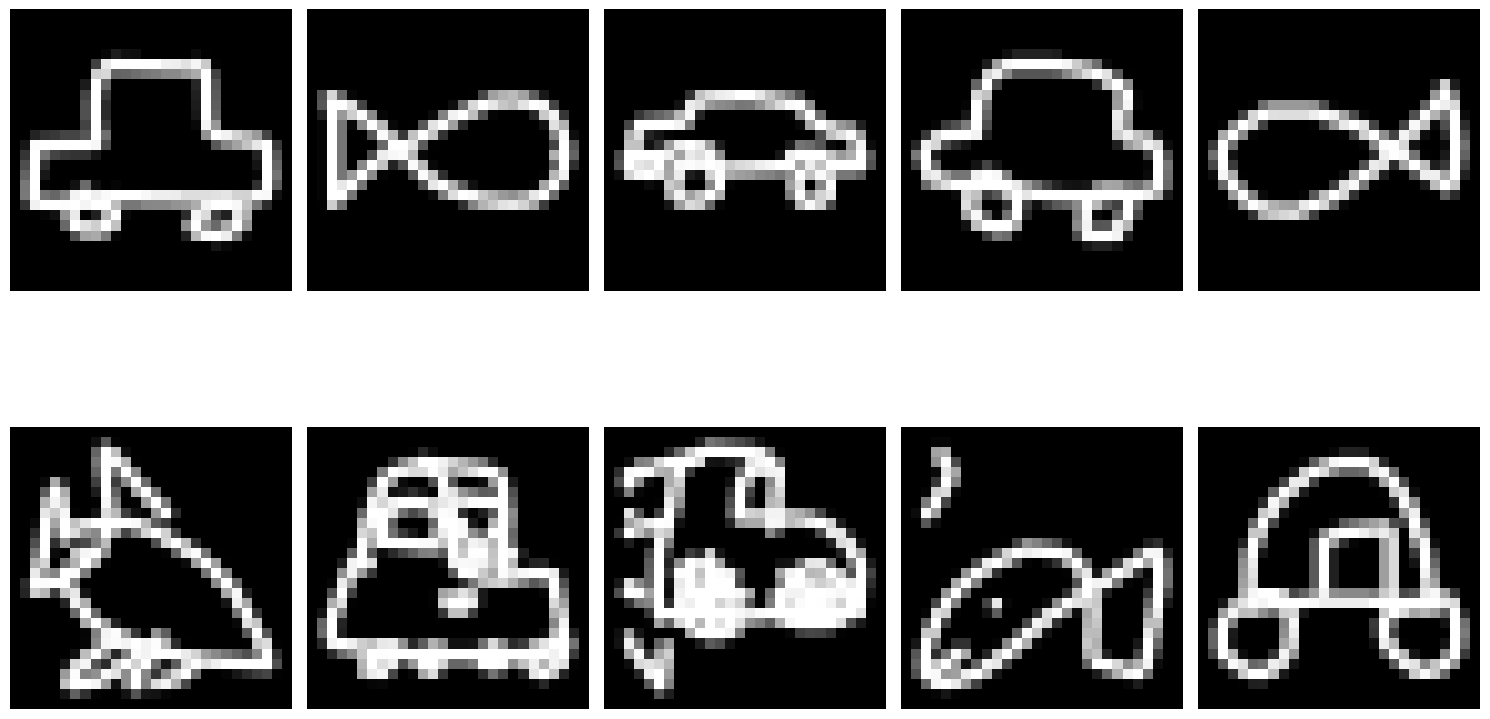}
    }
    \subfigure[\texttt{FER} High]{
        \includegraphics[width=0.22\linewidth]{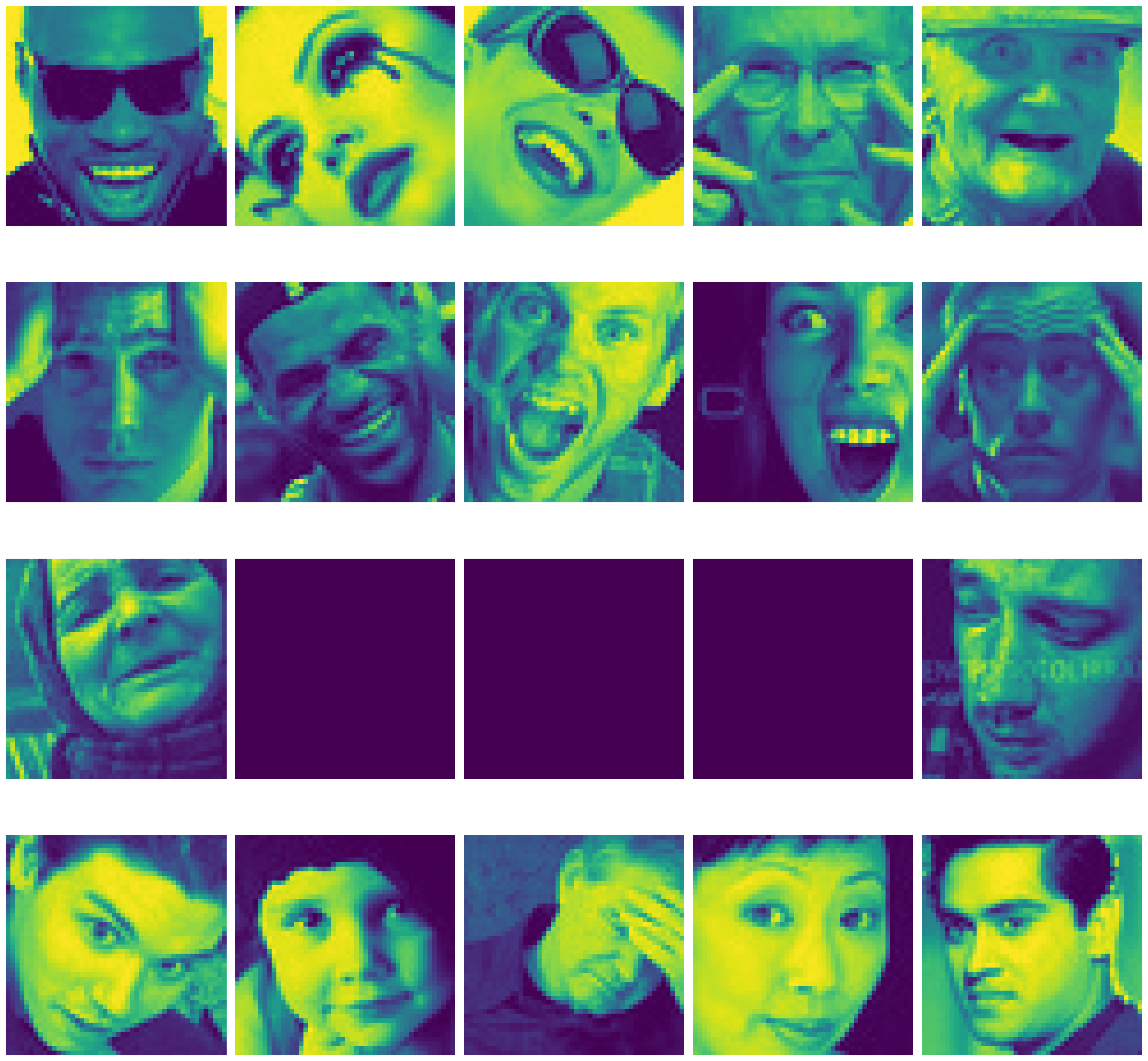}
    }
    \subfigure[\texttt{FER} Low  ]{
        \includegraphics[width=0.22\linewidth]{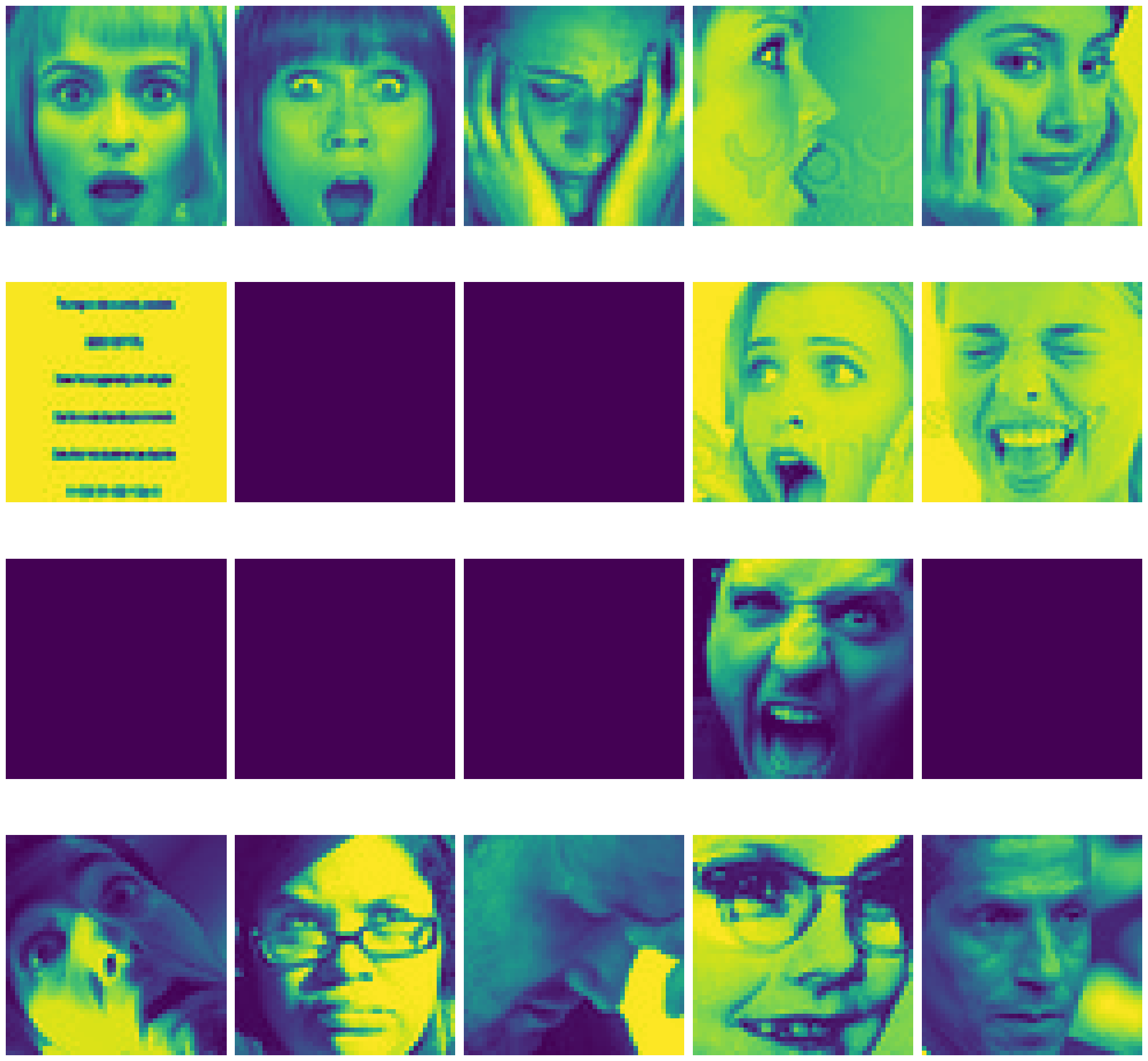}
    }
    % \caption{Comparisons of images with high and low linear/nonlinear leverage scores across \texttt{SVHN}, \texttt{NOTMNIST}, \texttt{QD}, and \texttt{FER} datasets. ``High'' and ``Low.'' refer, respectively, to the images with the highest and lowest scores. In subfigures (a)-(j), the top row represents the images selected using nonlinear leverage score in \cref{def:lev_score}, while the bottom row represents those using the linear leverage score of the original dataset. 
    % It can clearly be seen  that samples with higher nonlinear leverage scores contain distinct patterns and are harder to classify, while those with lower scores are straightforward. In contrast, standard linear leverage scores select more less meaningful/insightful samples. Notably, our method method identifies mislabeled and noisy samples with high scores, which represent outliers. 
    % % In \texttt{FER}, the top rows represent model trained with $\approx$75\% accuracy in training data, followed by $\approx$65\%, $\approx$55\% accuracies and the last row represents linear leverage scores.}
    % In \texttt{FER}, the top three rows show models at $\approx$75\%, $\approx$65\%, $\approx$55\% training accuracy, while the bottom row shows linear leverage scores.
    % The under-trained models highlight blank images or extreme expressions, whereas fully trained models detect subtler emotions and facial characteristics such as accessories, tears, and aging.
    % \label{fig:all_datasets}}
    \caption{
    Comparisons of high and low linear/nonlinear leverage scores across multiple datasets. ``High'' and ``Low'' refer to images with the highest and lowest scores, respectively. In subfigures (a)-(j), the top row shows images selected using nonlinear leverage scores (\cref{def:lev_score}), while the bottom row uses linear leverage scores. Samples with higher nonlinear scores contain distinct patterns and are harder to classify, while lower scores correspond to straightforward samples. In contrast, linear scores select less insightful samples. Notably, our method identifies mislabeled and noisy samples (outliers) with high scores. In \texttt{FER}, the top three rows represent the scores calculated at  $\approx$75\% (final), $\approx$65\%, and $\approx$55\% (initial) training accuracy, while the bottom row uses linear scores. Under-trained models highlight extreme expressions, while fully trained models detect subtler emotions and facial features like accessories, tears, and aging.
    \label{fig:all_datasets}}
\end{figure*}

\vspace{-1mm}
\noindent
\textbf{Regression Tasks.}  % \paragraph{Quantitative Experiments.}
We compare the efficacy of three sampling methods: uniform sampling, linear importance sampling (based on leverage scores and row-norms of the original dataset), and nonlinear importance sampling as defined in \cref{def:lev_score,def:row_norm}. While computing nonlinear importance scores directly is impractical in real-world scenarios, these experiments serve to validate that such scores effectively identify critical samples. 
As shown in \cref{fig:numerical_datasets}, nonlinear importance sampling more effectively identifies impactful samples, achieving lower MSE with fewer training points compared to uniform and linear sampling. The Y‑axis for regression tasks reports relative error on a logarithmic scale, so even small vertical shifts represent substantial absolute improvements.
% \vspace{-3mm}

\noindent
\textbf{Classification Tasks.} % \paragraph{Qualitative Experiments.}
We convert the datasets into binary classification tasks, comparing visually similar (``like'') and dissimilar (``unlike'') classes, in \texttt{SVHN} and \texttt{NotMNIST}. ``like'' classes are those that are harder to distinguish without prior knowledge, while ``unlike'' classes are more easily separable. In \texttt{SVHN}, we examine two scenarios: distinguishing the clearly distinct digits ``1'' vs.\ ``0'' and the more visually similar ``1'' vs.\ ``7.'' Similarly, in \texttt{NotMNIST}, we consider letter pairs (A vs.\ B) and (B vs.\ D).  
We also analyze more complex, noisy datasets like \texttt{QD}, where we distinguish between ``Fish'' and ``Car.'' For \texttt{FER}, which groups expressions by valence (positive vs. negative) based on \cite{mollahosseini2017affectnet}, we track the evolution of important samples as the model approaches optimal parameters ($\approx$55\% at initialization, $\approx$65\%, and $\approx$75\% accuracy at termination).  

\cref{fig:all_datasets} presents images with the highest (most unique) and lowest (easiest to classify) nonlinear leverage scores for each binary task. 
The results clearly show that samples with higher nonlinear leverage scores contain distinct patterns and are harder to classify, while those with lower scores are straightforward. In contrast, standard linear leverage scores select more random and less meaningful/insightful samples. Notably, our method identifies mislabeled and noisy samples with high scores, which represent the outliers. In \texttt{FER}, under-trained models highlight blank or extreme valence, whereas trained models detect subtler emotions and facial characteristics such as accessories, tears, and aging. To demonstrate that nonlinear scores also yield robust quantitative evidence for our classification experiments, we provide additional numerical results in \cref{sec:appendix:qualitative_numerical_exp}.
Additional images supporting \cref{fig:all_datasets} for all datasets are given in \cref{sec:appendix:exp_results}.

\section{Conclusions and Further Thoughts}
We introduced a unifying framework that extends importance sampling from linear to more general nonlinear models, through the notion of the adjoint operator of a nonlinear map. This perspective yields sampling schemes with approximation guarantees analogous to linear subspace embeddings, yet it is applicable to a wide range of models. Our theoretical analysis shows that these generalized scores offer strong performance bounds and our experiments demonstrate concrete benefits in reduced training costs, improved diagnostics, and outlier detection.  %By bridging randomized linear algebra techniques with nonlinear parametrization, our work broadens the scope of importance sampling, offering a principled and efficient pathway for tackling increasingly complex machine learning tasks at scale. %We believe these findings open new avenues in large-scale, high-dimensional learning tasks.

While we did not explicitly address active learning or transfer learning scenarios in this paper, our nonlinear importance sampling approach can naturally be utilized in such contexts. In particular, the method of subsampling based on nonlinear scores can be viewed as a one-shot active learning strategy, where selecting the most informative samples significantly reduces labeling costs. Similarly, these nonlinear scores could also be beneficial in analyzing transfer learning, by effectively identifying samples from a source domain that best represent the characteristics of a target domain.

Although \cref{sec:appendix:general} outlines preliminary steps toward extending the theoretical guarantees beyond squared loss objectives, fully generalizing them remains an avenue for future work. By estimating nonlinear importance scores without the model parameters, we substantially reduce the classic “chicken-and-egg” coupling between sampling and estimation. The subsequent constrained optimization still introduces a mild dependence on the unknown solution, leaving open challenges for fully parameter-agnostic importance sampling in nonlinear settings.

As shown, in many cases the computational cost of approximating our nonlinear importance scores is comparable to that of existing sampling techniques in the linear regime, as both rely on similar fundamental operations. In such cases, the only additional overhead typically stems from computing the nonlinear dual matrix. Thus, our framework in many cases does not introduce any fundamentally new computational bottleneck beyond those already present in linear importance sampling methods. As a result, the practical runtime overhead is largely governed by implementation-level factors, such as the choice of optimization algorithm and hyperparameter settings.

\newpage
\section*{Acknowledgments}
We sincerely thank Cameron Musco for their valuable feedback and discussions. This research was partially supported by the Australian Research Council through an Industrial Transformation
Training Centre for Information Resilience (IC200100022).

\section*{Impact Statement}
This paper presents work whose goal is to advance the field of 
Machine Learning. There are many potential societal consequences 
of our work, none which we feel must be specifically highlighted here.

% In the unusual situation where you want a paper to appear in the
% references without citing it in the main text, use \nocite
\nocite{langley00}

\bibliography{references}
\bibliographystyle{icml2025}

%%%%%%%%%%%%%%%%%%%%%%%%%%%%%%%%%%%%%%%%%%%%%%%%%%%%%%%%%%%%%%%%%%%%%%%%%%%%%%%
%%%%%%%%%%%%%%%%%%%%%%%%%%%%%%%%%%%%%%%%%%%%%%%%%%%%%%%%%%%%%%%%%%%%%%%%%%%%%%%
% APPENDIX
%%%%%%%%%%%%%%%%%%%%%%%%%%%%%%%%%%%%%%%%%%%%%%%%%%%%%%%%%%%%%%%%%%%%%%%%%%%%%%%
%%%%%%%%%%%%%%%%%%%%%%%%%%%%%%%%%%%%%%%%%%%%%%%%%%%%%%%%%%%%%%%%%%%%%%%%%%%%%%%
\newpage
\appendix
\onecolumn

\section{Appendix}

\subsection{Proof of \cref{prop:f*_composit}}
\label{sec:appendix:proofs}
\begin{proof}
We first note that by Euler's homogeneous function theorem, the gradient of $h$ is positively homogeneous of degree $ \alpha - 1$, i.e., 
\begin{align*}
    \frac{\partial}{\partial \btheta} h(t \btheta) = t^{\alpha-1} \frac{\partial}{\partial \btheta} h(\btheta), \quad \forall t > 0.
\end{align*}
%Further, assume that $g(0) = 0$ (since $h(\btheta)$ is positively homogeneous, it naturally follows that $ h(\zero) = 0 $. 
We have,
\begin{align*}
    \ff^{\star}(\btheta) &= \int_{0}^{1} \frac{\partial}{\partial \btheta} f(t \btheta) \df t =\int_{0}^{1} \frac{\partial}{\partial \btheta} h(t \btheta) g^{\prime}(h(t \btheta)) \df t \\
    &= \int_{0}^{1} t^{\alpha-1}\frac{\partial}{\partial \btheta} h(\btheta) g^{\prime}(t^{\alpha}h(\btheta)) \df t 
    = \left( \int_{0}^{1} t^{\alpha-1} g^{\prime}(t^{\alpha}h(\btheta)) \df t\right) \frac{\partial}{\partial \btheta} h(\btheta). \tageq\label{eq:zero}
\end{align*}
Now letting 
\begin{align*}
    t  = \left(\frac{s}{h(\btheta)}\right)^{1/\alpha},
\end{align*}
gives
\begin{align*}
    \df t = \frac{1}{\alpha h(\btheta)}  \left(\frac{s}{h(\btheta)}\right)^{(1-\alpha)/\alpha} \df s.
\end{align*}
It follows that
\begin{align*}
    \ff^{\star}(\btheta) &= \int_{0}^{h(\btheta)} \left(\frac{s}{h(\btheta)}\right)^{(\alpha-1)/\alpha} \frac{1}{\alpha h(\btheta)}  \left(\frac{s}{h(\btheta)}\right)^{(1-\alpha)/\alpha} g^{\prime}(s) \df s \hspace{0.2cm} \frac{\partial}{\partial \btheta} h(\btheta) \\
    &= \frac{\partial h(\btheta)/\partial \btheta }{\alpha h(\btheta)}\left( \int_{0}^{h(\btheta)} g^{\prime}(s) \df s\right),
    \end{align*}
and hence,
\begin{align*}
    \ff^{\star}(\btheta) = \left(\frac{g(h(\btheta)) - g(0)}{\alpha \left(h(\btheta)\right)} \right)\frac{\partial}{\partial \btheta} h(\btheta). 
\end{align*}
If $\btheta$ is such that $h(\btheta) = 0$, then from \cref{eq:zero}, we get
\begin{align*}
    \ff^{\star}(\btheta) = \left(\frac{g^{\prime}(0)}{\alpha} \right)\frac{\partial}{\partial \btheta} h(\btheta). 
\end{align*}
\end{proof}

\subsection{Norm Score Approximation for \cref{ex:nn_estimate}}
\label{sec:appendix:nn}
Consider \cref{ex:nn} with $\phi$  such that such that $c_1 \leq  (\phi(t)-\phi(0))^{2}/t^{2} \leq c_2$ for some $0 < c_1 \leq c_2 < \infty$ and for all $t\in \sT$ for some set of interest $\sT$, e.g., Swish-type or linear output layer. 
Recall that $ \btheta = [\btheta_{1},\ldots,\btheta_m] $ where $ \btheta_{j} = [a_{j},\bb_{j}] $. Also denote  $ \btheta^{\star} = [\btheta_{1}^{\star},\ldots,\btheta_{m}^{\star}] $ where $\btheta^{\star}_{j} = [a_{j}^{\star},\bb_{j}^{\star}]$. We get 
\begin{align*}
\vnorm{\ff^{\star}(\btheta)}^{2} &= \sum_{j=1}^{m} \vnorm{\rr^{\star}(\btheta_{j})}^{2} 
= \sum_{j=1}^{m} \gamma_{j}^{2} \left( 
        \left[\max\left\{\dotprod{\bb_{j},\xx},0\right\}\right]^{2} + 
        a_{j}^{2} \cdot \vnorm{\xx}^{2} \cdot \indic{\dotprod{\bb_{j},\xx} > 0}\right),
\end{align*}
where
\begin{align*}
\gamma_j \defeq 
	 \frac{\phi(a_{j} \cdot   \max\left\{\dotprod{\bb_{j},\xx},0\right\}) - \phi(0)}{2 a_{j} \cdot \max\left\{\dotprod{\bb_{j},\xx},0\right\}}.
\end{align*}
For any $\xx_{i}$, denote
\begin{align*}
    f_{i}(\btheta) = \sum_{j=1}^{m} r_{i}(\btheta_{j}) = \sum_{j=1}^{m} \phi_{i}(a_{j} \cdot   \max\left\{\dotprod{\bb_{j},\xx_{i}},0\right\}).
\end{align*}
Suppose for some $j\in\{1,2,\ldots,m\}$, $\dotprod{\bb_{j},\xx_{i}} > 0$, as otherwise $\ff^{\star}_{i}(\btheta) = \zero$. We have,
\begin{align*}
\gamma^2_j a_j^2 \vnorm{\xx_{i}}^{2} &\leq \sum_{j=1}^{m} \gamma_{j}^{2} \left( 
        \left[\max\left\{\dotprod{\bb_{j},\xx_{i}},0\right\}\right]^{2} + 
        a_{j}^{2} \cdot \vnorm{\xx_{i}}^{2} \cdot \indic{\dotprod{\bb_{j},\xx_{i}} > 0}
    \right) \leq \left( \sum_{j=1}^{m} \gamma^2_j \left( \vnorm{\bb_{j}}^{2} + a_j^2 \right) \right) \vnorm{\xx_{i}}^{2}.
\end{align*}
By assumption on $\phi$, it follows that 
\begin{align*}
c_1 a_j^2 \vnorm{\xx_{i}}^{2} &\leq \sum_{j=1}^{m} \gamma_{j}^{2} \left( 
        \left[\max\left\{\dotprod{\bb_{j},\xx_{i}},0\right\}\right]^{2} + 
        a_{j}^{2} \cdot \vnorm{\xx_{i}}^{2} \cdot \indic{\dotprod{\bb_{j},\xx_{i}} > 0}
    \right) \leq c_2 \left( \sum_{j=1}^{m} \left( \vnorm{\bb_{j}}^{2} + a_j^2 \right) \right) \vnorm{\xx_{i}}^{2}.
\end{align*}
Assume $a^{\star}_{j} \neq 0$ for all $j \in {1, 2, \ldots, m}$. 
%If $a^{\star}_{j} = 0$ for some $j$, we can simply eliminate all connections leading to $a^{\star}_{j}$ and consider a network with $m-1$ hidden units. 
Let $l > 0$ be such that $\min_{j} (a^{\star}_{j})^{2} \geq  l$ and  $0 < u < \infty $ be such that $\sum_{j=1}^{m} \left( \vnorm{\bb^{\star}_{j}}^{2} + (a^{\star}_j)^2 \right) \leq u$. Define the set 
\begin{align*}
\sC \defeq \left\{ [a_{1},\bb_{1},\ldots,a_{m},\bb_{m}] \mid  \min_{j  = 1,\ldots,m} \; (a^{\star}_{j})^{2} \geq  l, \quad \sum_{j=1}^{m} \left( \vnorm{\bb^{\star}_{j}}^{2} + (a^{\star}_j)^2 \right) \leq u  \right\}.
\end{align*}
Clearly, by construction, $\bthetas \in \sC $. It follows that that for any $\btheta \in \sC$, we have 
\begin{align*}
c_1 l  \vnorm{\xx_{i}}^{2} &\leq \vnorm{\ff^{\star}_{i}(\btheta)}^{2} \leq c_2 u \vnorm{\xx_{i}}^{2},
\end{align*}
which in turn gives
\begin{align*}
\min\{c_1 l,1\}  \left( \vnorm{\xx_{i}}^{2} + m^2\phi_{i}(0)^{2} \right) &\leq \vnorm{\widehat{\ff}^{\star}_{i}(\btheta)}^{2} \leq \max\{c_2 u,1\} \left( \vnorm{\xx_{i}}^{2} + m^2\phi_{i}(0)^{2}\right).
\end{align*}
This implies that 
\begin{align*}
\tau_{i}(\btheta) = \frac{\vnorm{\widehat{\ff}^{\star}_{i}(\btheta)}_{2}^{2}}{\vnorm{\widehat{\FF}^{\star}(\btheta)}^{2}_{\textnormal{F}}} \leq \left(\frac{\max\{c_2 u,1\}  }{\min\{c_1 l,1\}}\right)\frac{\vnorm{\xx_i}^{2} + m^2\phi_{i}^{2}(0)}{\vnorm{\XX}^{2}_{\textnormal{F}} + m^2 \sum_{j=1}^{n} \phi_{j}^{2}(0)} = \left(\frac{\max\{c_2 u,1\}  }{\min\{c_1 l,1\}}\right) \tau_{i},
\end{align*}
where $\tau_{i}$ is the norm score for the $i\th$ row of
\begin{align*}
    \widehat{\XX} = \begin{bmatrix}
        \xx^{\T}_{1} & m\phi_{1}(0) \\ \xx^{\T}_{2} & m\phi_{2}(0)  \\ \vdots & \vdots \\ \xx^{\T}_{n} & m\phi_{n}(0) 
    \end{bmatrix} \in \real^{n \times  (d+1)}.
\end{align*}
This allows us to pick $\beta = {\min\{c_1 l,1\}}/{\max\{c_2 u,1\}}$ in \cref{thm:param_indep_02}.

\subsection{Beyond Nonlinear Least-squares.} 
\label{sec:appendix:general}
Going beyond nonlinear least-squares settings, it turns out that an extension of our adjoint based approach still applies as long as the map $f$ has suitable homogeneity properties. More precisely, consider \cref{eq:loss} and suppose $\ell$ is a nonnegative loss and $ f $ is positively homogeneous\footnote{Recall that a function $\psi$ is positively homogeneous of degree $\alpha $  if $ \psi(t \btheta) = t^{\alpha} \psi(\btheta)  $ for any $ t > 0 $; see \cite{schechter1996handbook} for more details on positive homogeneity.} of degree $\alpha $, e.g., ReLU and varaints such as Leaky ReLU. Since $\ell$ is a nonnegative loss function, we define  
\begin{align*}
	h(\btheta) \defeq \sqrt{\ell(f(\btheta))}.
\end{align*}
Analogously to \cref{def:naop}, for any given $\xx$ and $\btheta$, we can define
\begin{align*}
	&\hh^{\star}(\btheta) \defeq \int_{0}^{1} \frac{\partial}{\partial \btheta} h(t\btheta) \df t \\
    &= \int_{0}^{1} \frac{1}{2} \left(\frac{\ell^{\prime}(f(t\btheta))}{\sqrt{\ell(f(t\btheta))}}\right)\frac{\partial}{\partial \btheta} f(t\btheta) \df t \\
	&= \frac{1}{2} \left(\int_{0}^{1} t^{\alpha-1} \left(\frac{\ell^{\prime}(t^{\alpha}f(\btheta))}{\sqrt{\ell(t^{\alpha}f(\btheta))}}\right) \df t \right) \frac{\partial}{\partial \btheta} f(\btheta, \xx),
\end{align*}
where the last equality follows from Euler's homogeneous function theorem. Now, letting $ s = t^{\alpha}f(\btheta) $, we have $ \df s = \alpha t^{\alpha-1} f(\btheta) \df t $, which gives
\begin{align*}
	\hh^{\star}(\btheta) &= \frac{1}{2 \cdot \alpha f(\btheta)} \left(\int_{0}^{f(\btheta)} \frac{\ell^{\prime}(s)}{\sqrt{\ell(s)}} \df s \right) \frac{\partial}{\partial \btheta} f(\btheta) \\
	&= \left(\frac{\sqrt{\ell(f(\btheta))} - \sqrt{\ell(0)}}{\alpha f(\btheta)}  \right) \frac{\partial}{\partial \btheta} f(\btheta).
\end{align*}
Since $h(\btheta) = h(\zero) + \dotprod{\hh^{\star}_{i}(\btheta),\btheta}$, we can define
\begin{align*}
	\widehat{\hh}^{\star}(\btheta) = \begin{bmatrix}
		\hh^{\star}(\btheta) \\ h(\zero)
	\end{bmatrix},
\end{align*}
to get
\begin{align*}
	\sL(\btheta) &= \sum_{i=1}^{n} \ell(f_{i}(\btheta)) = \sum_{i=1}^{n} \left(h_{i}(\btheta)\right)^{2} \\
	&= \sum_{i=1}^{n} \left(\dotprod{\widehat{\hh}^{\star}_{i}(\btheta),\widehat{\btheta}}\right)^{2} 
	 = \vnorm{\widehat{\HH}^{\star}(\btheta) \widehat{\btheta}}^{2},
\end{align*}
where $\widehat{\HH}^{\star}(\btheta) $ is defined analogously to \cref{{def:dual_X}}. Now, similar to the case of nonlinear least-squares, importance  sampling according to leverage scores or norms of $\{\widehat{\hh}^{\star}_{i}(\btheta)\}$ give sampling approximations of the form \cref{eq:approx}.

\subsection{Construction of the $\varepsilon$-Net in \cref{sec:lower_bound}}
\label{sec:appendix:e_net} 
Let $\btheta^{\star}$ denote a solution to \cref{eq:loss}, and consider a compact ball with radius $R$, chosen large enough to contain $\btheta^{\star}$. Let $\sB^{\star}_{R}$ denote this ball. For any $\varepsilon \in (0,1)$, we pick a discrete subset $\sN_{\varepsilon} \subseteq \sB^{\star}_{R}$ such that, for every $\btheta \in \sB^{\star}_{R}$, there exists at least one $\btheta^{\prime} \in \sN_{\varepsilon}$ satisfying
\begin{align*}
    \|\btheta - \btheta^{\prime}\|_{2} \leq \varepsilon R.
\end{align*}
The construction of an $\varepsilon$-net reduces an uncountable continuous set to a finite covering set; see \cref{fig:epsilon_net}. For completeness, we provide some details here. The reader is encouraged to consult references such as \citet{woodruff2014sketching,vershynin2018high} for further discussions and details. 

To find an upper bound on the cardinality of the set $|\sN_{\varepsilon}|$, one can use standard volume and covering number arguments. Recall that the volume of $\sB^{\star}_{R}$ is %$\text{Vol}(\sB^{\star}_{R}) = {\pi^{p/2} R^{p}}/{\Gamma\left({p}/{2}+1\right)}$,
\begin{align*}
\text{Vol}(\sB^{\star}_{R}) = \frac{\pi^{p/2} R^{p}}{\Gamma\left(\frac{p}{2}+1\right)},
\end{align*}
where $\Gamma$ is Euler's gamma function. A bound on $|\sN_{\varepsilon}|$ can be obtained as the number of small balls with radius $\varepsilon R/2$ that cover the larger ball of radius $(1 + \frac{\varepsilon}{2})R$, which is given by the ratio of their respective volumes:
\begin{align*}
|\sN_{\varepsilon}| \geq \frac{{\left(1 + \frac{\varepsilon}{2}\right)}^p R^p}{{\left(\frac{\varepsilon R}{2}\right)}^p} = \left(1 + \frac{2}{\varepsilon}\right)^p \in \Theta\left(\frac{1}{\varepsilon^p}\right).
\end{align*}

% \begin{figure}[htbp]
%     \centering
%     \begin{tikzpicture}
%         % Draw large enclosing set (X)
%         \draw[thick] (0,0) circle (3);
%         \node at (3.2,1) {$\sB^{\star}_{R}$};

%         % Define epsilon-net points (blue dots) and their covering epsilon-balls
%         \foreach \x/\y in {0.6/1.8, -1.2/1.4, -1.4/-0.6, 1.0/-1.4, 2.0/-0.2, -0.2/0.5} {
%             \fill[blue] (\x,\y) circle (2pt);  % Blue net points
%             \draw[dashed] (\x,\y) circle (1);  % Epsilon-balls
%         }

%         % Define arbitrary points in X (black dots)
%         \foreach \x/\y in {0.8/1.6, -0.8/1.2, -1.0/-0.4, 0.8/-1.6, 1.8/-0.4, 0.5/0.5} {
%             \fill[black] (\x,\y) circle (1.5pt);
%         }

%         % Labels
%         \node[blue] at (-1.4,1.8) {$\sN_{\varepsilon}$};
%         % \node[black] at (-2.0,-1.5) {Covered points};

%     \end{tikzpicture}
%     \caption{Illustration of an $\varepsilon$-net covering $\sB^{\star}_{R}$. Blue dots represent elements of $\sN_{\varepsilon}$, and dashed circles indicate the $\varepsilon$-balls covering all points in $\sB^{\star}_{R}$.}
%     \label{fig:epsilon_net}
% \end{figure}

\begin{figure}[htbp]
    \centering
    \begin{tikzpicture}

        % Define radii
        \def\R{2.5}  % Radius of B^*_R
        \def\e{0.5}  % Epsilon
        \def\OuterRadius{(1 + \e/2) * \R}  % Outer radius (1+ε/2)R
        \def\InnerRadius{\R}  % Inner radius R
        \def\CoverRadius{(\e/2) * \R}  % Covering epsilon-ball radius εR/2

        % Draw outer ball (radius (1+ε/2)R)
        \draw[thick, green!60!teal, dashed] (0,0) circle (\OuterRadius);
        % \node[green] at (3.7,1.5) {$(1+\varepsilon/2) R$};

        % Draw inner ball (B^*_R with radius R)
        \draw[thick, red] (0,0) circle (\InnerRadius);
        \node[red] at (-1,-2) {$\sB^{\star}_{R}$};

        % Draw epsilon-net covering circles (εR/2 balls)
        \foreach \x/\y in {0.8/1.4, -1.0/1.0, -1.2/-0.6, 1.2/-1.0, 1.8/-0.2, -0.2/2.2} {
            \draw[dashed,blue] (\x,\y) circle (\CoverRadius);
            \fill[blue] (\x,\y) circle (2pt);  % Blue net points
        }

        % Highlight one covering ball with label
        \draw[dashed,blue] (1.2,-1.0) circle (\CoverRadius);
        % \node at (2.0,-1.0) {$\varepsilon R/2$};

        % Draw sample points inside B^*_R
        \foreach \x/\y in {0.6/1.2, -0.8/0.8, -1.0/-0.3, 0.7/-1.2, 1.5/-0.3, 0.3/2.3} {
            \fill[black] (\x,\y) circle (1.5pt);
        }

        % Draw radius lines
        \draw[->,red] (0,0) -- (0,-\R) node[midway, left] {$R$}; % Inner radius
        \draw[->,green!60!teal,dashed] (0,0) -- ({\OuterRadius-0.23},1.2) node[pos = 1.25, above] {$(1+\varepsilon/2) R$}; % Outer radius
        % \draw[->,blue] (1.2,-1.0) -- ({1.2+\CoverRadius},-1.0) node[midway, below] {$\varepsilon R/2$}; % Small covering ball radius
        \draw[->,blue] (1.2,-1.0) -- ({1.2+\CoverRadius},-1.0) node[midway, below, xshift=-5pt] {\scriptsize $\varepsilon R/2$}; % Small covering ball radius

    \end{tikzpicture}
    \caption{Illustration of an $\varepsilon$-net covering $\sB^{\star}_{R}$. The larger circle has radius $(1+\varepsilon/2)R$, while $\sB^{\star}_{R}$ has radius $R$. Blue dots denote the $\varepsilon$-net, and dashed circles of radius $\varepsilon R/2$ cover all points.}
    \label{fig:epsilon_net}
\end{figure}
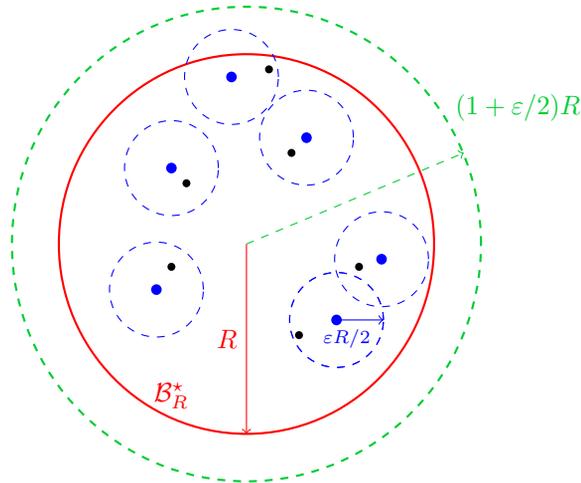

\subsection{Further Details for \cref{sec:experiments}}
\label{sec:appendix:exp_details}

\paragraph{Classification Experiments.}  % \paragraph{Qualitative Experiments.}
To carry out the experiment in an under-parameterized setting, the dataset was balanced, and the images were resized to $10 \times 10$ dimensions with a grayscale background. A fully connected MLP was trained with a linear 100-input layer connected to a hidden layer with 10 neurons and a ReLU activation unit, followed by a sigmoid output transformation function. The optimal weights were computed using PyTorch, with the Adam optimizer for 1000-5000 epochs (depending on the dataset) and BCEWithLogitsLoss(). These optimal weights were then used to calculate the nonlinear leverage scores in \cref{def:lev_score} for each data point.

\paragraph{Regression Experiments.} % \paragraph{Quantitative Experiments.}
Here, we train a single-index model using a bounded output transformation function described in \cref{ex:glm_estimate}, with $c_1 = 1$ and $c_2 = 2$. After training for 30,000 epochs, we obtain the optimal parameters and compute the nonlinear importance scores for each data point, as well as the classical linear leverage/row-norm scores using the original data matrix. We then sample training instances using a stratified strategy, proportional to these scores, and evaluate how well each sampling strategy preserves the training performance. Specifically, we solve the subproblem outlined in \cref{eq:bthetasS}. The experiment is repeated multiple times, and we measure the median log relative error between the MSE of the parameter $\bthetasS$ on the full dataset and the optimal MSE obtained by training on the full dataset, i.e., $\log (\sL(\bthetasS) - \sL(\bthetas))/\sL(\bthetas)$ as a function of the number of samples selected (the number of samples were chosen depending on the total size of the dataset). 
%In the case of classification, the parameters trained on the sampled daset was evaluated on the full dataset to obtain the accuracies.

\subsection{Quantitative performance for Qualitative datasets}
\label{sec:appendix:qualitative_numerical_exp}
In this section, we show that the datasets presented in \cref{fig:all_datasets} serve not only as qualitative illustrations but also as quantitative evidence that sampling based on nonlinear scores helps reduce the training mean squared error (MSE) loss more effectively than alternatives. This reduction is more clearly observed when the loss is plotted on a logarithmic scale. We support this result with experiments on two datasets, SVHN (digits 1 vs. 0) and QD, comparing nonlinear leverage score sampling to linear leverage score sampling and uniform sampling.

\begin{figure}[H]
    \centering
    
    \subfigure[SVHN (1 vs 0) Dataset]{
        \includegraphics[width=.4\linewidth]{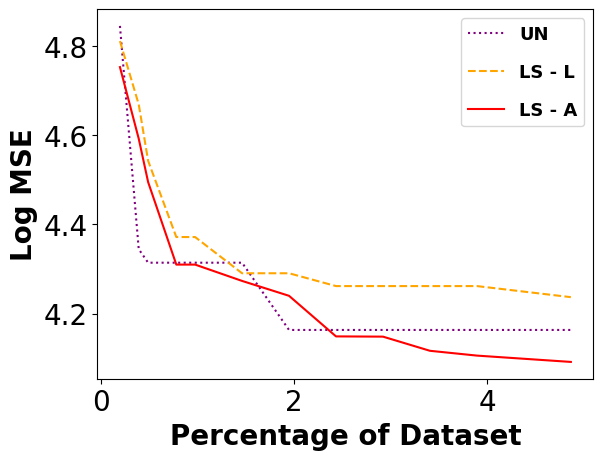}
    } \hspace{4mm}
    \subfigure[Quick Draw dataset]{
        \includegraphics[width=.4\linewidth]{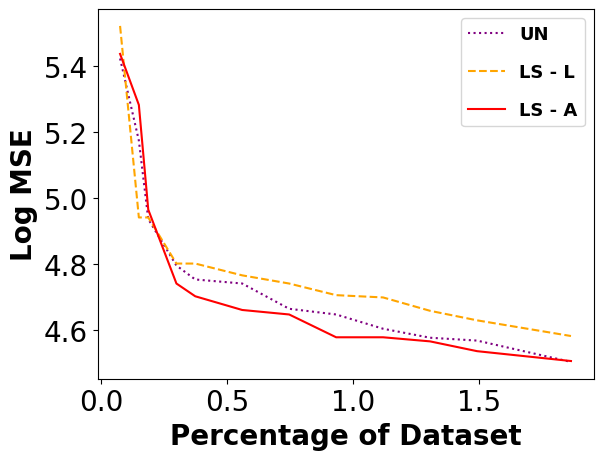}
    }
    \caption{
    Illustration of quantitative results on datasets used in \cref{fig:all_datasets}, SVHN \& QD.  The Y-axis shows Log({\text{MSE}}) on training data against sample size (in percentage of total data). ``LS' and ``UN'' denote Leverage Scores and Uniform Sampling schemas, respectively, with ``L'' and ``A'' indicating linear and adjoint-based nonlinear variants.
    \label{fig:MSE_qualitative_datasets}}
\end{figure}

\subsection{Additional images from \cref{sec:experiments}}
\label{sec:appendix:exp_results}
To facilitate further comparisons, we provide an additional 50 images with the highest and lowest leverage scores for each dataset from the classification experiments in \cref{sec:experiments}.
\begin{figure*}[!ht]
    \centering

    %----------------- Row 1: \texttt{SVHN} (1 vs 0) and (1 vs 7) -----------------%
    \subfigure[\texttt{SVHN} High (1 vs 0) Nonlinear Leverage Scores]{
        \includegraphics[width=0.48\linewidth]{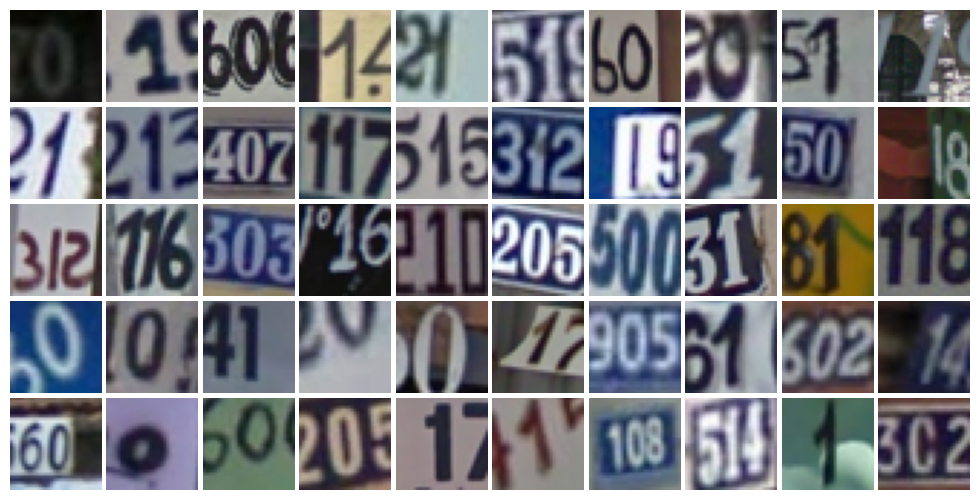}
    }
    \subfigure[\texttt{SVHN} Low   (1 vs 0) Nonlinear Leverage Scores]{
        \includegraphics[width=0.48\linewidth]{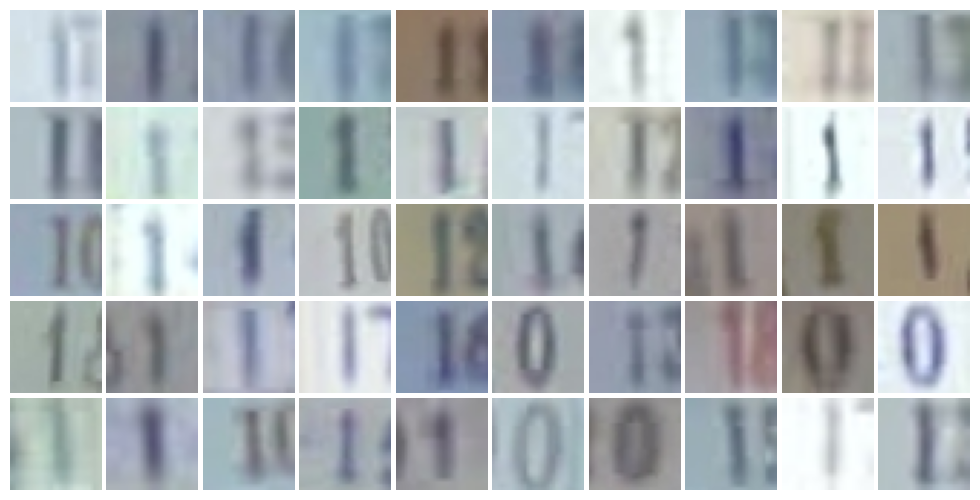}
    }
    \subfigure[\texttt{SVHN} High (1 vs 0) Linear Leverage Scores]{
        \includegraphics[width=0.48\linewidth]{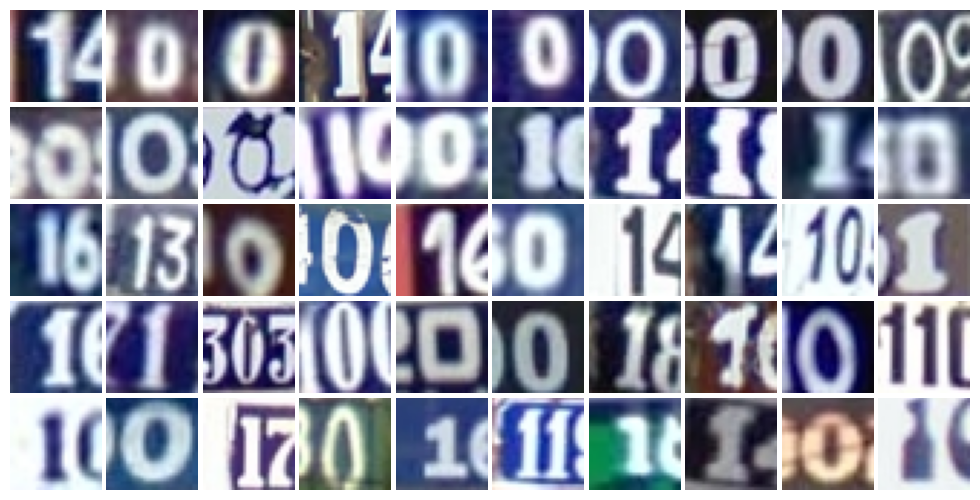}
    }
    \subfigure[\texttt{SVHN} Low   (1 vs 0) Linear Leverage Scores]{
        \includegraphics[width=0.48\linewidth]{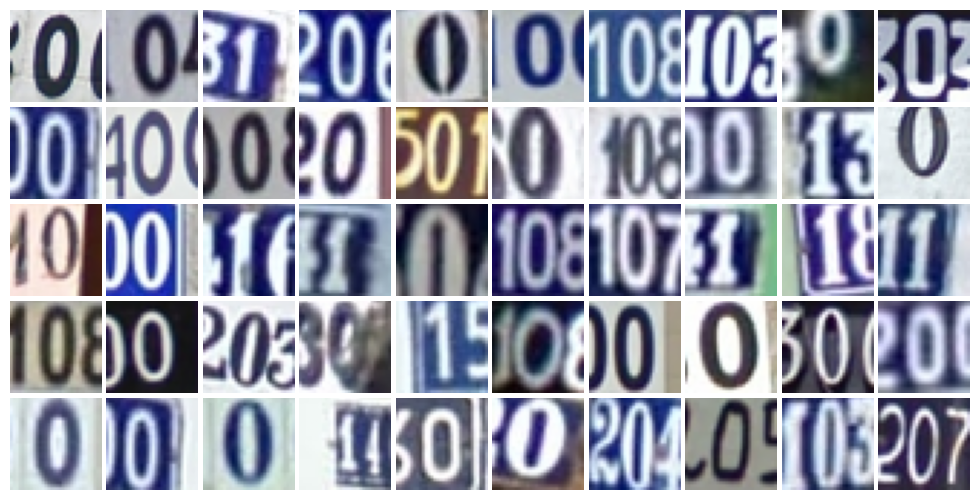}
    }
\end{figure*}
\begin{figure*}
     \subfigure[\texttt{SVHN} High (1 vs 7) Nonlinear Leverage Scores]{
        \includegraphics[width=0.48\linewidth]{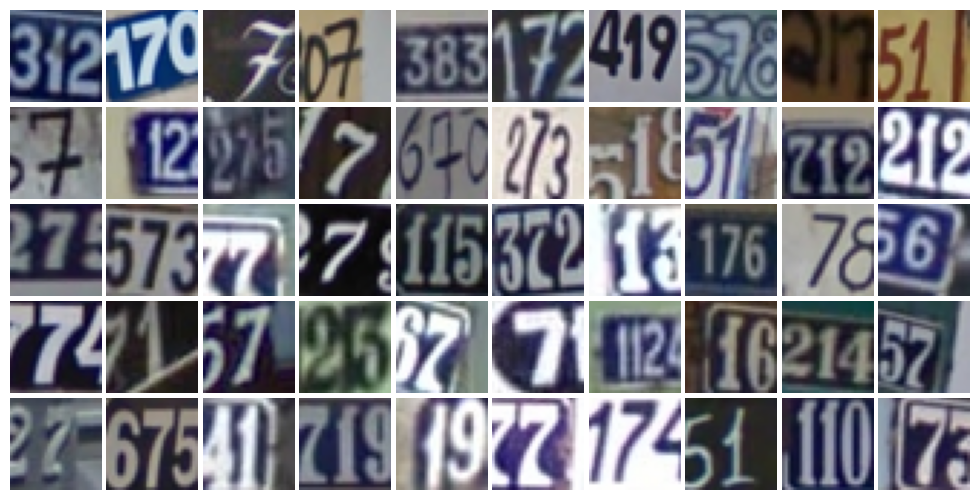}
    }
    \subfigure[\texttt{SVHN} Low   (1 vs 7) Nonlinear Leverage Scores]{
        \includegraphics[width=0.48\linewidth]{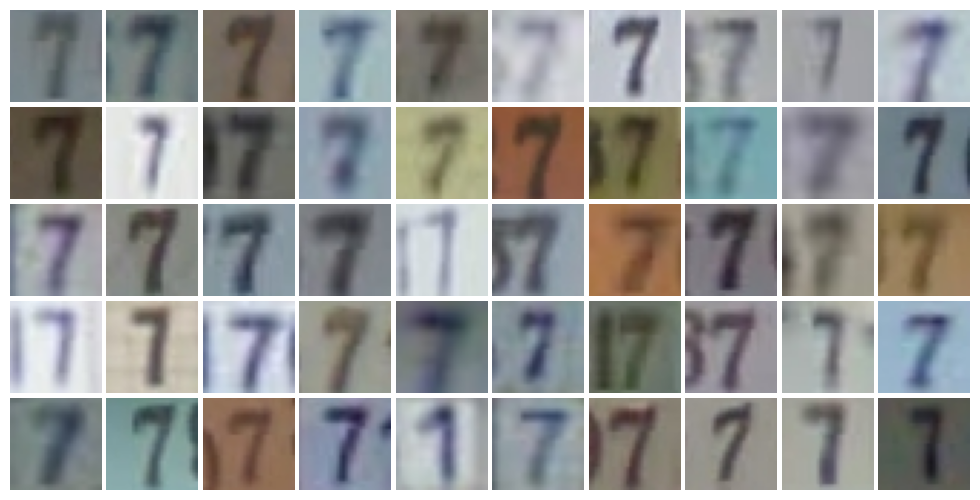}
    }
    \subfigure[\texttt{SVHN} High (1 vs 7) Linear Leverage Scores]{
        \includegraphics[width=0.48\linewidth]{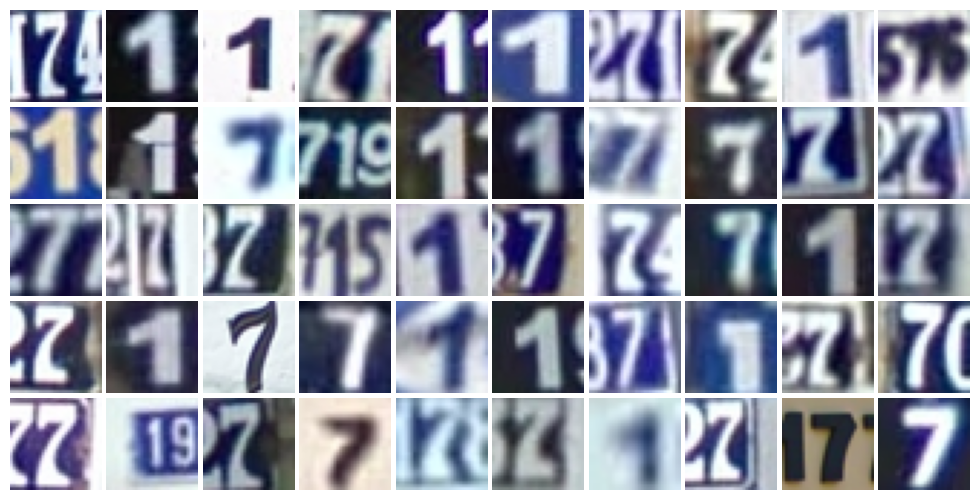}
    }
    \subfigure[\texttt{SVHN} Low   (1 vs 7) Linear Leverage Scores]{
        \includegraphics[width=0.48\linewidth]{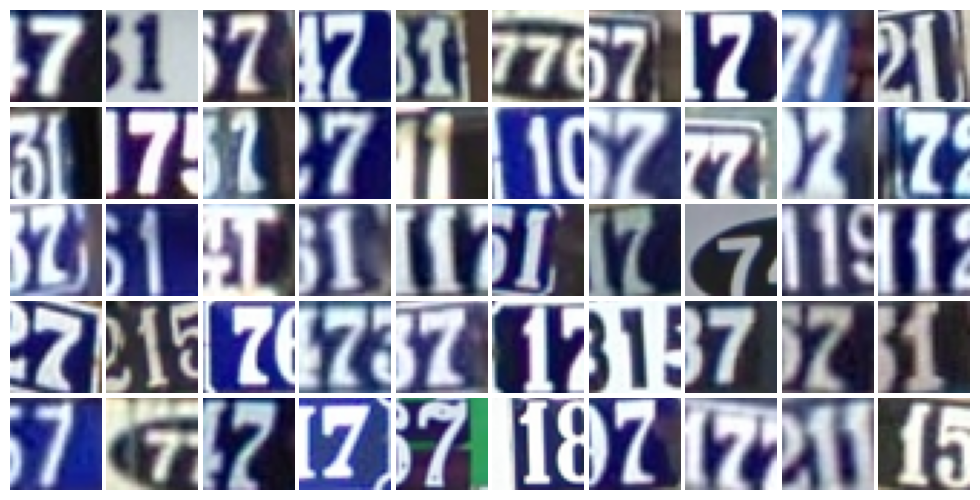}
    }
    \caption{Top 50 images with the highest and lowest nonlinear leverage scores in each grouping for the \texttt{SVHN} dataset.}
    \label{fig:all_datasets_50_SVHN}
\end{figure*}
    % \vspace{1em}
\begin{figure*}
     \subfigure[\texttt{NOTMNIST} High (A vs B) Nonlinear Leverage Scores]{
        \includegraphics[width=0.48\linewidth]{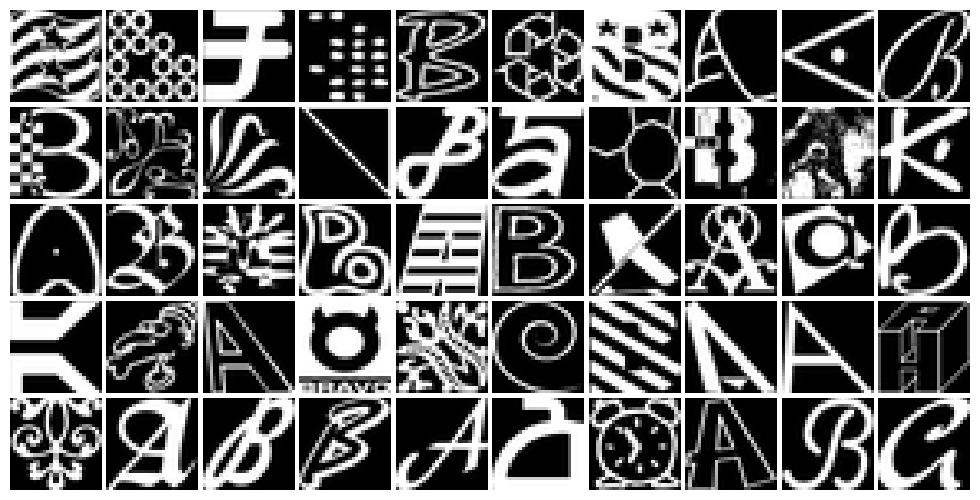}
    }
    \subfigure[\texttt{NOTMNIST} Low   (A vs B) Nonlinear Leverage Scores]{
        \includegraphics[width=0.48\linewidth]{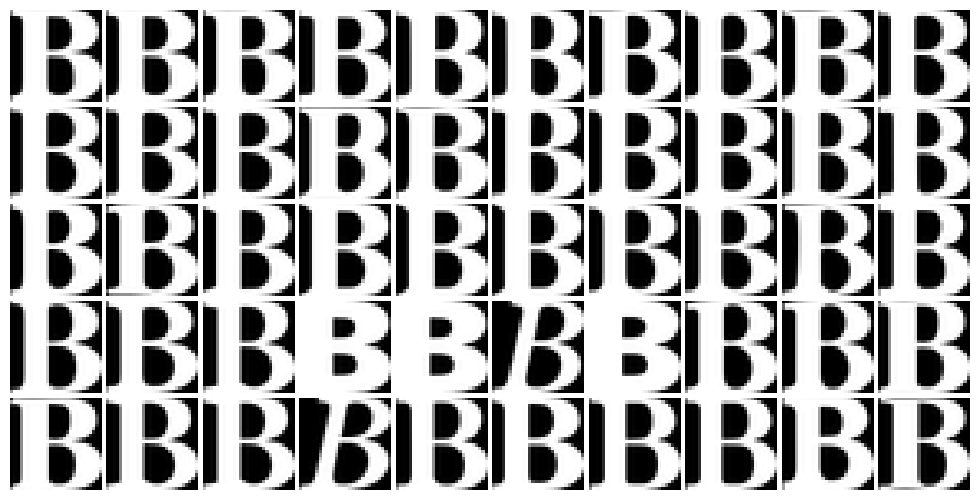}
    }
    \subfigure[\texttt{NOTMNIST} High (A vs B) Linear Leverage Scores]{
        \includegraphics[width=0.48\linewidth]{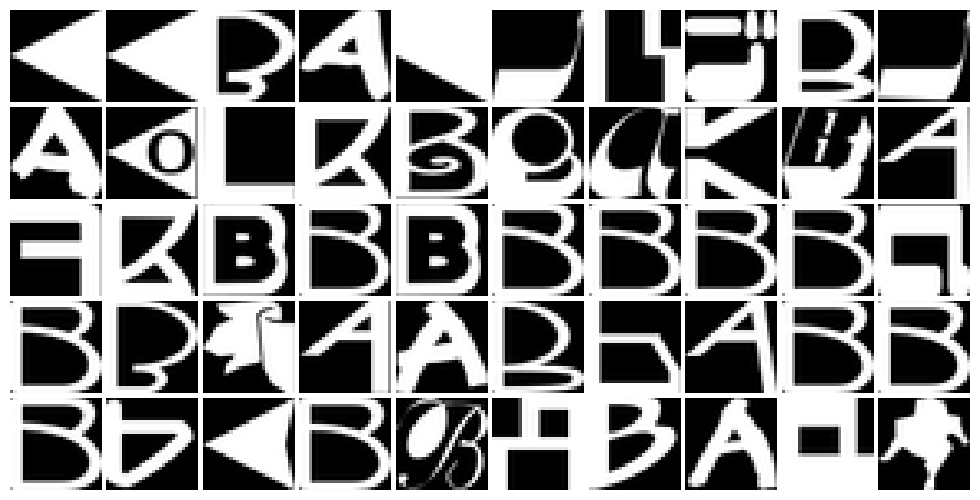}
    }
    \subfigure[\texttt{NOTMNIST} Low  (A vs B) Linear Leverage Scores]{
        \includegraphics[width=0.48\linewidth]{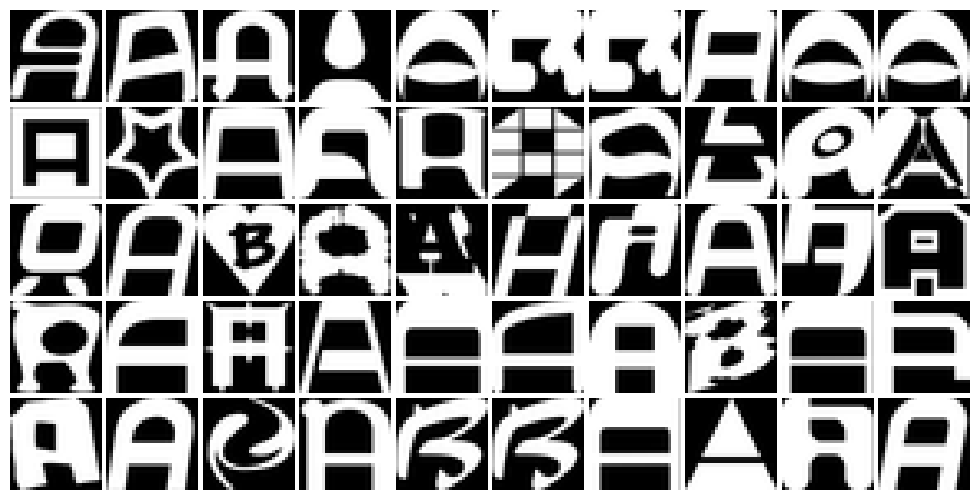}
    }
\end{figure*}
\begin{figure*}
     \subfigure[\texttt{NOTMNIST} High (B vs D) Nonlinear Leverage Scores]{
        \includegraphics[width=0.48\linewidth]{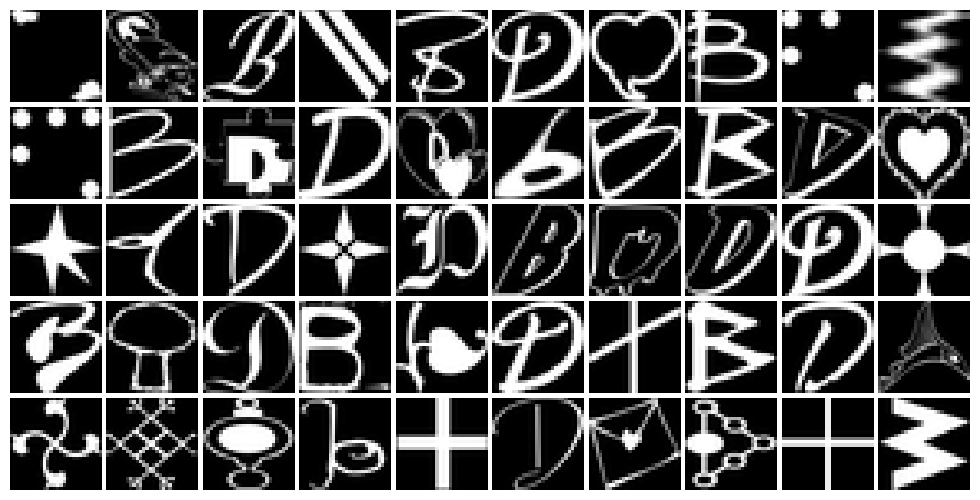}
    }
    \subfigure[\texttt{NOTMNIST} Low   (B vs D) Nonlinear Leverage Scores]{
        \includegraphics[width=0.48\linewidth]{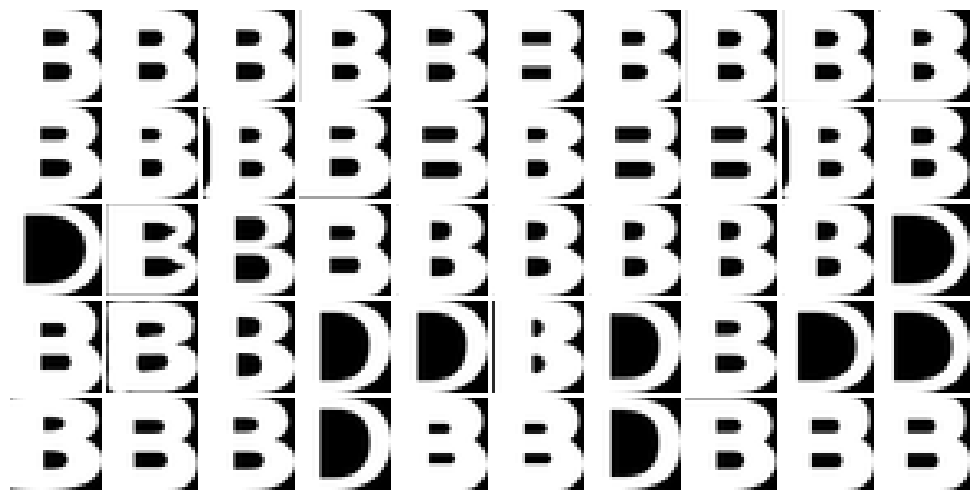}
    }
    \subfigure[\texttt{NOTMNIST} High (B vs D) Linear Leverage Scores]{
        \includegraphics[width=0.48\linewidth]{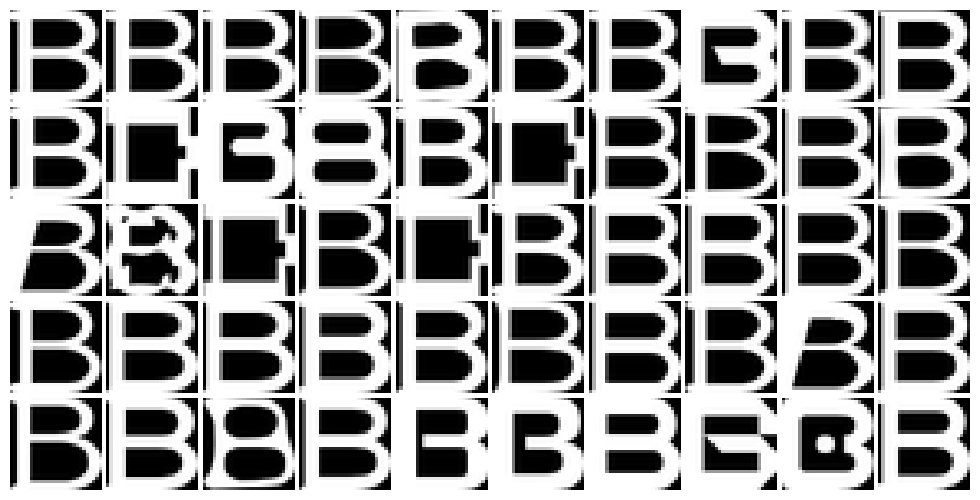}
    }
    \subfigure[\texttt{NOTMNIST} Low  (B vs D) Linear Leverage Scores]{
        \includegraphics[width=0.48\linewidth]{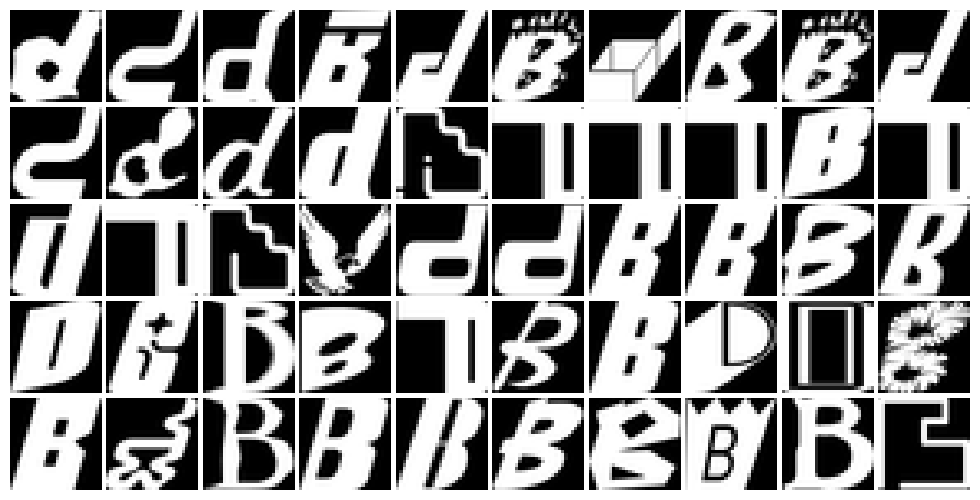}
    }
    \caption{Top 50 images with the highest and lowest nonlinear leverage scores in each grouping for the \texttt{NOTMNIST} dataset.}
    \label{fig:all_datasets_50_NOTMNIST}
\end{figure*}
\begin{figure*}
     \subfigure[\texttt{QD} High Nonlinear Leverage Scores]{
        \includegraphics[width=0.48\linewidth]{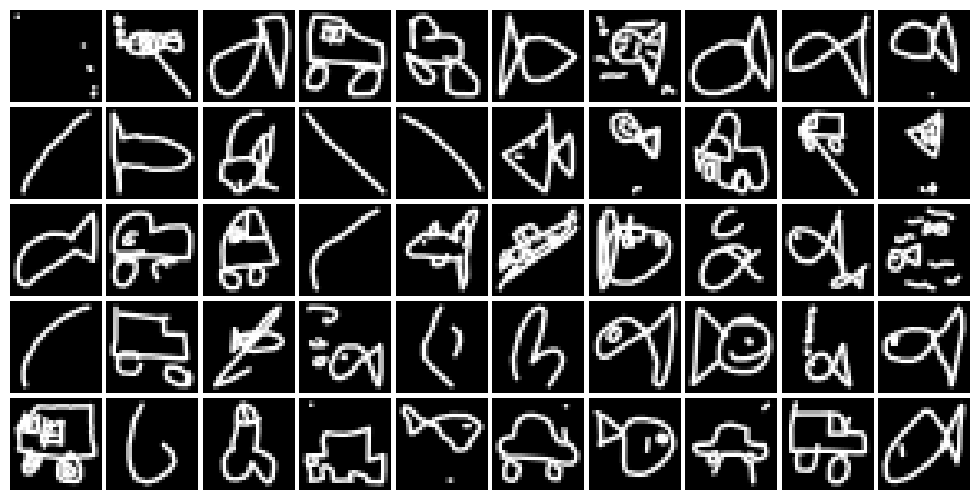}
    }
    \subfigure[\texttt{QD} Low   Nonlinear Leverage Scores]{
        \includegraphics[width=0.48\linewidth]{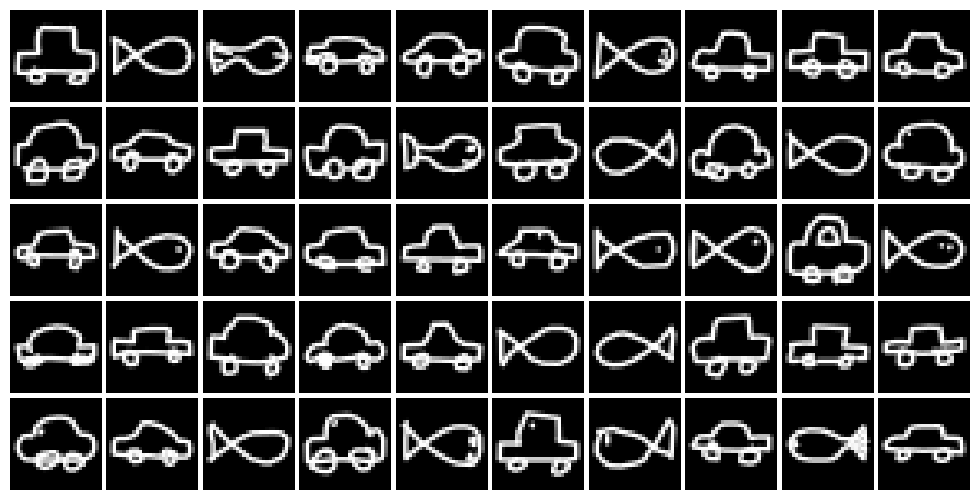}
    }
    \subfigure[\texttt{QD} High Linear Leverage Scores]{
        \includegraphics[width=0.48\linewidth]{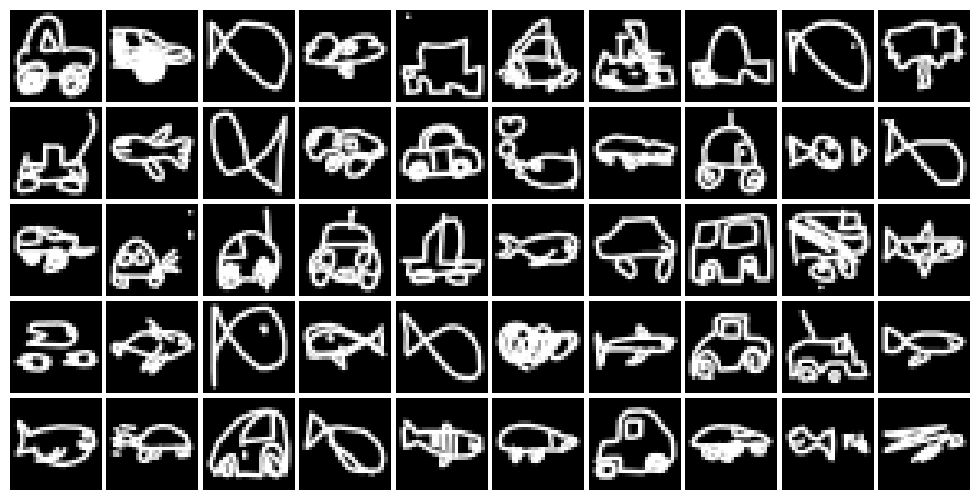}
    }
    \subfigure[\texttt{QD} Low  Linear Leverage Scores]{
        \includegraphics[width=0.48\linewidth]{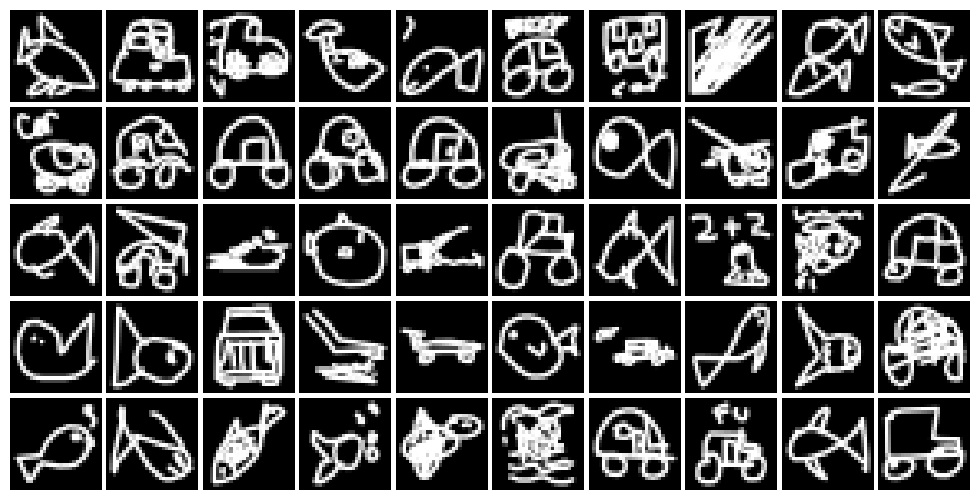}
    }
    \caption{Top 50 images with the highest and lowest nonlinear leverage scores for the \texttt{QD} dataset.}
    \label{fig:all_datasets_50_QD}
\end{figure*}
\begin{figure*}
    \subfigure[\texttt{FER - 75\%} High Nonlinear Leverage Scores]{
        \includegraphics[width=0.48\linewidth]{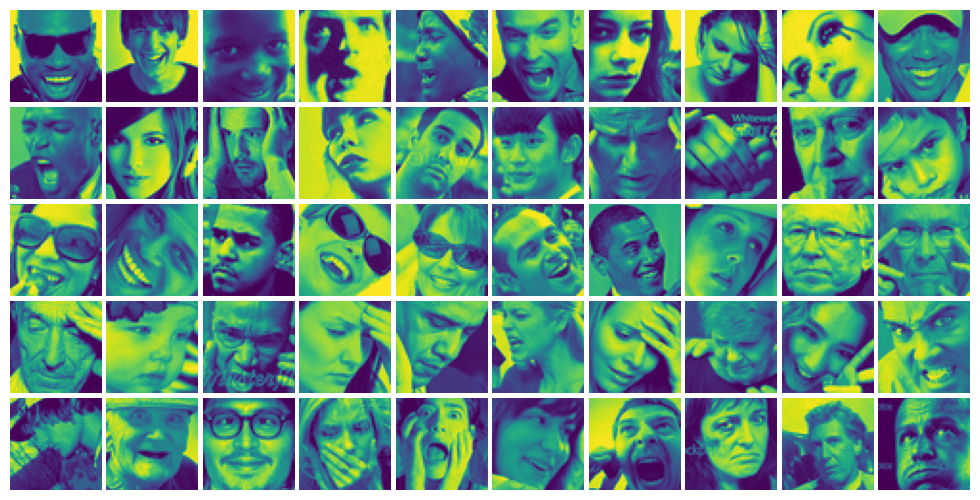}
    }
    \subfigure[\texttt{FER - 75\%} Low  Nonlinear Leverage Scores]{
        \includegraphics[width=0.48\linewidth]{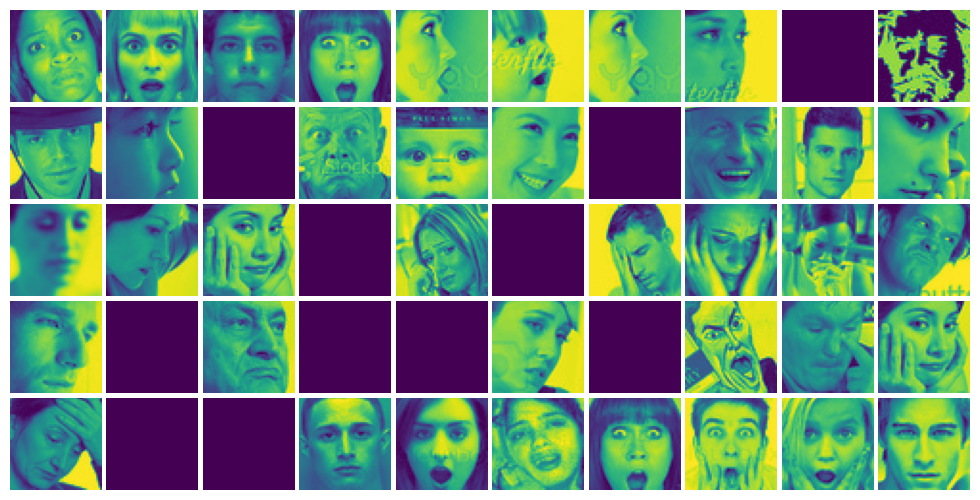}
    }
    \subfigure[\texttt{FER - 65\%} High Nonlinear Leverage Scores]{
        \includegraphics[width=0.48\linewidth]{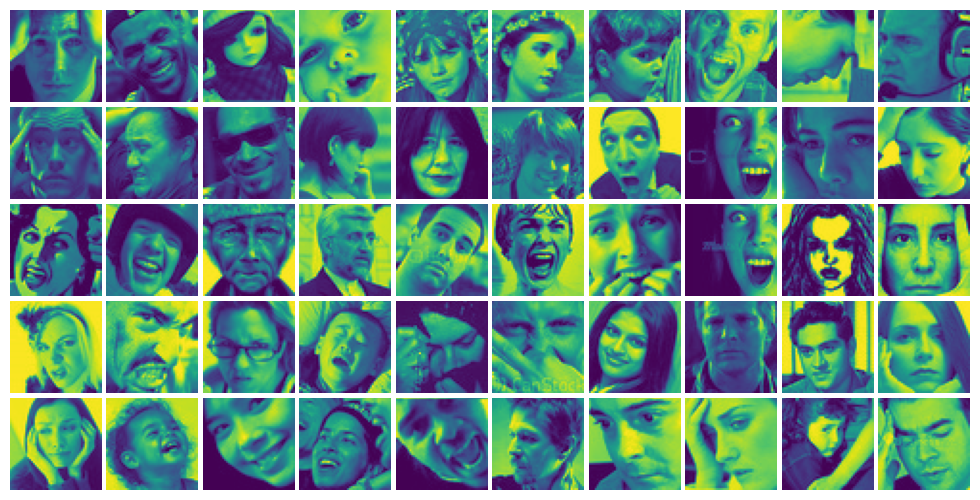}
    }
    \subfigure[\texttt{FER - 65\%} Low  Nonlinear Leverage Scores]{
        \includegraphics[width=0.48\linewidth]{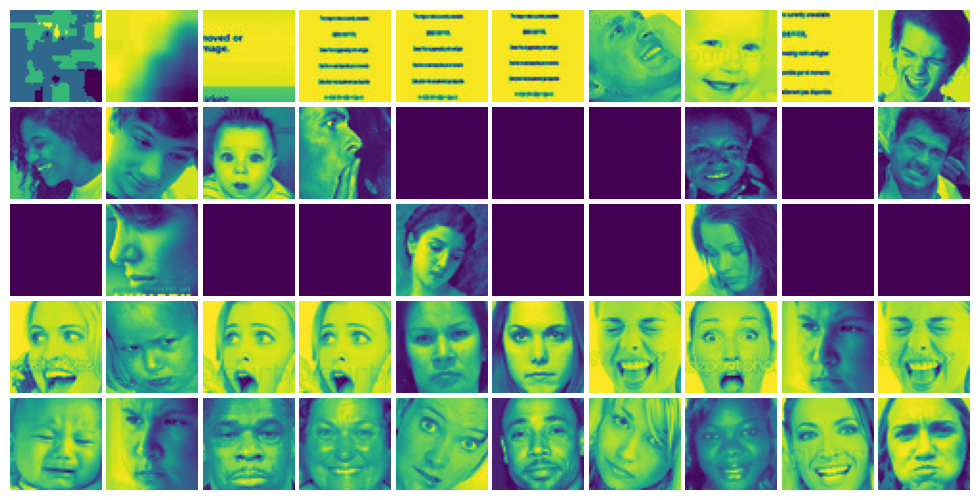}
    }
    \centering
    \subfigure[\texttt{FER - 55\%} Nonlinear Leverage Scores.]{
        \includegraphics[width=0.25\linewidth]{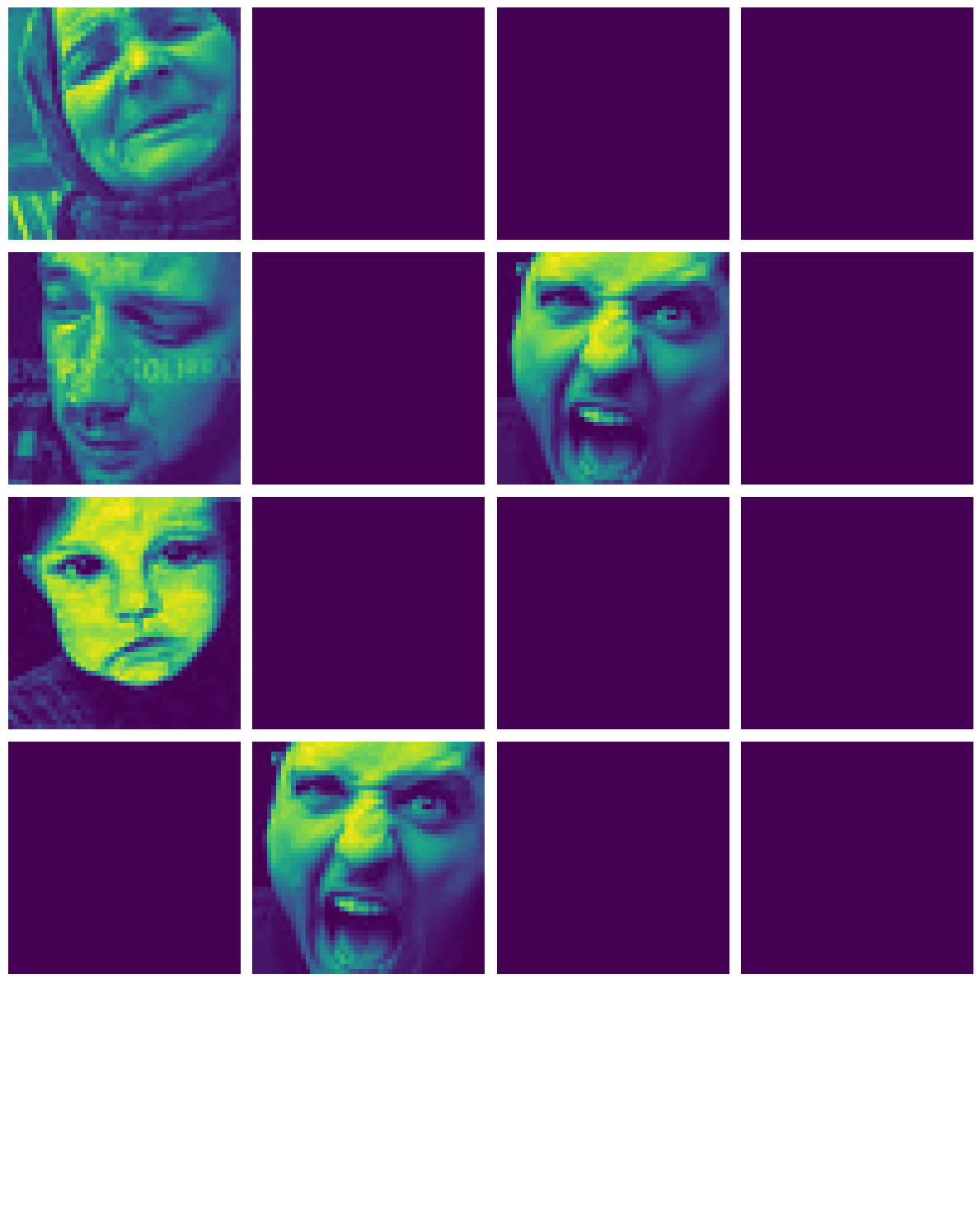}
    }

    \subfigure[\texttt{FER} High Linear Leverage Scores]{
        \includegraphics[width=0.47\linewidth]{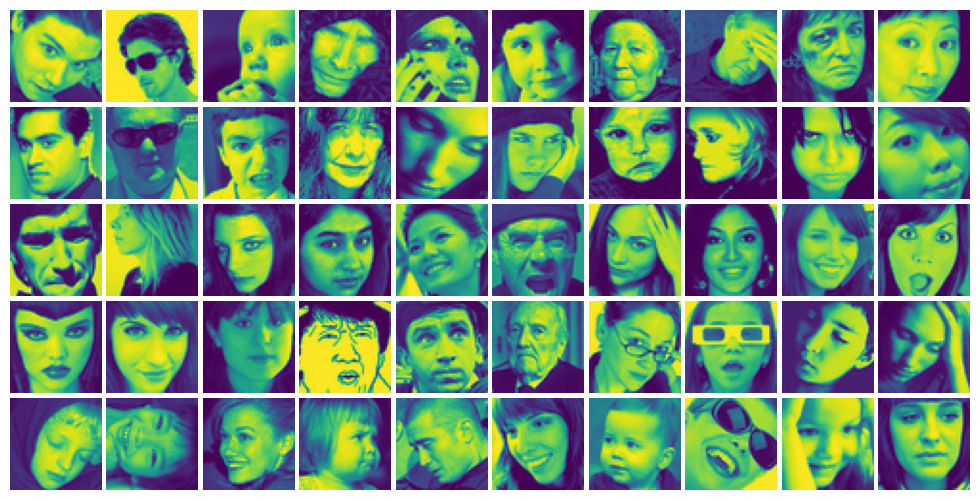}
    }
    \subfigure[\texttt{FER} Low  Linear Leverage Scores]{
        \includegraphics[width=0.47\linewidth]{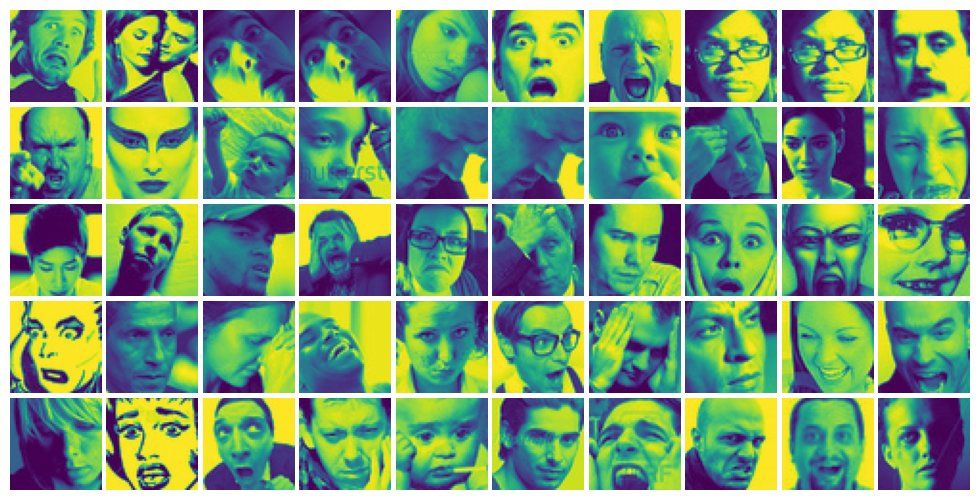}
    }
    \vspace{-0.2cm}
    \caption{Top 50 (trained and linear settings) and 16 (under-trained setting) images with the highest and lowest nonlinear leverage scores for valence expression on the \texttt{FER} dataset. In (e) only 16 images had non-zero scores at initialization.}
    \label{fig:all_datasets_50_FER}
\end{figure*}

\end{document}